\newtheorem{theorem}{Theorem}[section]
\newtheorem{lemma}[theorem]{Lemma}
\newtheorem{proposition}{Proposition}[section]
\newtheorem{assumption}[theorem]{Assumption}
\newtheorem{remark}[theorem]{Remark}
\definecolor{Lightgray}{rgb}{0.75,0.75,0.75}
\renewenvironment{proof}[1][Proof]{\noindent\textbf{#1.} }{\hfill\qed\vspace{\baselineskip}}
\newcommand{\cS}{\mathcal{S}}
\newcommand{\A}{\mathcal{A}}
\newcommand{\Ptrans}{\mathsf{P}}        
\theoremstyle{plain}
\title{Implicit Q-Learning and SARSA:\\ Liberating Policy Control from Step-Size Calibration}
\author{Hwanwoo Kim{$^\dagger$}, Eric Laber}
\affil{Department of Statistical Science, Duke University}
\date{\today}
\begin{document}
\onehalfspacing

\maketitle
\def\thefootnote{$\dagger$}\footnotetext{hwanwoo.kim@duke.edu}
\begin{abstract}
Q-learning and SARSA are foundational reinforcement learning algorithms whose practical success depends critically on step-size calibration. Step-sizes that are too large can cause numerical instability, while step-sizes that are too small can lead to slow progress. We propose implicit variants of Q-learning and SARSA that reformulate their iterative updates as fixed-point equations. This yields an adaptive step-size adjustment that scales inversely with feature norms, providing automatic regularization without manual tuning. Our non-asymptotic analyses demonstrate that implicit methods maintain stability over significantly broader step-size ranges. Under favorable conditions, it permits arbitrarily large step-sizes while achieving comparable convergence rates. Empirical validation across benchmark environments spanning discrete and continuous state spaces shows that implicit Q-learning and SARSA exhibit substantially reduced sensitivity to step-size selection, achieving stable performance with step-sizes that would cause standard methods to fail.
\end{abstract}

\section{Introduction}
Reinforcement learning (RL) has emerged as a powerful framework for sequential decision-making, enabling agents to learn optimal behaviors through interaction with their environment \citep{sutton1998reinforcement}. A key component of many successful RL algorithms is the action-value function, which quantifies how much total reward an agent can expect to accumulate after taking a particular action from a given state. Among the most fundamental and widely studied action-value based approaches in RL are 
Q-learning \citep{watkins1989learning, watkins1992q} and State-Action-Reward-State-Action (SARSA) \citep{rummery1994line}, which have demonstrated remarkable success across diverse applications, including game playing \citep{mnih2015human}, robotics \citep{kober2013reinforcement}, and autonomous systems 
\citep{sallab2017deep}. A persistent challenge with such methods is their sensitivity to step-size selection, which can critically affect numerical stability and convergence. While tabular methods with decreasing step-sizes under a suitable behavioral policy enjoy well-established convergence properties \citep{watkins1992q, jaakkola1993convergence, tsitsiklis1994asynchronous,singh2000convergence}, the introduction of 
function approximation fundamentally alters the convergence behavior and amplifies step-size sensitivity. While function approximation introduces additional complications, i.e., Q-learning and SARSA with linear function approximation can diverge or fail to converge \citep{baird1995residual, gordon1996chattering}, the practical challenge of step-size selection remains critical. Even when convergence is theoretically guaranteed, existing results require stringent conditions on step-sizes 
\citep{zoufinite, zou2019finite, zhang2024constant}. Overly large step-sizes induce oscillations and potential 
divergence, while conservative step-sizes result in prohibitively slow learning. Such sensitivity makes 
it difficult to identify a step-size schedule that achieves both stability and fast convergence \citep{dabney2012adaptive, sutton1998reinforcement, szepesvari2022algorithms, chen2019performance, zou2019finite}. We seek a principled approach to mitigate step-size sensitivity without compromising computational efficiency or theoretical guarantees.

\subsection{Related Works}

\paragraph{Convergence Analysis.} For tabular settings, asymptotic convergence of Q-learning was established under appropriate step-size conditions \citep{watkins1992q, jaakkola1993convergence, tsitsiklis1994asynchronous}, and convergence of tabular SARSA was similarly established under suitable assumptions on the behavioral policy \citep{singh2000convergence}. However, with linear function approximation, the convergence properties become substantially more complex. \citet{baird1995residual} provided an illustrative counter-example demonstrating the divergence of Q-learning with linear function approximation. Recent work has shown that linear Q-learning with an $\epsilon$-softmax behavioral policy and adaptive temperature converges to a bounded set \citep{meyn2024projected}. For SARSA with linear function approximation, convergence depends critically on the policy improvement operator. \citet{gordon1996chattering, gordon2000reinforcement} demonstrated that SARSA with an $\epsilon$-greedy policy can exhibit chattering and fail to converge. \citet{melo2008analysis} showed that linear SARSA converges to a unique fixed point when the policy improvement operator is Lipschitz continuous with a sufficiently small constant. \citet{zou2019finite} later established finite-time error bounds for projected linear SARSA through novel techniques for handling time-inhomogeneous Markov chains.

\paragraph{Implicit Updates.} Implicit update schemes have emerged as a powerful approach to enhance the stability of stochastic approximation algorithms. Works on implicit stochastic gradient descent (SGD) \citep{toulis2015implicit, toulis2015scalable, toulis2017asymptotic, chee2023plus} demonstrated that evaluating a gradient at a future iterate rather than the current one provides adaptive shrinkage. In the context of temporal difference (TD) learning, \citet{kim2025stabilizing} formalized implicit TD methods and analyzed their asymptotic behavior. They also provided finite-time analyses demonstrating robustness to step-size selection and extended this framework to two-timescale methods by analyzing an implicit variant of TD learning with gradient correction. Furthermore, \citet{kim2025implicit} proposed an average-reward implicit TD method with finite-time error guarantees under relaxed step-size constraints. These advances demonstrate that implicit updates offer a principled mechanism for step-size adaptation, motivating their application to Q-learning and SARSA.

\subsection{Contributions}
\noindent Our main contributions are as follows.
\begin{itemize}
    \item \textbf{Implicit Q-learning and SARSA algorithms:} We propose implicit variants of Q-learning and SARSA that formulate algorithm updates as fixed-point equations. These admit closed-form iterative updates with adaptive step-sizes that scale inversely with feature norms, providing automatic step-size adjustment.

    \item \textbf{Finite-time analysis of Q-learning and implicit Q-learning:} We establish non-asymptotic error bounds for projected linear Q-learning and its implicit variant under Markovian data. We consider both constant and decreasing step-sizes of the form $\beta_t = \beta_0/t^s$ with $s \in (0,1)$. Our analysis reveals a sharp contrast between the two methods. Standard Q-learning exhibits critical sensitivity to step-size selection: large initial step-sizes cause instability and require careful calibration. In contrast, implicit Q-learning demonstrates remarkable robustness. Under suitable conditions, it admits virtually any step-size and achieves stable convergence without meticulous tuning.
    


    \item \textbf{Finite-time analysis of SARSA and implicit SARSA:} We establish non-asymptotic error bounds for projected linear SARSA with decreasing step-sizes $\beta_t = \beta_0/t^s$, $s \in (0,1)$, and for its implicit variant with both constant and decreasing step-sizes, under time-varying Lipschitz behavioral policies. Compared to \citet{zou2019finite}, our bounds for standard SARSA improve by a logarithmic factor. More significantly, implicit SARSA exhibits remarkable step-size robustness: under suitable conditions, it admits virtually any step-size, including constant step-sizes that cause standard SARSA to diverge, thereby eliminating the meticulous calibration required by linear SARSA.
    
    \item \textbf{Empirical validation:} We demonstrate the effectiveness of implicit Q-learning and SARSA across diverse benchmark environments spanning discrete and continuous state spaces. Empirical results confirm that implicit methods maintain stable performance across step-size ranges where standard methods fail, validating our theoretical guarantees.
\end{itemize}

\section{Background}\label{sec:background}
\paragraph{Markov Decision Processes.}
Consider a discounted MDP $(\cS,\A,\Ptrans,r,\gamma)$ composed of a continuous state space $\cS\subset\mathbb{R}^n$, a finite action space $\A$, a time-homogeneous transition kernel $\Ptrans: \cS \times \cS \times \A \to [0,1]$, a bounded reward $r:\cS\times\A\to[0, b]$ for some $b > 0$ and a discount factor $\gamma\in(0,1)$. Suppose we are given a stationary policy $\pi$, i.e., policy does not change over time. A policy $\pi$ is deterministic if it is a mapping from the state space $\cS$ to the action space $\A$, and stochastic if it is a mapping from the state space $\cS$ to the simplex $\Delta(\A)$, the space of probability distributions over $\A$. At each time step $t \ge 0$, an agent is at state $\boldsymbol{S}_t^{\pi}$ and takes an action $A_t^{\pi}$ according to $\pi$. The agent then receives a reward $R_t^{\pi} := r(\boldsymbol{S}_t^{\pi},A_t^{\pi})$ and transitions to the next state $\boldsymbol{S}_{t+1}^{\pi}$ with a probability $\Ptrans(\boldsymbol{S}_t^{\pi}, \boldsymbol{S}_{t+1}^{\pi}, A_t^{\pi})$.

Given a policy $\pi$, the corresponding value function is defined as 
\[
V^{\pi}(\boldsymbol{s})=\mathbb{E}^{\pi}\left(\sum_{t=0}^\infty\gamma^t R_t^{\pi}~\middle|~\boldsymbol{S}_0^{\pi}=\boldsymbol{s}\right)
\]
and the corresponding action-value function is defined as
\[
Q^{\pi}(\boldsymbol{s},a)=\mathbb{E}^{\pi}\left(\sum_{t=0}^\infty\gamma^t R_t^{\pi}~\middle|~\boldsymbol{S}_0^{\pi}=\boldsymbol{s}, A^{\pi}_0 = a\right) = r(\boldsymbol{s},a)+\gamma\int_{\cS} V^{\pi}(\boldsymbol{y})\Ptrans(\mathrm{d}\boldsymbol{y}\mid \boldsymbol{s},a).
\]
Note that the expectation $\mathbb{E}^{\pi}$ is over trajectories $\{\boldsymbol{S}_t^{\pi}, A_t^{\pi}, R_t^{\pi}\}_{t \ge 0}$ in which $\pi$ is followed at each time step. The optimal value function is defined as $V^{\star}(\boldsymbol{s}) := \sup_{\pi} V^\pi(\boldsymbol{s})$ for all $\boldsymbol{s} \in \cS$ and the optimal action-value function is defined as $Q^*(\boldsymbol{s},a) :=\sup_{\pi}Q^{\pi}(\boldsymbol{s},a)$ for all $(\boldsymbol{s}, a) \in \cS \times \A$. An optimal (deterministic) policy $\pi^{\star}$ can be obtained from the optimal action-value function $Q^{\star}$ by selecting $ \pi^{\star}(\boldsymbol{s}) \in \arg\max_{a \in \A} Q^{\star}(\boldsymbol{s}, a)$ \citep{puterman2014markov}. This fundamental relationship has served as a backbone of model-free policy learning \citep{sutton1998reinforcement, clifton2020q}.

\paragraph{Linear Function Approximation.}
When the state space $\cS$ and the action space $\A$ are finite and relatively low-dimensional, storing the action-value function $Q^\pi$ in tabular form for all possible state-action pairs is manageable. However, when either space becomes high-dimensional or continuous, this tabular approach is no longer feasible. A computationally tractable alternative is to approximate the action-value function using a basis function representation. To this end, we introduce feature basis functions $\{\phi_i: \cS\times\A\to\mathbb{R}\}_{i=1}^M$ satisfying $\|\boldsymbol{\phi}(\boldsymbol{s},a)\|_2\leq 1$ for all $(\boldsymbol{s},a) \in \cS \times \A$. We then approximate the action-value function $Q^\pi$ via a linear combination of these basis functions: 
\begin{align*} &Q^\pi(\boldsymbol{s},a) \approx \boldsymbol{\phi}(\boldsymbol{s},a)^{\top} \boldsymbol{\theta}= \sum_{i=1}^M \phi_i(\boldsymbol{s},a)\theta_i~, \quad \text{where} \\
&\boldsymbol{\phi}(\boldsymbol{s},a) := [\phi_1(\boldsymbol{s},a), \ldots, \phi_M(\boldsymbol{s},a)]^\top\in\mathbb{R}^{M}, \quad \boldsymbol{\theta} := [\theta_1, \ldots, \theta_M]^\top\in\mathbb{R}^{M}. \end{align*} 
Here, $\boldsymbol{\theta}\in \mathbb{R}^M$ is the weight parameter vector characterizing the linear function approximation. The problem of learning an optimal action-value function (and thus an optimal policy) therefore reduces to learning an optimal parameter vector $\boldsymbol{\theta}^{\star}\in \mathbb{R}^M$.

\paragraph{Q-learning.}
Given a behavioral policy $\pi$ used to generate data $\{\boldsymbol{S}_t^{\pi}, A_t^{\pi}\}_{t \in \mathbb{N}}$, Q-learning is an iterative algorithm that approximates the optimal action-value function $Q^{\star}$ from data. In the tabular setting, where both state space $\mathcal{S}$ and action space $\mathcal{A}$ are finite, Q-learning maintains a look-up table $\widehat{Q}_t \in \mathbb{R}^{|\cS| \times |\A|}$ which stores the estimated action-value for all state-action pairs. Given initial state $\boldsymbol{S}^\pi_0$ and initial estimate $\widehat{Q}_0$, at iteration $t \in \mathbb{N}\cup\{0\}$, the update rule yields $(t+1)^{\text{th}}$ estimate given by 
\begin{align*} 
&\widehat{Q}_{t+1}(\boldsymbol{S}_t^{\pi},A_t^{\pi}) = \widehat{Q}_t(\boldsymbol{S}_t^{\pi},A_t^{\pi}) + \beta_t \delta_{t}^Q\\
&\delta_{t}^Q:= R_t^{\pi} + \gamma \max_{a \in \mathcal{A}} \widehat{Q}_t(\boldsymbol{S}_{t+1}^{\pi},a) - \widehat{Q}_t(\boldsymbol{S}_t^{\pi},A_t^{\pi})
\end{align*} 
where $(\beta_t)_{t \ge 0}$ is a sequence of step-sizes. Since the update rule does not rely on the action $A_{t+1}^\pi$ obtained from the behavioral policy $\pi$, Q-learning is classified as an off-policy learning algorithm. The tabular setting can be viewed as a special case of linear function approximation with one-hot encoding basis functions $\{\boldsymbol{e}{(\boldsymbol{s},a)}\}_{\boldsymbol{s} \in \mathcal{S}, a \in \mathcal{A}}$, where $\boldsymbol{e}{(\boldsymbol{s},a)} \in \mathbb{R}^{|\mathcal{S}| \times |\mathcal{A}|}$ has a single entry equal to one at the position corresponding to state-action pair $(\boldsymbol{s},a)$ and zeros elsewhere. With the representation $\widehat{Q}_t(\boldsymbol{s},a) = \boldsymbol{e}{(\boldsymbol{s},a)}^\top\widehat{\boldsymbol{\theta}}_{t}$ for all $(\boldsymbol{s},a) \in \mathcal{S} \times \mathcal{A}$ and $\boldsymbol{e}^\pi_t:=\boldsymbol{e}{(\boldsymbol{S}_t^{\pi},A_t^{\pi})}$, 
the tabular Q-learning update can be equivalently expressed as 
\begin{align*} 
\widehat{\boldsymbol{\theta}}_{t+1} = \widehat{\boldsymbol{\theta}}_{t} + \beta_t \left\{R_t^{\pi} + \gamma \max_{a \in \mathcal{A}} \boldsymbol{e}{(\boldsymbol{S}_{t+1}^{\pi},a)}^\top\widehat{\boldsymbol{\theta}}_{t} -{\boldsymbol{e}^\pi_t}^\top\widehat{\boldsymbol{\theta}}_{t}\right\}\boldsymbol{e}^\pi_t. 
\end{align*} 
Generalizing to a broader class of basis functions $\boldsymbol{\phi}(\boldsymbol{s},a) = [\phi_1(\boldsymbol{s},a), \ldots, \phi_M(\boldsymbol{s},a)]^\top\in\mathbb{R}^{M}$, with $\boldsymbol{\phi}^\pi_t := \boldsymbol{\phi}(\boldsymbol{S}_t^{\pi},A_t^{\pi})$, 
the linear Q-learning update rule at each time step $t \in \mathbb{N}$ is given by
\begin{align*} 
\widehat{\boldsymbol{\theta}}_{t+1} = \widehat{\boldsymbol{\theta}}_{t} + \beta_t \left\{R_t^{\pi} + \gamma \max_{a \in \mathcal{A}}\boldsymbol{\phi}(\boldsymbol{S}_{t+1}^{\pi},a)^\top\widehat{\boldsymbol{\theta}}_{t} - {\boldsymbol{\phi}^\pi_t}^\top\widehat{\boldsymbol{\theta}}_{t} \right\}\boldsymbol{\phi}^\pi_t.
\end{align*}

\paragraph{State-Action-Reward-State-Action (SARSA).} SARSA is an on-policy algorithm that learns an optimal policy through data obtained from a sequence of evolving behavioral policies. Similar to Q-learning, in the tabular setting, SARSA maintains a look-up table $\widehat{Q}_t \in \mathbb{R}^{|\mathcal{S}| \times |\mathcal{A}|}$ which stores the estimated action-value function for all state-action pairs. Given initial state $\boldsymbol{S}_0$, initial estimate $\widehat{Q}_0$ and initial policy $\pi_0 := \mathcal{I}(\widehat{Q}_0)$ where $\mathcal{I}$ denotes a policy improvement operator (e.g., $\epsilon$-greedy, softmax), SARSA proceeds as follows. At iteration $t \in \mathbb{N}\cup\{0\}$, the algorithm proceeds as follows. Starting from state $\boldsymbol{S}^{\pi_{t-2}}_t$ (generated under policy $\pi_{t-2}$), it selects action $A^{\pi_{t-1}}_t$ according to policy $\pi_{t-1}$. The algorithm then observes reward $R_t = r(\boldsymbol{S}^{\pi_{t-2}}_t, A^{\pi_{t-1}}_t)$ and transitions to the next state $\boldsymbol{S}^{\pi_{t-1}}_{t+1}$. Finally, it updates the policy to $\pi_{t} = \mathcal{I}(\widehat{Q}_{t})$ and selects the next action $A^{\pi_t}_{t+1}$ under this new policy. By convention, we use $\pi_{-1} := \pi_0$ and $\boldsymbol{S}^{\pi_{-2}}_0:= \boldsymbol{S}_0$. The Q-function estimate is updated via
\begin{align*}
&\widehat{Q}_{t+1}(\boldsymbol{S}_t^{\pi_{t-2}},A_t^{\pi_{t-1}}) = \widehat{Q}_t(\boldsymbol{S}_t^{\pi_{t-2}},A_t^{\pi_{t-1}}) + \beta_t\delta_t^{S}, \\
&\delta_t^{S}:= R_t + \gamma \widehat{Q}_t(\boldsymbol{S}_{t+1}^{\pi_{t-1}},A_{t+1}^{\pi_{t}}) - \widehat{Q}_t(\boldsymbol{S}_t^{\pi_{t-2}},A_t^{\pi_{t-1}})
\end{align*} 
where $(\beta_t)_{t \ge 0}$ is a sequence of step-sizes. Crucially, the data $(\boldsymbol{S}_t^{\pi_{t-2}}, A_t^{\pi_{t-1}}, R_t, \boldsymbol{S}_{t+1}^{\pi_{t-1}}, A_{t+1}^{\pi_t})$ are generated by the evolving policy sequence, which is iteratively updated. The key difference from Q-learning is that the update uses the action $A_{t+1}^{\pi_{t}}$ selected under the behavioral policy $\pi_{t}$ rather than the greedy action $\arg\max_{a \in \mathcal{A}} \widehat{Q}_t(\boldsymbol{S}_{t+1}^{\pi_{t-1}},a)$. This makes SARSA an on-policy algorithm, as the Q-function estimate corresponds to the policy currently generating the data.

\noindent As with Q-learning, the tabular setting can be viewed as a special case of linear function approximation with one-hot encoding basis functions $\{\boldsymbol{e}{(\boldsymbol{s},a)} \in \mathbb{R}^{|\cS|\times |\A|}\}_{\boldsymbol{s} \in \mathcal{S}, a \in \mathcal{A}}$. Let $\boldsymbol{e}_t:=\boldsymbol{e}{(\boldsymbol{S}_t^{\pi_{t-2}},A_t^{\pi_{t-1}})}$, then tabular SARSA update admits the following update rule
\begin{align*}
\widehat{\boldsymbol{\theta}}_{t+1} = \widehat{\boldsymbol{\theta}}_{t} + \beta_t \left(R_t + \gamma \boldsymbol{e}_{t+1}^\top\widehat{\boldsymbol{\theta}}_{t} - \boldsymbol{e}_t^\top\widehat{\boldsymbol{\theta}}_{t}\right)\boldsymbol{e}_t .
\end{align*}
Generalizing to a broader class of basis functions $\boldsymbol{\phi}(\boldsymbol{s},a) = [\phi_1(\boldsymbol{s},a), \ldots, \phi_M(\boldsymbol{s},a)]^\top\in\mathbb{R}^{M}$ with $\boldsymbol{\phi}_t := \boldsymbol{\phi}(\boldsymbol{S}_t^{\pi_{t-2}},A_t^{\pi_{t-1}})$, 
the linear SARSA update rule at each time step $t \in \mathbb{N}$ is given by
\begin{align*}
    \widehat{\boldsymbol{\theta}}_{t+1} =  \widehat{\boldsymbol{\theta}}_{t} + \beta_t  \left(R_t + \gamma \boldsymbol{\phi}_{t+1}^\top\widehat{\boldsymbol{\theta}}_{t} - \boldsymbol{\phi}_t^\top\widehat{\boldsymbol{\theta}}_{t}\right)\boldsymbol{\phi}_t
\end{align*}
where the data $(\boldsymbol{S}_t^{\pi_{t-2}}, A_t^{\pi_{t-1}}, R_t, \boldsymbol{S}_{t+1}^{\pi_{t-1}}, A_{t+1}^{\pi_t})$ are generated by a sequence of time-varying policies $\pi_{k} := \mathcal{I}\{\boldsymbol{\phi}(\boldsymbol{s},a)^{\top} \widehat{\boldsymbol{\theta}}_{k}\}$ for $k \in \{t-2, t-1, t\}$. After each update, the policy evolves to $\pi_{t+1} = \mathcal{I}\{\boldsymbol{\phi}(\boldsymbol{s},a)^{\top} \widehat{\boldsymbol{\theta}}_{t+1}\}$.

\paragraph{Projection.} In practice, Q-learning and SARSA are often implemented with a projection step to ensure numerical stability and theoretical tractability \citep{bhandari2018finite, zou2019finite, zhang2023convergence}. We incorporate such a projection to ensure the iterates $(\widehat{\boldsymbol{\theta}}_n)_{n\in\mathbb{N}}$ remain in a Euclidean ball of radius $r > 0$. Specifically, the projected Q-learning update becomes
\begin{align*}
\widehat{\boldsymbol{\theta}}_{t+1} = \Pi_r\left[\widehat{\boldsymbol{\theta}}_{t} + \beta_t \left\{R_t^{\pi} + \gamma \max_{a \in \mathcal{A}}\boldsymbol{\phi}(\boldsymbol{S}_{t+1}^{\pi},a)^\top\widehat{\boldsymbol{\theta}}_{t} - \boldsymbol{\phi}_t^\top\widehat{\boldsymbol{\theta}}_{t} \right\}\boldsymbol{\phi}_t\right]
\end{align*}
where the projection operator is defined as
\begin{align*}\label{EUCLIDEAN_PROJ}
\Pi_r(\boldsymbol{\theta}) := \begin{cases}
r\boldsymbol{\theta}/\|\boldsymbol{\theta}\|_2 & \text{if } \|\boldsymbol{\theta}\|_2 > r \\
\boldsymbol{\theta} & \text{otherwise}.
\end{cases}
\end{align*}
Similarly, the projected linear SARSA updates according to
\begin{align*}
\widehat{\boldsymbol{\theta}}_{t+1} = \Pi_r\left[\widehat{\boldsymbol{\theta}}_{t} + \beta_t  \left\{R_t + \gamma \boldsymbol{\phi}_{t+1}^\top\widehat{\boldsymbol{\theta}}_{t} - \boldsymbol{\phi}_t^\top\widehat{\boldsymbol{\theta}}_{t}\right\}\boldsymbol{\phi}_t\right].
\end{align*}
Projection has been widely adopted in both theoretical analyses and practical implementations of reinforcement learning and optimization algorithms, as it provides a computationally efficient mechanism to ensure boundedness while preserving essential convergence properties \citep{nemirovski2009robust, bubeck2015convex, bhandari2018finite, zou2019finite, xu2019two, xu2020finite, wu2020finite, kim2025implicit, kim2025stabilizing}. In this work, we focus on these projected variants of linear Q-learning and SARSA.

\paragraph{Sensitivity to step-size.}
Step-size selection remains a fundamental challenge in reinforcement learning, with its importance well-documented in the literature \citep{sutton1998reinforcement,bertsekas2004improved,szepesvari2022algorithms,powell2022reinforcement}. Figure \ref{fig:Q_SARSA_MOTIVATION} illustrates this sensitivity using two standard benchmark environments. Both Q-learning and SARSA maintain stable performance below certain thresholds, approximately $\beta \approx 1.0$ for Cliff Walking and $\beta \approx 0.8$ for Taxi, beyond which performance degrades sharply. However, conservative step-size choices incur substantial learning inefficiency. The bottom panels demonstrate that at fixed iteration budgets, algorithms employing small step-sizes exhibit markedly slower convergence, achieving significantly inferior cumulative rewards compared to larger step-sizes. This presents a fundamental trade-off: conservative step-sizes guarantee algorithmic stability at the expense of learning efficiency, while aggressive step-sizes enable rapid progress but introduce the risk of divergence. Consequently, practitioners must engage in careful hyperparameter tuning, a process frequently neglected in practice \citep{powell2022reinforcement}. We address this limitation by proposing implicit variants of Q-learning and SARSA that exhibit substantially improved robustness to step-size selection, permitting larger step-sizes without performance degradation while maintaining convergence guarantees.

\begin{figure}[htp]
\centering
\includegraphics[height=.295\textwidth]{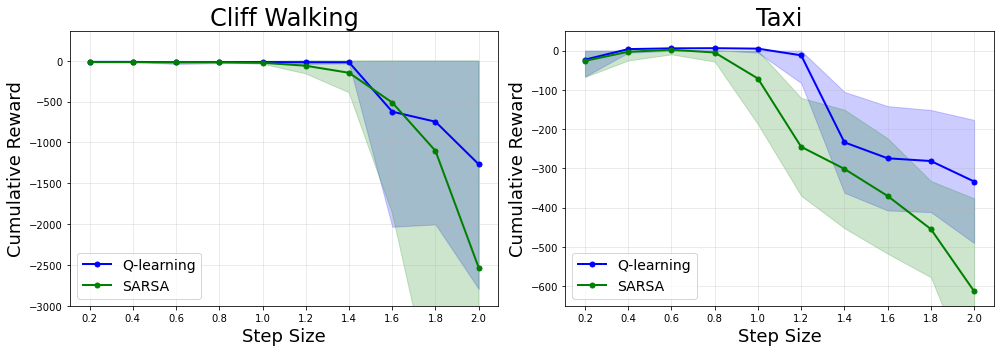}
\includegraphics[height=.295\textwidth]{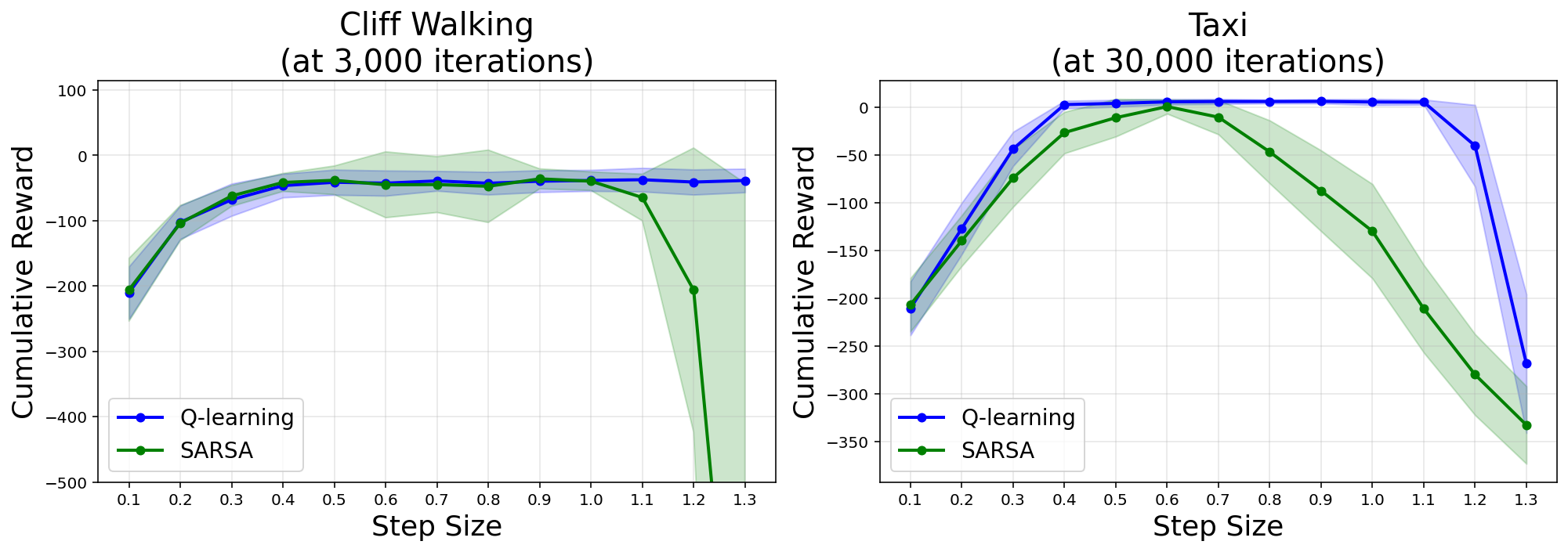}
\caption{Sensitivity of Q-learning and SARSA to step-size selection on Cliff Walking and Taxi environments. Top: Average cumulative reward versus step-size $\beta$ shows both algorithms maintain stability below certain thresholds ($\beta \approx 1.0$ for Cliff Walking, $\beta \approx 0.8$ for Taxi) but degrade sharply beyond them. Bottom: Cumulative reward at fixed iterations (3,000 for Cliff Walking, 30,000 for Taxi) reveals that conservative step-sizes, while stable, yield substantially inferior performance. Shaded regions represent standard errors over 50 runs.}
\label{fig:Q_SARSA_MOTIVATION}
\end{figure}

\section{Policy Control via Implicit Procedures}\label{sec:method}
\noindent Implicit update schemes serve as a powerful approach to enhance the numerical stability of stochastic approximation algorithms. The core principle involves reformulating the standard recursive update as a fixed-point equation where the future iterate appears on both the left- and right-hand sides of the algorithm update rule. In the context of SGD with an objective function $f$ that one wishes to minimize, the implicit SGD iterates admit the following update rule 
$$
\boldsymbol{\theta}_{t+1}^{\text{im}} := \boldsymbol{\theta}_t^{\text{im}} + \beta_t \nabla f(\boldsymbol{\theta}_{t+1}^{\text{im}}; \xi_t)
$$ 
for a step-size $\beta_t$ with a stochastic component $\xi_t$. It evaluates the gradient at the future iterate rather than the current one, as proposed and analyzed by \citep{toulis2015implicit, toulis2016towards, Toulis2014-mq, toulis2017asymptotic, chee2023plus}. This seemingly subtle modification yields significant stability improvements by providing adaptive shrinkage that automatically adjusts the effective step-size based on gradient information. Similarly, implicit TD methods reformulate standard TD updates by evaluating the current state value at the future iterate, resulting in adaptive step-sizes that scale inversely with feature norms \citep{kim2025stabilizing, kim2025implicit}. These implicit procedures have demonstrated substantial practical benefits in mitigating numerical instabilities inherent in TD learning with function approximation, motivating their adoption in our formulation of Q-learning and SARSA.

\paragraph{Implicit Q-learning.}
Inspired by developments in the aforementioned implicit procedures, we propose an implicit variant of Q-learning that enhances numerical stability. Recall, in the standard linear Q-learning, $t^{\text{th}}$ update direction is
$$
\left\{R_t^{\pi} + \gamma \max_{a' \in \mathcal{A}}\boldsymbol{\phi}(\boldsymbol{S}_{t+1}^{\pi},a')^{\top}\widehat{\boldsymbol{\theta}}_{t} - \boldsymbol{\phi}_t^{\top}\widehat{\boldsymbol{\theta}}_{t}\right\}\boldsymbol{\phi}_t.
$$
An implicit version instead uses the future iterate $\widehat{\boldsymbol{\theta}}_{t+1}$ in place of $\widehat{\boldsymbol{\theta}}_{t}$ when computing the current state-action value, i.e., 
$$
\left\{R_t^{\pi} + \gamma \max_{a' \in \mathcal{A}}\boldsymbol{\phi}(\boldsymbol{S}_{t+1}^{\pi},a')^{\top}\widehat{\boldsymbol{\theta}}_{t} - \boldsymbol{\phi}_t^{\top}{\color{red}\widehat{\boldsymbol{\theta}}_{t+1}}\right\}\boldsymbol{\phi}_t.
$$
This reformulation transforms the recursive update into a fixed-point equation, where the future iterate appears on both sides of the update rule. Combining terms involving future iterate from both sides of the Q-learning update, the implicit update can be written as
\begin{align*}
\left(I + \beta_t\boldsymbol{\phi}_t\boldsymbol{\phi}_t^{\top}\right)\widehat{\boldsymbol{\theta}}_{t+1} = \widehat{\boldsymbol{\theta}}_{t} + \beta_t\left\{R_t^{\pi} + \gamma\max_{a' \in \mathcal{A}}\boldsymbol{\phi}(\boldsymbol{S}_{t+1}^{\pi},a')^{\top}\widehat{\boldsymbol{\theta}}_{t}\right\}\boldsymbol{\phi}_t.
\end{align*}
Applying the Sherman-Morrison-Woodbury formula yields an explicit update with an adaptive step-size that scales inversely with the squared feature norm, providing automatic shrinkage and improved stability over standard Q-learning, as formally stated in the proposition below.

\begin{proposition}[Implicit Q-learning update]
\label{prop:implicit_q_learning}
The implicit linear Q-learning algorithm admits the following update rule:
\begin{align*}
\widehat{\boldsymbol{\theta}}_{t+1} = \widehat{\boldsymbol{\theta}}_{t} + \frac{\beta_t\left\{R_t^{\pi} + \gamma \max_{a' \in \mathcal{A}} \boldsymbol{\phi}(\boldsymbol{S}_{t+1}^{\pi},a')^\top\widehat{\boldsymbol{\theta}}_{t} - \boldsymbol{\phi}_t^\top\widehat{\boldsymbol{\theta}}_{t}\right\} }{1 + \beta_t \|\boldsymbol{\phi}_t\|_2^2} \boldsymbol{\phi}_t.
\end{align*}
\end{proposition}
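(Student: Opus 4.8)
The plan is to solve, in closed form, the fixed-point equation displayed immediately above the statement,
\[
\left(I + \beta_t\boldsymbol{\phi}_t\boldsymbol{\phi}_t^{\top}\right)\widehat{\boldsymbol{\theta}}_{t+1} = \widehat{\boldsymbol{\theta}}_{t} + \beta_t c_t \boldsymbol{\phi}_t,
\qquad c_t := R_t^{\pi} + \gamma\max_{a' \in \mathcal{A}}\boldsymbol{\phi}(\boldsymbol{S}_{t+1}^{\pi},a')^{\top}\widehat{\boldsymbol{\theta}}_{t},
\]
by inverting the coefficient matrix. Since $I + \beta_t\boldsymbol{\phi}_t\boldsymbol{\phi}_t^{\top}$ is a rank-one perturbation of the identity, I would invoke the Sherman--Morrison--Woodbury identity to obtain
\[
\left(I + \beta_t\boldsymbol{\phi}_t\boldsymbol{\phi}_t^{\top}\right)^{-1}
= I - \frac{\beta_t\,\boldsymbol{\phi}_t\boldsymbol{\phi}_t^{\top}}{1 + \beta_t\|\boldsymbol{\phi}_t\|_2^2}.
\]
The one point worth stating explicitly is well-definedness: because $\beta_t > 0$ and $\|\boldsymbol{\phi}_t\|_2^2 \ge 0$, the scalar $1 + \beta_t\|\boldsymbol{\phi}_t\|_2^2$ is strictly positive, so the inverse exists for \emph{every} positive step-size. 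This is precisely the structural reason the implicit scheme never requires the matrix to be conditioned by a small $\beta_t$, and it underlies the robustness claims made later, so I would not leave it implicit.

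The remainder is direct algebra. Applying the inverse to the right-hand side $\widehat{\boldsymbol{\theta}}_{t} + \beta_t c_t \boldsymbol{\phi}_t$ and expanding, every resulting term is either $\widehat{\boldsymbol{\theta}}_t$ or a scalar multiple of $\boldsymbol{\phi}_t$, using $\boldsymbol{\phi}_t^{\top}\boldsymbol{\phi}_t = \|\boldsymbol{\phi}_t\|_2^2$ and treating $\boldsymbol{\phi}_t^{\top}\widehat{\boldsymbol{\theta}}_t$ and $c_t$ as scalars. Collecting the coefficient on $\boldsymbol{\phi}_t$ and using the identity $1 - \tfrac{\beta_t\|\boldsymbol{\phi}_t\|_2^2}{1+\beta_t\|\boldsymbol{\phi}_t\|_2^2} = \tfrac{1}{1+\beta_t\|\boldsymbol{\phi}_t\|_2^2}$, the coefficient collapses to $\tfrac{\beta_t(c_t - \boldsymbol{\phi}_t^{\top}\widehat{\boldsymbol{\theta}}_t)}{1+\beta_t\|\boldsymbol{\phi}_t\|_2^2}$. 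Recognizing $c_t - \boldsymbol{\phi}_t^{\top}\widehat{\boldsymbol{\theta}}_t$ as exactly the Q-learning temporal-difference term $R_t^{\pi} + \gamma\max_{a'}\boldsymbol{\phi}(\boldsymbol{S}_{t+1}^{\pi},a')^{\top}\widehat{\boldsymbol{\theta}}_t - \boldsymbol{\phi}_t^{\top}\widehat{\boldsymbol{\theta}}_t$ then yields the stated update.

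I do not anticipate a genuine obstacle here: the conceptual content lives in the reformulation preceding the statement, and once the fixed-point form is in hand the derivation is elementary. The only place to be careful is bookkeeping — keeping track of which quantities are scalars versus vectors so that the rank-one cancellation in the Sherman--Morrison expansion is applied to the correct factors, and verifying that the two $\boldsymbol{\phi}_t$-contributions (one from the bare $\beta_t c_t\boldsymbol{\phi}_t$ term, one from the correction term) combine with the right signs. A brief remark interpreting $\tfrac{\beta_t}{1+\beta_t\|\boldsymbol{\phi}_t\|_2^2}$ as the adaptive, feature-norm-shrunk effective step-size would connect the formula to the motivation and to the analogous implicit TD updates cited in the text.
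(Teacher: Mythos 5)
Your proposal is correct and follows essentially the same route as the paper's proof: rearrange the implicit update into the fixed-point equation, invert $I + \beta_t\boldsymbol{\phi}_t\boldsymbol{\phi}_t^{\top}$ via Sherman--Morrison--Woodbury, and collect the $\boldsymbol{\phi}_t$-coefficients using $1 - \tfrac{\beta_t\|\boldsymbol{\phi}_t\|_2^2}{1+\beta_t\|\boldsymbol{\phi}_t\|_2^2} = \tfrac{1}{1+\beta_t\|\boldsymbol{\phi}_t\|_2^2}$. Your explicit note that the inverse exists for every positive step-size is a small addition the paper leaves implicit, but it does not change the argument.
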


\begin{algorithm}[htp]
\caption{(Projected) Implicit Q-learning}
\begin{algorithmic}[1]\label{ALG:Q}
\STATE \textbf{Input:} step-sizes $\{\beta_t\}_{t \in \mathbb{N}}$, initial estimate $\widehat{\boldsymbol{\theta}}_0$, initial state $\boldsymbol{S}_0^{\pi}$, basis functions $\{\phi_i\}_{i=1}^M$, behavioral policy $\pi$, projection radius $r>0$ 
\STATE \textbf{Notation:} $\boldsymbol{\phi}_t \equiv \boldsymbol{\phi}(\boldsymbol{S}_t^{\pi}, A_t^{\pi})$; ~$\widetilde{\beta}_t \equiv \beta_t/(1 + \beta_t\|\boldsymbol{\phi}_t\|_2^2)$
\FOR{$t=0,1,2,\dots$}
  \STATE Take $A_t^{\pi} \sim \pi(\cdot|\boldsymbol{S}_t^{\pi})$, observe $R_t^{\pi} = r(\boldsymbol{S}_t^{\pi}, A_t^{\pi})$
  \STATE Transition to $\boldsymbol{S}_{t+1}^{\pi} \sim \Ptrans(\cdot|\boldsymbol{S}_t^{\pi}, A_t^{\pi})$
  \STATE $\delta^Q_t \gets R_t^{\pi} + \gamma\max_{a' \in \mathcal{A}}\boldsymbol{\phi}(\boldsymbol{S}_{t+1}^{\pi}, a')^{\top}\widehat{\boldsymbol{\theta}}_t - \boldsymbol{\phi}_t^{\top}\widehat{\boldsymbol{\theta}}_t$
  \STATE
    $\widehat{\boldsymbol{\theta}}_{t+1} \gets \widehat{\boldsymbol{\theta}}_t + \widetilde{\beta}_t \delta^Q_t\boldsymbol{\phi}_t$
  \STATE (If projection applied):
  $\widehat{\boldsymbol{\theta}}_{t+1} \gets \Pi_r \widehat{\boldsymbol{\theta}}_{t+1}$
\ENDFOR
\end{algorithmic}
\end{algorithm}

\paragraph{Implicit SARSA.}
Following the same principle of implicit updates, we propose an implicit variant of SARSA that evaluates the current state-action value at the next iterate. In standard SARSA, the $t^{\text{th}}$ update direction is 
$$
\left(R_t + \gamma \boldsymbol{\phi}_{t+1}^{\top}\widehat{\boldsymbol{\theta}}_{t} - \boldsymbol{\phi}_t^{\top}\widehat{\boldsymbol{\theta}}_{t}\right)\boldsymbol{\phi}_t,
$$
while implicit SARSA moves along 
$$
\left(R_t + \gamma \boldsymbol{\phi}_{t+1}^{\top}\widehat{\boldsymbol{\theta}}_{t} - \boldsymbol{\phi}_t^{\top}{\color{red}\widehat{\boldsymbol{\theta}}_{t+1}}\right)\boldsymbol{\phi}_t.
$$
This modification similarly transforms the SARSA update into a fixed-point equation: 
\begin{align*}
\left(I + \beta_t\boldsymbol{\phi}_t\boldsymbol{\phi}_t^{\top}\right)\widehat{\boldsymbol{\theta}}_{t+1} = \widehat{\boldsymbol{\theta}}_{t} + \beta_t\left(R_t + \gamma\boldsymbol{\phi}_{t+1}^{\top}\widehat{\boldsymbol{\theta}}_{t}\right)\boldsymbol{\phi}_t.
\end{align*}
Both implicit Q-learning and SARSA share the structure of incorporating the future iterate, differing only in action selection: Q-learning uses greedy maximization while SARSA follows the policy $\pi_{t} = \mathcal{I}\{\boldsymbol{\phi}(\boldsymbol{s},a)^{\top} \widehat{\boldsymbol{\theta}}_{t}\}$. This implicit formulation enhances stability through adaptive, feature-dependent step-size adjustment, which can be seen in the proposition below.

\begin{proposition}[Implicit SARSA update]
\label{prop:implicit_sarsa}
The implicit linear SARSA algorithm admits the following update rule:
\begin{align*}
\widehat{\boldsymbol{\theta}}_{t+1} = \widehat{\boldsymbol{\theta}}_{t} + \frac{\beta_t\left(R_t + \gamma \boldsymbol{\phi}_{t+1}^\top\widehat{\boldsymbol{\theta}}_{t} - \boldsymbol{\phi}_t^\top\widehat{\boldsymbol{\theta}}_{t}\right)}{1 + \beta_t \|\boldsymbol{\phi}_t\|_2^2}  \boldsymbol{\phi}_t.
\end{align*}
\end{proposition}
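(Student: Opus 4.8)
The plan is to verify the closed form by directly solving the fixed-point equation already exhibited in the excerpt, namely
\begin{align*}
\left(I + \beta_t\boldsymbol{\phi}_t\boldsymbol{\phi}_t^{\top}\right)\widehat{\boldsymbol{\theta}}_{t+1} = \widehat{\boldsymbol{\theta}}_{t} + \beta_t\left(R_t + \gamma\boldsymbol{\phi}_{t+1}^{\top}\widehat{\boldsymbol{\theta}}_{t}\right)\boldsymbol{\phi}_t,
\end{align*}
which itself is obtained by moving the term $-\beta_t(\boldsymbol{\phi}_t^{\top}\widehat{\boldsymbol{\theta}}_{t+1})\boldsymbol{\phi}_t = -\beta_t\boldsymbol{\phi}_t\boldsymbol{\phi}_t^{\top}\widehat{\boldsymbol{\theta}}_{t+1}$ in the implicit update direction to the left-hand side. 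The coefficient matrix $I + \beta_t\boldsymbol{\phi}_t\boldsymbol{\phi}_t^{\top}$ is a rank-one perturbation of the identity, so I would invert it in closed form using the Sherman–Morrison identity,
\begin{align*}
\left(I + \beta_t\boldsymbol{\phi}_t\boldsymbol{\phi}_t^{\top}\right)^{-1} = I - \frac{\beta_t\boldsymbol{\phi}_t\boldsymbol{\phi}_t^{\top}}{1 + \beta_t\|\boldsymbol{\phi}_t\|_2^2},
\end{align*}
which is well defined for every $\beta_t > 0$ since $\|\boldsymbol{\phi}_t\|_2^2 \ge 0$ guarantees the denominator $1 + \beta_t\|\boldsymbol{\phi}_t\|_2^2$ is strictly positive. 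This mirrors exactly the derivation used for Proposition~\ref{prop:implicit_q_learning}, the only change being that the bootstrapping target $\gamma\max_{a'}\boldsymbol{\phi}(\boldsymbol{S}_{t+1}^{\pi},a')^{\top}\widehat{\boldsymbol{\theta}}_t$ is replaced by the on-policy target $\gamma\boldsymbol{\phi}_{t+1}^{\top}\widehat{\boldsymbol{\theta}}_t$.

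The remaining steps are algebraic bookkeeping. Abbreviating the scalar $c_t := R_t + \gamma\boldsymbol{\phi}_{t+1}^{\top}\widehat{\boldsymbol{\theta}}_t$, the right-hand side is $\widehat{\boldsymbol{\theta}}_t + \beta_t c_t\boldsymbol{\phi}_t$, and applying the inverse above gives
\begin{align*}
\widehat{\boldsymbol{\theta}}_{t+1} = \left(I - \frac{\beta_t\boldsymbol{\phi}_t\boldsymbol{\phi}_t^{\top}}{1 + \beta_t\|\boldsymbol{\phi}_t\|_2^2}\right)\left(\widehat{\boldsymbol{\theta}}_t + \beta_t c_t\boldsymbol{\phi}_t\right).
\end{align*}
I would then expand the product, using the scalar identities $\boldsymbol{\phi}_t^{\top}\widehat{\boldsymbol{\theta}}_t \in \mathbb{R}$ and $\boldsymbol{\phi}_t^{\top}\boldsymbol{\phi}_t = \|\boldsymbol{\phi}_t\|_2^2$, so that each matrix–vector product collapses to a scalar multiple of $\boldsymbol{\phi}_t$. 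The one manipulation worth flagging is the combination of the two terms carrying the factor $c_t\boldsymbol{\phi}_t$: the cross term $-\beta_t^2 c_t\|\boldsymbol{\phi}_t\|_2^2\boldsymbol{\phi}_t/(1+\beta_t\|\boldsymbol{\phi}_t\|_2^2)$ combines with $\beta_t c_t\boldsymbol{\phi}_t$ to yield $\beta_t c_t\boldsymbol{\phi}_t/(1+\beta_t\|\boldsymbol{\phi}_t\|_2^2)$, since $1 + \beta_t\|\boldsymbol{\phi}_t\|_2^2 - \beta_t\|\boldsymbol{\phi}_t\|_2^2 = 1$ in the numerator after placing everything over the common denominator.

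After this cancellation the surviving terms are $\widehat{\boldsymbol{\theta}}_t$ together with $\beta_t(c_t - \boldsymbol{\phi}_t^{\top}\widehat{\boldsymbol{\theta}}_t)\boldsymbol{\phi}_t/(1+\beta_t\|\boldsymbol{\phi}_t\|_2^2)$, and substituting back $c_t - \boldsymbol{\phi}_t^{\top}\widehat{\boldsymbol{\theta}}_t = R_t + \gamma\boldsymbol{\phi}_{t+1}^{\top}\widehat{\boldsymbol{\theta}}_t - \boldsymbol{\phi}_t^{\top}\widehat{\boldsymbol{\theta}}_t$ recovers the temporal-difference error and hence the claimed update. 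There is no genuine obstacle here; the result is an exact identity rather than a bound, and the whole argument is a one-line application of Sherman–Morrison followed by the scalar cancellation noted above. The only point requiring minor care is keeping track of which quantities are scalars versus vectors during the expansion, and confirming that the adaptive step-size $\beta_t/(1+\beta_t\|\boldsymbol{\phi}_t\|_2^2)$ emerging in the denominator is exactly the feature-norm shrinkage the proposition advertises.
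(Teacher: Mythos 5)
Your proposal is correct and follows exactly the paper's own derivation: rearrange the implicit update into the fixed-point equation, invert $I + \beta_t\boldsymbol{\phi}_t\boldsymbol{\phi}_t^{\top}$ via Sherman--Morrison, and collapse the expansion using $1 - \beta_t\|\boldsymbol{\phi}_t\|_2^2/(1+\beta_t\|\boldsymbol{\phi}_t\|_2^2) = 1/(1+\beta_t\|\boldsymbol{\phi}_t\|_2^2)$ to recover the temporal-difference error with the shrunken step-size. The algebraic cancellation you flag is precisely the step the paper performs, so there is nothing to add.
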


\begin{algorithm}[htp]
\caption{(Projected) Implicit SARSA}
\begin{algorithmic}[1]\label{ALG:SARSA}
\STATE \textbf{Input:} step-sizes $\{\beta_t\}_{t \in \mathbb{N}}$, initial estimate $\widehat{\boldsymbol{\theta}}_{0}$, initial state $\boldsymbol{S}_0$, basis functions $\{\phi_i\}_{i=1}^M$, policy improvement operator $\mathcal{I}$, projection radius $r>0$
\STATE \textbf{Notation:} $\boldsymbol{\phi}_0 \equiv \boldsymbol{\phi}(\boldsymbol{S}_0, A_0)$; $\boldsymbol{\phi}_1 \equiv \boldsymbol{\phi}(\boldsymbol{S}_1^{\pi_{0}}, A_1^{\pi_0})$; $\boldsymbol{\phi}_t \equiv \boldsymbol{\phi}(\boldsymbol{S}_t^{\pi_{t-2}}, A_t^{\pi_{t-1}})$; $\boldsymbol{\phi}_{t+1} \equiv \boldsymbol{\phi}(\boldsymbol{S}_{t+1}^{\pi_{t-1}}, A_{t+1}^{\pi_t})$ $(t \ge 2)$;~ $\widetilde{\beta}_t \equiv \beta_t/(1 + \beta_t\|\boldsymbol{\phi}_t\|_2^2)$
\STATE Initialize $\pi_0 = \mathcal{I}\{\boldsymbol{\phi}(\boldsymbol{s},a)^{\top}\widehat{\boldsymbol{\theta}}_0\}$
\STATE Take $A_0 \sim \pi_0(\cdot|\boldsymbol{S}_0)$, observe $R_0 = r(\boldsymbol{S}_0, A_0)$
\STATE Transition to $\boldsymbol{S}_1^{\pi_{0}} \sim \Ptrans(\cdot|\boldsymbol{S}_0, A_0)$
\STATE Take $A_1^{\pi_0} \sim \pi_0(\cdot|\boldsymbol{S}_1^{\pi_{-1}})$
\STATE $\widehat{\boldsymbol{\theta}}_1 \gets \widehat{\boldsymbol{\theta}}_0 + \widetilde{\beta}_0(R_0 + \gamma\boldsymbol{\phi}_1^{\top}\widehat{\boldsymbol{\theta}}_0 - \boldsymbol{\phi}_0^{\top}\widehat{\boldsymbol{\theta}}_0)\boldsymbol{\phi}_0$
\STATE (If projection applied): $\widehat{\boldsymbol{\theta}}_1 \gets \Pi_r \widehat{\boldsymbol{\theta}}_1$
\FOR{$t=1,2,\dots$}
  \STATE $\pi_t = \mathcal{I}\{\boldsymbol{\phi}(\boldsymbol{s},a)^{\top}\widehat{\boldsymbol{\theta}}_t\}$ 
  \STATE Observe $R_t = r(\boldsymbol{S}_t^{\pi_{t-2}}, A_t^{\pi_{t-1}})$
  \STATE Transition to $\boldsymbol{S}_{t+1}^{\pi_{t-1}} \sim \Ptrans(\cdot|\boldsymbol{S}_t^{\pi_{t-2}}, A_t^{\pi_{t-1}})$
  \STATE Take $A_{t+1}^{\pi_t} \sim \pi_t(\cdot|\boldsymbol{S}_{t+1}^{\pi_{t-1}})$
  \STATE $\widehat{\boldsymbol{\theta}}_{t+1} \gets \widehat{\boldsymbol{\theta}}_t + \widetilde{\beta}_t(R_t + \gamma\boldsymbol{\phi}_{t+1}^{\top}\widehat{\boldsymbol{\theta}}_t - \boldsymbol{\phi}_t^{\top}\widehat{\boldsymbol{\theta}}_t)\boldsymbol{\phi}_t$
    \STATE (If projection applied): $\widehat{\boldsymbol{\theta}}_{t+1} \gets \Pi_r \widehat{\boldsymbol{\theta}}_{t+1}$
\ENDFOR
\end{algorithmic}
\end{algorithm}

\section{Non-Asymptotic Analysis of Implicit Q-learning and SARSA}
\noindent In this section, after incorporating a projection step, we provide finite-time error bounds for both the standard algorithms and their implicit counterparts under constant and decreasing step-size sequences. Our theoretical analysis reveals a unifying principle: implicit variants admit substantially greater flexibility in step-size selection while maintaining rigorous convergence guarantees, thereby resolving the numerical instability issues of standard algorithms described in Section \ref{sec:background}.

\subsection{Finite-time Analysis of Q-learning}\label{subsec:fin_IMP_Q}
\noindent To facilitate the theoretical analysis of linear Q-learning, we first assume uniform geometric ergodicity of the underlying Markov chain $\{\boldsymbol{S}_t^{\pi}\}_{t\geq 0}$ induced by a policy $\pi$.
\begin{assumption}[Uniform geometric ergodicity]
\label{assume:ergodicity}
Given a behavioral policy $\pi$, the Markov chain $\{\boldsymbol{S}_t^{\pi}\}_{t\geq 0}$ is uniformly geometrically ergodic with a stationary distribution $\mu^\pi$, i.e., there exist constants $m > 0$ and $\rho \in (0,1)$ such that
\begin{align*}
\sup_{\boldsymbol{s}\in\mathcal{S}} d_{TV}\{\mathbb{P}(\boldsymbol{S}_t^{\pi} \in \cdot\mid\boldsymbol{S}_0^{\pi} = \boldsymbol{s}), \mu^\pi\} \leq m\rho^t, \quad \forall t \geq 0,
\end{align*}
where $d_{TV}(P,Q)$ denotes the total variation distance between the probability distributions $P$ and $Q$.
\end{assumption}

\begin{assumption}[Near optimality of behavioral policy]
\label{assume:optimality_behavioral}
For all $\boldsymbol{\theta} \in \mathbb{R}^M$,  $\boldsymbol{\Sigma}^{\mu^\pi} \succ \gamma^2 \boldsymbol{\Sigma}^*(\boldsymbol{\theta})$, where 
\begin{align*}
    \boldsymbol{\Sigma}^{\mu^\pi} &:= \mathbb{E}^{\mu^\pi}\{\boldsymbol{\phi}(\boldsymbol{S},A)\boldsymbol{\phi}(\boldsymbol{S},A)^\top\}\\ \boldsymbol{\Sigma}^*(\boldsymbol{\theta}) &:= \mathbb{E}^{\mu^\pi}\{\boldsymbol{\phi}(\boldsymbol{S}, A_{\boldsymbol{S}}^{\boldsymbol{\theta}})\boldsymbol{\phi}(\boldsymbol{S}, A_{\boldsymbol{S}}^{\boldsymbol{\theta}})^\top\}
\end{align*}
with $A_{\boldsymbol{S}}^{\boldsymbol{\theta}} := \arg\max_{a\in\mathcal{A}} \boldsymbol{\phi}(\boldsymbol{S},a)^\top\boldsymbol{\theta}$.
\end{assumption}

\noindent Here, $\mathbb{E}^{\mu^\pi}$ denotes the expectation where $\boldsymbol{S}$ follows the stationary distribution $\mu^\pi$, $A$ is sampled according to the behavioral policy $\pi(\cdot | \boldsymbol{S})$, and $\boldsymbol{S}'$ denotes the subsequent state. Let $w_q > 0$ be the minimum smallest eigenvalue of $\boldsymbol{\Sigma}^{\mu^\pi} - \gamma^2\boldsymbol{\Sigma}^*(\boldsymbol{\theta})$ over all $\boldsymbol{\theta} \in \mathbb{R}^M$. Under Assumptions \ref{assume:ergodicity}--\ref{assume:optimality_behavioral}, \citet{melo2008analysis} established asymptotic convergence of Q-learning iterates to the limit point $\boldsymbol{\theta}^\star$, and \citet{chen2019performance} provided finite-time error bounds under a restrictive step-size condition. The limit point $\boldsymbol{\theta}^\star$ of Q-learning satisfies
$$
\mathbb{E}^{\mu^\pi}\left[\phi(\boldsymbol{S}, A)\left\{r(\boldsymbol{S}, A) + \gamma \max_{a \in \mathcal{A}} \phi(\boldsymbol{S}', a)^{\top} \boldsymbol{\theta}^\star - \phi(\boldsymbol{S}, A)^{\top}\boldsymbol{\theta}^\star\right\}\right] = 0.
$$
We present finite-time error bounds for projected linear Q-learning and its implicit variant under constant step-sizes, demonstrating implicit Q-learning's superior robustness to step-size selection.

\begin{theorem}[Finite-time bounds for Q-learning under constant step-size]
\label{thm:constant_qlearning}
Let $r > 0$ satisfy $\|\boldsymbol{\theta}^{\star}\|_2 \leq r$ and $\beta \in (0, 1/w_q)$ be a constant step-size. Under Assumptions \ref{assume:ergodicity} and \ref{assume:optimality_behavioral}, projected linear Q-learning satisfies
\begin{equation*}
\mathbb{E}^{\mu^\pi}\left\|\widehat{\boldsymbol{\theta}}_{t+1} - \boldsymbol{\theta}^{\star}\right\|_2^2 \le    (1 - \beta w_q)^{t+1}\left\|\widehat{\boldsymbol{\theta}}_0- \boldsymbol{\theta}^{\star}\right\|_2^2 + \mathcal{O}\left( \tau_\beta \beta\right ),
\end{equation*}
where $\tau_\beta = \min\{n \in \mathbb{N} : m\rho^n \leq \beta\}$. In comparison, projected implicit linear Q-learning with a constant step-size $\beta$ satisfying $\frac{\beta w_q}{1+\beta} \in (0, 1)$ yields
\begin{equation*}
\mathbb{E}^{\mu^\pi}\left\|\widehat{\boldsymbol{\theta}}^{\text{im}}_{t+1} - \boldsymbol{\theta}^{\star}\right\|_2^2 \le  \left(1 - \frac{\beta w_q}{1+\beta}\right)^{t+1}\left\|\widehat{\boldsymbol{\theta}}_0- \boldsymbol{\theta}^{\star}\right\|_2^2 + \mathcal{O}\left( \tau_\beta \beta + \tau_\beta\beta^2\right).
\end{equation*}
\end{theorem}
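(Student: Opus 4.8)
The plan is to reduce both bounds to a single one-step recursion for the squared error $\|\widehat{\boldsymbol{\theta}}_t - \boldsymbol{\theta}^\star\|_2^2$ and then unroll a geometric recursion. Write $\boldsymbol{\Delta}_t := \widehat{\boldsymbol{\theta}}_t - \boldsymbol{\theta}^\star$ and let $g_t(\boldsymbol{\theta}) := \{R_t^{\pi} + \gamma\max_{a}\boldsymbol{\phi}(\boldsymbol{S}_{t+1}^{\pi},a)^\top\boldsymbol{\theta} - \boldsymbol{\phi}_t^\top\boldsymbol{\theta}\}\boldsymbol{\phi}_t$ denote the stochastic Q-learning direction, with mean field $\bar{g}(\boldsymbol{\theta}) := \mathbb{E}^{\mu^\pi}[g_t(\boldsymbol{\theta})]$ satisfying $\bar{g}(\boldsymbol{\theta}^\star)=0$. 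The crucial structural observation is that, by Proposition \ref{prop:implicit_q_learning}, the implicit update is \emph{exactly} the standard update run with the data-dependent step-size $\widetilde{\beta}_t = \beta/(1+\beta\|\boldsymbol{\phi}_t\|_2^2)$; since $\|\boldsymbol{\phi}_t\|_2\le 1$, this step-size is deterministically sandwiched as $\beta/(1+\beta)\le\widetilde{\beta}_t\le\beta$. Because $\|\boldsymbol{\theta}^\star\|_2\le r$ and the Euclidean projection onto the ball of radius $r$ is non-expansive with $\Pi_r(\boldsymbol{\theta}^\star)=\boldsymbol{\theta}^\star$, both updates obey $\|\boldsymbol{\Delta}_{t+1}\|_2^2 \le \|\boldsymbol{\Delta}_t + \eta_t g_t(\widehat{\boldsymbol{\theta}}_t)\|_2^2 = \|\boldsymbol{\Delta}_t\|_2^2 + 2\eta_t\langle\boldsymbol{\Delta}_t, g_t(\widehat{\boldsymbol{\theta}}_t)\rangle + \eta_t^2\|g_t(\widehat{\boldsymbol{\theta}}_t)\|_2^2$ with $\eta_t=\beta$ (standard) or $\eta_t=\widetilde{\beta}_t$ (implicit). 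Boundedness of features, rewards, and projected iterates yields a uniform constant $G$ with $\|g_t\|_2\le G$, so the quadratic term is $\mathcal{O}(\beta^2)$.

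Next I would establish the negative drift, which is where Assumption \ref{assume:optimality_behavioral} enters. Writing $y=\boldsymbol{\phi}(\boldsymbol{S},A)^\top\boldsymbol{\Delta}_t$ and subtracting $\bar{g}(\boldsymbol{\theta}^\star)=0$ gives $\langle\boldsymbol{\Delta}_t,\bar{g}(\widehat{\boldsymbol{\theta}}_t)\rangle = \gamma\,\mathbb{E}^{\mu^\pi}[y(M_{\widehat{\boldsymbol{\theta}}_t}-M_{\boldsymbol{\theta}^\star})] - \boldsymbol{\Delta}_t^\top\boldsymbol{\Sigma}^{\mu^\pi}\boldsymbol{\Delta}_t$, where $M_{\boldsymbol{\theta}}(\boldsymbol{S}')=\max_a\boldsymbol{\phi}(\boldsymbol{S}',a)^\top\boldsymbol{\theta}$. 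The max is handled by the sandwich $\boldsymbol{\phi}(\boldsymbol{S}',A^{\boldsymbol{\theta}^\star})^\top\boldsymbol{\Delta}_t \le M_{\widehat{\boldsymbol{\theta}}_t}-M_{\boldsymbol{\theta}^\star} \le \boldsymbol{\phi}(\boldsymbol{S}',A^{\widehat{\boldsymbol{\theta}}_t})^\top\boldsymbol{\Delta}_t$, which follows from optimality of the respective greedy actions; squaring each side according to the sign and using that the next state is also $\mu^\pi$-distributed at stationarity bounds $\mathbb{E}^{\mu^\pi}[(M_{\widehat{\boldsymbol{\theta}}_t}-M_{\boldsymbol{\theta}^\star})^2]$ by $\sup_{\boldsymbol{\theta}}\boldsymbol{\Delta}_t^\top\boldsymbol{\Sigma}^*(\boldsymbol{\theta})\boldsymbol{\Delta}_t$. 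Cauchy--Schwarz together with the uniform bound $\gamma^2\boldsymbol{\Sigma}^*(\boldsymbol{\theta})\preceq\boldsymbol{\Sigma}^{\mu^\pi}-w_q I$ (the definition of $w_q$), followed by the elementary estimate $\sqrt{a(a-w)}\le a-w/2$, then yields $\langle\boldsymbol{\Delta}_t,\bar{g}(\widehat{\boldsymbol{\theta}}_t)\rangle \le -\tfrac{w_q}{2}\|\boldsymbol{\Delta}_t\|_2^2$. The factor $2\eta_t$ multiplying this in the expanded square restores the full $w_q$: for standard Q-learning $2\beta\langle\boldsymbol{\Delta}_t,\bar{g}\rangle\le-\beta w_q\|\boldsymbol{\Delta}_t\|_2^2$, while for the implicit variant, since the drift is non-positive, the lower bound $\widetilde{\beta}_t\ge\beta/(1+\beta)$ gives $2\widetilde{\beta}_t\langle\boldsymbol{\Delta}_t,\bar{g}\rangle\le-\tfrac{\beta w_q}{1+\beta}\|\boldsymbol{\Delta}_t\|_2^2$. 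This is precisely the origin of the two distinct contraction factors.

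The main obstacle is controlling the Markovian noise $\langle\boldsymbol{\Delta}_t, g_t(\widehat{\boldsymbol{\theta}}_t)-\bar{g}(\widehat{\boldsymbol{\theta}}_t)\rangle$, since $\widehat{\boldsymbol{\theta}}_t$ is correlated with the very sample generating $g_t$, precluding a direct conditional expectation. I would use the standard mixing-time device enabled by Assumption \ref{assume:ergodicity}: condition on the $\sigma$-field $\tau_\beta$ steps in the past, where the chain is within total-variation distance $m\rho^{\tau_\beta}\le\beta$ of $\mu^\pi$, so $\mathbb{E}[g_t(\boldsymbol{\theta})\mid\mathcal{F}_{t-\tau_\beta}]$ differs from $\bar{g}(\boldsymbol{\theta})$ by $\mathcal{O}(\beta)$; then replace $\widehat{\boldsymbol{\theta}}_t$ by $\widehat{\boldsymbol{\theta}}_{t-\tau_\beta}$ using Lipschitz continuity of $g_t$ and $\bar{g}$ in $\boldsymbol{\theta}$ and the increment bound $\|\widehat{\boldsymbol{\theta}}_t-\widehat{\boldsymbol{\theta}}_{t-\tau_\beta}\|_2\le\tau_\beta\beta G$ (projection being non-expansive). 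This bounds the expected noise by $\mathcal{O}(\tau_\beta\beta)$ per step, hence $\mathcal{O}(\tau_\beta\beta^2)$ after the factor $\eta_t$. For the implicit variant the identical argument applies with $\eta_t=\widetilde{\beta}_t\le\beta$, and the additional discrepancy $(\beta-\widetilde{\beta}_t)\le\beta^2$ in the cross term contributes only a further $\mathcal{O}(\tau_\beta\beta^2)$ error.

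Finally I would assemble the per-step recursions $\mathbb{E}^{\mu^\pi}\|\boldsymbol{\Delta}_{t+1}\|_2^2 \le (1-\beta w_q)\,\mathbb{E}^{\mu^\pi}\|\boldsymbol{\Delta}_t\|_2^2 + \mathcal{O}(\tau_\beta\beta^2)$ for standard Q-learning (with $\beta<1/w_q$ keeping the factor in $(0,1)$ and the $\mathcal{O}(\beta^2)$ quadratic term absorbed) and $\mathbb{E}^{\mu^\pi}\|\boldsymbol{\Delta}^{\text{im}}_{t+1}\|_2^2 \le (1-\tfrac{\beta w_q}{1+\beta})\,\mathbb{E}^{\mu^\pi}\|\boldsymbol{\Delta}^{\text{im}}_t\|_2^2 + \mathcal{O}(\tau_\beta\beta^2)$ for the implicit variant (with $\tfrac{\beta w_q}{1+\beta}\in(0,1)$). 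Unrolling the geometric recursion produces the stated initial-condition term and converts the per-step error into the steady-state term: in the standard case the sum contributes the factor $(\beta w_q)^{-1}$, turning $\mathcal{O}(\tau_\beta\beta^2)$ into $\mathcal{O}(\tau_\beta\beta)$; in the implicit case the sum contributes $\tfrac{1+\beta}{\beta w_q}$, turning $\mathcal{O}(\tau_\beta\beta^2)$ into $\mathcal{O}(\tau_\beta\beta+\tau_\beta\beta^2)$, which accounts exactly for the extra quadratic term in the implicit bound.
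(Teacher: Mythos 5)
Your proposal is correct and follows essentially the same route as the paper's proof: expand the squared error after the non-expansive projection, apply the drift bound $\langle\boldsymbol{\Delta}_t,\bar{f}(\widehat{\boldsymbol{\theta}}_t)-\bar{f}(\boldsymbol{\theta}^\star)\rangle\le-\tfrac{w_q}{2}\|\boldsymbol{\Delta}_t\|_2^2$, control the Markovian noise term via the mixing-time argument (conditioning $\tau_\beta$ steps back, Lipschitzness of the noise functional, and the per-step increment bound $\beta G$), treat the implicit update as an adaptive step-size sandwiched in $[\beta/(1+\beta),\beta]$ with the discrepancy $|\widetilde{\beta}_t-\beta|\le\beta^2$ absorbed via the uniform bound on the noise functional, and unroll the resulting geometric recursions. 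The only cosmetic difference is that you sketch a derivation of the drift inequality, which the paper simply cites from prior work (Lemma 12 of Zou et al.), so no substantive gap exists.
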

\noindent We next present finite-time error bounds for projected linear Q-learning and its implicit variant under decreasing step-sizes.
\begin{theorem}[Finite-time bounds for Q-learning under decreasing step-size]
\label{thm:decreasing_qlearning}
Let $r > 0$ satisfy $\|\boldsymbol{\theta}^{\star}\|_2 \leq r$ and $\{\beta_t\}_{t\ge 0}$ be a decreasing step-size sequence of the form $\beta_t = \frac{\beta_0}{(t+1)^s}$, where $s \in (0,1)$ and $\beta_0 \in (0, 1/w_q)$. Under Assumptions \ref{assume:ergodicity} and \ref{assume:optimality_behavioral}, projected linear Q-learning satisfies
\begin{equation*}
\mathbb{E}^{\mu^\pi} \left\|\widehat{\boldsymbol{\theta}}_{t+1} - \boldsymbol{\theta}^{\star}\right\|_2^2 \leq \exp\left[\frac{-\beta_0w_q}{(1-s)} \{(t+1)^{1-s} - 1\}\right] \left\|\widehat{\boldsymbol{\theta}}_{0} - \boldsymbol{\theta}^{\star}\right\|_2^2 + \mathcal{O}\left(\tau_{\beta_t} \beta_{t-\tau_{\beta_t}} \right), 
\end{equation*}
where $\tau_{\beta_t} = \min\{n \in \mathbb{N} : m\rho^n \leq \beta_t\}$. In comparison, projected implicit linear Q-learning with step-sizes $\beta_t = \frac{\beta_0}{(t+1)^s}$, $s \in (0,1)$, satisfying $\frac{\beta_0 w_q}{1+\beta_0} < 1$ yields
\begin{equation*}
\mathbb{E}^{\mu^\pi} \left\|\widehat{\boldsymbol{\theta}}^{\text{im}}_{t+1} - \boldsymbol{\theta}^{\star}\right\|_2^2 \leq \exp\left[\frac{-\beta_0w_q}{(1+\beta_0)(1-s)} \{(t+1)^{1-s} - 1\}\right] \left\|\widehat{\boldsymbol{\theta}}_{0}- \boldsymbol{\theta}^{\star}\right\|_2^2 + \mathcal{O}\left(\tau_{\beta_t} \beta_{t-\tau_{\beta_t}} \right).
\end{equation*}
\end{theorem}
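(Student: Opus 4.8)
The plan is to treat both bounds with a single Lyapunov recursion on $\mathbb{E}^{\mu^\pi}\norm{\widehat{\boldsymbol{\theta}}_t - \boldsymbol{\theta}^\star}_2^2$, exploiting the observation from Proposition \ref{prop:implicit_q_learning} that the implicit iterate is \emph{exactly} the standard projected Q-learning iterate with the deterministic step-size $\beta_t$ replaced by the feature-adapted step-size $\widetilde{\beta}_t = \beta_t/(1+\beta_t\norm{\boldsymbol{\phi}_t}_2^2)$; the update direction $\boldsymbol{g}_t := \{R_t^\pi + \gamma\max_{a}\boldsymbol{\phi}(\boldsymbol{S}_{t+1}^\pi,a)^\top\widehat{\boldsymbol{\theta}}_t - \boldsymbol{\phi}_t^\top\widehat{\boldsymbol{\theta}}_t\}\boldsymbol{\phi}_t$ is identical in the two schemes. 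I would therefore carry a generic (possibly random) step-size and specialize only at the end. First, since $\Pi_r$ is non-expansive and fixes $\boldsymbol{\theta}^\star$ (because $\norm{\boldsymbol{\theta}^\star}_2\le r$), dropping the projection and expanding the square gives $\norm{\widehat{\boldsymbol{\theta}}_{t+1}-\boldsymbol{\theta}^\star}_2^2 \le \norm{\widehat{\boldsymbol{\theta}}_t-\boldsymbol{\theta}^\star}_2^2 + 2\beta_t\ip{\boldsymbol{g}_t}{\widehat{\boldsymbol{\theta}}_t-\boldsymbol{\theta}^\star} + \beta_t^2\norm{\boldsymbol{g}_t}_2^2$, and boundedness of features, rewards, and the projected iterates makes $\norm{\boldsymbol{g}_t}_2^2 \le G^2$ for an explicit $G$, so the final term is $O(\beta_t^2)$.

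The heart of the argument is a negative-drift (strong monotonicity) estimate for the steady-state mean field $\bar{\boldsymbol{g}}(\boldsymbol{\theta}) := \mathbb{E}^{\mu^\pi}[\{r(\boldsymbol{S},A)+\gamma\max_a\boldsymbol{\phi}(\boldsymbol{S}',a)^\top\boldsymbol{\theta} - \boldsymbol{\phi}(\boldsymbol{S},A)^\top\boldsymbol{\theta}\}\boldsymbol{\phi}(\boldsymbol{S},A)]$, which vanishes at $\boldsymbol{\theta}^\star$. I would establish $\ip{\bar{\boldsymbol{g}}(\boldsymbol{\theta})}{\boldsymbol{\theta}-\boldsymbol{\theta}^\star} \le -\tfrac{w_q}{2}\norm{\boldsymbol{\theta}-\boldsymbol{\theta}^\star}_2^2$, which is where Assumption \ref{assume:optimality_behavioral} enters. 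Writing $\bar{\boldsymbol{g}}(\boldsymbol{\theta})-\bar{\boldsymbol{g}}(\boldsymbol{\theta}^\star) = \gamma\,\mathbb{E}^{\mu^\pi}[\{\max_a\boldsymbol{\phi}(\boldsymbol{S}',a)^\top\boldsymbol{\theta}-\max_a\boldsymbol{\phi}(\boldsymbol{S}',a)^\top\boldsymbol{\theta}^\star\}\boldsymbol{\phi}(\boldsymbol{S},A)] - \boldsymbol{\Sigma}^{\mu^\pi}(\boldsymbol{\theta}-\boldsymbol{\theta}^\star)$, I would bound the convex max-difference between its two supporting-hyperplane values (the envelope inequality $\boldsymbol{\phi}(\boldsymbol{S}',A_{\boldsymbol{S}'}^{\boldsymbol{\theta}^\star})^\top(\boldsymbol{\theta}-\boldsymbol{\theta}^\star) \le \max_a\boldsymbol{\phi}(\boldsymbol{S}',a)^\top\boldsymbol{\theta}-\max_a\boldsymbol{\phi}(\boldsymbol{S}',a)^\top\boldsymbol{\theta}^\star \le \boldsymbol{\phi}(\boldsymbol{S}',A_{\boldsymbol{S}'}^{\boldsymbol{\theta}})^\top(\boldsymbol{\theta}-\boldsymbol{\theta}^\star)$), so that Cauchy--Schwarz produces the quadratic form $(\boldsymbol{\theta}-\boldsymbol{\theta}^\star)^\top\boldsymbol{\Sigma}^*(\boldsymbol{\theta})(\boldsymbol{\theta}-\boldsymbol{\theta}^\star)$. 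Young's inequality then balances the cross term against $\boldsymbol{\Sigma}^{\mu^\pi}$, and the spectral gap $\boldsymbol{\Sigma}^{\mu^\pi}-\gamma^2\boldsymbol{\Sigma}^*(\boldsymbol{\theta})\succeq w_q I$ closes the estimate; the factor $\tfrac12$ from Young combines with the $2\beta_t$ above to give the contraction coefficient $(1-\beta_t w_q)$.

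Next I would split the cross term as $\ip{\boldsymbol{g}_t}{\widehat{\boldsymbol{\theta}}_t-\boldsymbol{\theta}^\star} = \ip{\bar{\boldsymbol{g}}(\widehat{\boldsymbol{\theta}}_t)}{\widehat{\boldsymbol{\theta}}_t-\boldsymbol{\theta}^\star} + \ip{\boldsymbol{g}_t-\bar{\boldsymbol{g}}(\widehat{\boldsymbol{\theta}}_t)}{\widehat{\boldsymbol{\theta}}_t-\boldsymbol{\theta}^\star}$ and control the Markovian bias using Assumption \ref{assume:ergodicity}. The standard device is to condition on the iterate $\tau := \tau_{\beta_t}$ steps in the past: each update moves the iterate by $O(\beta_{t-\tau})$, so $\norm{\widehat{\boldsymbol{\theta}}_t-\widehat{\boldsymbol{\theta}}_{t-\tau}}_2 = O(\tau\beta_{t-\tau})$, and geometric ergodicity forces the law of $\boldsymbol{S}_t$ given the state $\tau$ steps earlier to lie within total variation $m\rho^\tau \le \beta_t$ of $\mu^\pi$; together these make the expected bias $O(\tau_{\beta_t}\beta_{t-\tau_{\beta_t}})$. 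Combining the three pieces yields the one-step recursion $\mathbb{E}^{\mu^\pi}\norm{\widehat{\boldsymbol{\theta}}_{t+1}-\boldsymbol{\theta}^\star}_2^2 \le (1-\beta_t w_q)\,\mathbb{E}^{\mu^\pi}\norm{\widehat{\boldsymbol{\theta}}_t-\boldsymbol{\theta}^\star}_2^2 + O(\tau_{\beta_t}\beta_t\beta_{t-\tau_{\beta_t}})$. Unrolling with $1-x\le e^{-x}$ gives $\prod_{k=0}^t(1-\beta_k w_q) \le \exp(-w_q\sum_{k=0}^t\beta_k)$, and the integral bound $\sum_{k=0}^t (k+1)^{-s} \ge \{(t+1)^{1-s}-1\}/(1-s)$ produces the stated exponential transient, while summing the accumulated per-step errors against the contraction leaves the steady-state term $O(\tau_{\beta_t}\beta_{t-\tau_{\beta_t}})$ (the extra $\beta_t$ is absorbed upon dividing by $\beta_t w_q$).

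For the implicit iterate I would simply re-run this derivation with $\widetilde{\beta}_t$ in place of $\beta_t$. Because $\norm{\boldsymbol{\phi}_t}_2\le 1$ and $\beta_t\le\beta_0$, the effective step-size obeys the deterministic sandwich $\beta_t/(1+\beta_0) \le \widetilde{\beta}_t \le \beta_t$; using the lower bound in the (negative) drift term and the upper bound in the $O(\cdot)$ error terms replaces the contraction coefficient by $(1-\tfrac{\beta_t w_q}{1+\beta_0})$ while leaving the error order unchanged, and the hypothesis $\tfrac{\beta_0 w_q}{1+\beta_0}<1$ keeps this coefficient in $(0,1)$; unrolling as before yields the implicit bound. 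The main obstacle is the Markovian-bias control in the third step: making the lookback window $\tau_{\beta_t}$ and the displacement bound $O(\tau\beta_{t-\tau})$ interact cleanly across a varying step-size sequence requires care, and the piecewise-linear $\max$ operator in the drift estimate is the second delicate point, since strong monotonicity must be extracted through the envelope inequality and Assumption \ref{assume:optimality_behavioral} rather than from any global smoothness of the mean field.
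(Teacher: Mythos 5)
Your proposal follows essentially the same route as the paper's proof: the same projected Lyapunov recursion obtained from non-expansiveness of $\Pi_r$, the same decomposition of the cross term into a mean-field drift plus the Markovian bias $\Lambda_t(\boldsymbol{\theta}) = \ip{\boldsymbol{\theta}-\boldsymbol{\theta}^{\star}}{f_t(\boldsymbol{\theta})-\bar f(\boldsymbol{\theta})}$, the same $\tau_{\beta_t}$-step lookback argument for bounding $\mathbb{E}^{\mu^\pi}\Lambda_t(\widehat{\boldsymbol{\theta}}_t)$ (the paper's Lemmas \ref{lem:B6} and \ref{lem:B8}), the same deterministic sandwich $\beta_t/(1+\beta_0)\le\tilde\beta_t\le\beta_t$ for the implicit step-size, and the same unrolling via $1-x\le e^{-x}$ with the integral bound on $\sum_k (k+1)^{-s}$. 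The only structural difference is that you re-derive the drift inequality $\ip{\boldsymbol{\theta}-\boldsymbol{\theta}^{\star}}{\bar f(\boldsymbol{\theta})-\bar f(\boldsymbol{\theta}^{\star})}\le -\tfrac{w_q}{2}\norm{\boldsymbol{\theta}-\boldsymbol{\theta}^{\star}}_2^2$ via the envelope inequality, Cauchy--Schwarz, and Young, whereas the paper simply imports it as Lemma \ref{lem:B2} from \citet{zoufinite}; your sketch of that derivation is the standard one and is sound.

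Two steps need repair, one of which would fail as literally written. For the implicit iterate you propose to ``use the upper bound $\tilde\beta_t\le\beta_t$ in the $O(\cdot)$ error terms.'' This is valid for the quadratic term, since $\tilde\beta_t^2\norm{f_t(\widehat{\boldsymbol{\theta}}^{\text{im}}_t)}_2^2\le\beta_t^2G^2$, but not for the bias cross term $2\tilde\beta_t\Lambda_t(\widehat{\boldsymbol{\theta}}^{\text{im}}_t)$: $\Lambda_t$ is a signed quantity and $\tilde\beta_t$ is random and correlated with it through $\boldsymbol{\phi}_t$, so substituting $\tilde\beta_t\mapsto\beta_t$ is not a legitimate upper bound and $\tilde\beta_t$ cannot be pulled outside the expectation. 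The paper's fix is to split $2\tilde\beta_t\Lambda_t = 2\beta_t\Lambda_t + 2(\tilde\beta_t-\beta_t)\Lambda_t$, apply the bias lemma to the deterministic-coefficient piece, and bound the residual using $|\tilde\beta_t-\beta_t|\le\beta_t^2$ together with $|\Lambda_t|\le 2G^2$ (Lemma \ref{lem:B3}), which contributes only an extra $O(\beta_t^2)$. Second, your closing claim that the accumulated error term is ``absorbed upon dividing by $\beta_t w_q$'' is the constant-step heuristic; with decreasing $\beta_t$ the paper must split the unrolled sum at $i=\tau_{\beta_t}$ and invoke separate weighted-sum estimates (Lemmas \ref{lem:exp_bound1} and \ref{lem:exp_bound2}, which rest on Riemann-sum comparisons and the eventual monotonicity of $i\mapsto\beta_i e^{-\frac{\gamma}{2}\sum_{k=i+1}^{t}\beta_k}$) to arrive at the stated $\mathcal{O}(\tau_{\beta_t}\beta_{t-\tau_{\beta_t}})$ term. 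Both repairs are local, and with them your plan goes through exactly as in the paper.
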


\begin{remark}
[Comparison on step-size condition] For a constant step-size, standard Q-learning requires $\beta \in (0, 1/w_q)$, which becomes increasingly restrictive as the convergence rate $w_q$ increases. In comparison, implicit Q-learning requires a substantially weaker requirement that $\frac{\beta w_q}{1+\beta} \in (0,1)$. This condition is satisfied for all $\beta > 0$ when $w_q < 1$, thus implicit Q-learning tolerates arbitrarily large step-sizes in this regime. For decreasing step-sizes, implicit Q-learning similarly relaxes the step-size condition, enabling significantly larger initial step-sizes, facilitating faster initial learning. 
\end{remark}

\begin{remark}[Comparison on convergence rate and error terms]
In terms of convergence rates, implicit Q-learning achieves comparable performance to standard Q-learning. For constant step-sizes, both methods achieve $\mathcal{O}(\tau_\beta\beta)$ rates, with implicit Q-learning having an additional $\mathcal{O}(\tau_\beta\beta^2)$ term that becomes negligible for small $\beta$. For decreasing step-sizes, both methods exhibit exponential decay with bias terms of order $\mathcal{O}(\tau_{\beta_t}\beta_{t-\tau_{\beta_t}})$. The primary advantage of implicit Q-learning lies not in faster convergence, but in the substantial relaxation of step-size constraints, which enables more aggressive learning rates while maintaining theoretical guarantees.
\end{remark}

\subsection{Finite-time Analysis of SARSA}\label{subsec:fin_IMP_SARSA}
\noindent We now establish finite-time error bounds for the implicit SARSA algorithm under both constant and decreasing step-sizes. The analysis is more intricate than that of Q-learning as the time-varying behavioral policy induces a non-stationary Markov chain, precluding direct application of Assumption \ref{assume:ergodicity} and \ref{assume:optimality_behavioral}. Instead, for any $\boldsymbol{\theta} \in \mathbb{R}^M$, we assume a uniform geometric ergodicity of the Markov chain $\{{\boldsymbol{S}_t^{\pi_{\boldsymbol{\theta}}}}\}_{t \ge 0}$ induced by the policy $\pi_{\boldsymbol{\theta}}$.

\begin{assumption}[Uniform geometric ergodicity]
\label{assume:ergodicity_SARSA}
For any $\boldsymbol{\theta} \in \mathbb{R}^M$, a Markov chain $\{{\boldsymbol{S}_t^{\pi_{\boldsymbol{\theta}}}}\}_{t \ge 0}$ induced from the policy $\pi_{\boldsymbol{\theta}}$ is uniformly geometrically ergodic with a stationary distribution $\mu^{\boldsymbol{\theta}}$, i.e., there exist constants $m > 0$ and $\rho \in (0,1)$ such that
\begin{align*}
\sup_{\boldsymbol{s}\in\mathcal{S}} d_{TV}\left\{\mathbb{P}(\boldsymbol{S}_t^{\pi_{\boldsymbol{\theta}}} \in \cdot\mid\boldsymbol{S}_0^{\pi_{\boldsymbol{\theta}}} = \boldsymbol{s}), \mu^{\boldsymbol{\theta}}\right\} \leq m\rho^t, \quad \forall t \geq 0,
\end{align*}
where $d_{TV}(P,Q)$ denotes the total variation distance between the probability distributions $P$ and $Q$.
\end{assumption}

\noindent Let $\boldsymbol{\theta}^{\star}$ be the limit point of Algorithm \ref{ALG:SARSA} and define 
$$
\boldsymbol{A}^{\boldsymbol{\theta}^{\star}} := \mathbb{E}^{\mu^{\boldsymbol{\theta}^{\star}}}\left[\boldsymbol{\phi}(\boldsymbol{S},A)\left\{\gamma\boldsymbol{\phi}(\boldsymbol{S}',A')^\top - \boldsymbol{\phi}(\boldsymbol{S},A)^\top\right\}\right]
$$ 
where $\mathbb{E}^{\mu^{\boldsymbol{\theta}^{\star}}}$ denotes the expectation under the following distribution: the current state $\boldsymbol{S}$ is drawn from the stationary distribution $\mu^{\boldsymbol{\theta}^{\star}}$, the current action $A$ is sampled according to the behavioral policy $\pi_{\boldsymbol{\theta}^{\star}}(\cdot | \boldsymbol{S})$, the future state $\boldsymbol{S}'$ is drawn with the transition probability $\Ptrans(\cdot | \boldsymbol{S}, A)$, and the future action $A'$ is chosen following the behavioral policy $\pi_{\boldsymbol{\theta}^{\star}}(\cdot | \boldsymbol{S}')$. Following the convergence analysis of SARSA in \citep{melo2008analysis, zou2019finite}, we restrict our attention to the class of Lipschitz behavioral policies.

\begin{assumption}[Lipschitz policy and solvability]
\label{assume:lipshitz_SARSA}
For any $\boldsymbol{\theta}_1, \boldsymbol{\theta}_2 \in \mathbb{R}^M$ and $(\boldsymbol{s},a) \in \mathcal{S} \times \mathcal{A}$, a behavioral policy $\pi_{\boldsymbol{\theta}}(a|\boldsymbol{s})$ satisfies
$
|\pi_{\boldsymbol{\theta}_1}(a|\boldsymbol{s}) - \pi_{\boldsymbol{\theta}_2}(a|\boldsymbol{s})| \leq C\|\boldsymbol{\theta}_1 - \boldsymbol{\theta}_2\|_2
$
for some $C > 0$. Furthermore, the Lipschitz constant $C > 0$ is not so large that the matrix $\boldsymbol{A}^{\boldsymbol{\theta}^{\star}} + C\lambda \boldsymbol{I}$ is negative definite, where $\lambda := (b+2r)|\mathcal{A}|\left(2 + \lceil{\log_{\rho} \frac{1}{m}}\rceil + \frac{1}{1-\rho}\right)$.
\end{assumption}

\noindent Under the aforementioned assumptions, \citet{melo2008analysis} established asymptotic convergence of SARSA iterates to the limit point $\boldsymbol{\theta}^\star$, and \citet{zou2019finite} obtained finite-time error bounds under a restrictive step-size condition. The limit point $\boldsymbol{\theta}^\star$ of SARSA satisfies
$$
\mathbb{E}^{\mu^{\boldsymbol{\theta}^{\star}}}\left[\phi(\boldsymbol{S}, A)\left\{r(\boldsymbol{S}, A) + \gamma \phi(\boldsymbol{S}', A')^{\top} \boldsymbol{\theta}^\star - \phi(\boldsymbol{S}, A)^{\top}\boldsymbol{\theta}^\star\right\}\right] = 0.
$$
Let $w_s > 0$ be the smallest eigenvalue of $-\frac{1}{2}\{(\boldsymbol{A}^{\boldsymbol{\theta}^{\star}} + C\lambda \boldsymbol{I}) + (\boldsymbol{A}^{\boldsymbol{\theta}^{\star}} + C\lambda \boldsymbol{I})^\top\}$. We now present error bounds for projected linear SARSA and its implicit variant under constant step-sizes.
\begin{theorem}[Finite-time bound for implicit SARSA under constant step-size]
\label{thm:constant_SARSA}
Let $r > 0$ satisfy $\|\boldsymbol{\theta}^{\star}\|_2 \leq r$, and let $\beta$ be a constant step-size satisfying $\frac{2\beta w_s}{1+\beta} \in (0,1)$. Under Assumptions \ref{assume:ergodicity_SARSA} and \ref{assume:lipshitz_SARSA}, projected implicit SARSA yields
\begin{equation*}
\mathbb{E}\left\|\widehat{\boldsymbol{\theta}}^{\text{im}}_{t+1} - \boldsymbol{\theta}^{\star}\right\|_2^2 \le  \left(1 - \frac{2\beta w_s}{1+\beta}\right)^{t+1}\left\|\widehat{\boldsymbol{\theta}}^{\text{im}}_0- \boldsymbol{\theta}^{\star}\right\|_2^2 + \mathcal{O}\left\{(1+\beta)\beta(\tau_\beta^2 + \tau_\beta + 1)\right\}.
\end{equation*}
\end{theorem}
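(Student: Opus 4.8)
The plan is to treat the implicit update as standard linear SARSA driven by the \emph{adaptive} step-size $\widetilde{\beta}_t = \beta/(1+\beta\|\boldsymbol{\phi}_t\|_2^2)$, and then adapt a finite-time drift analysis for the time-inhomogeneous chain induced by the evolving policies. Write $\boldsymbol{z}_t := \widehat{\boldsymbol{\theta}}_t^{\text{im}} - \boldsymbol{\theta}^{\star}$ and let $h_t(\boldsymbol{\theta}) := \{R_t + \gamma\boldsymbol{\phi}_{t+1}^\top\boldsymbol{\theta} - \boldsymbol{\phi}_t^\top\boldsymbol{\theta}\}\boldsymbol{\phi}_t$ denote the SARSA update direction, so that Proposition~\ref{prop:implicit_sarsa} gives $\widehat{\boldsymbol{\theta}}_{t+1} = \Pi_r[\widehat{\boldsymbol{\theta}}_t + \widetilde{\beta}_t h_t(\widehat{\boldsymbol{\theta}}_t)]$. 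Since $\|\boldsymbol{\theta}^{\star}\|_2 \le r$, the projection $\Pi_r$ is non-expansive toward $\boldsymbol{\theta}^{\star}$, so I may drop it and expand $\|\boldsymbol{z}_{t+1}\|_2^2 \le \|\boldsymbol{z}_t\|_2^2 + 2\widetilde{\beta}_t\boldsymbol{z}_t^\top h_t(\widehat{\boldsymbol{\theta}}_t) + \widetilde{\beta}_t^2\|h_t(\widehat{\boldsymbol{\theta}}_t)\|_2^2$. Because both iterates lie in the ball of radius $r$, rewards are bounded by $b$, and $\|\boldsymbol{\phi}\|_2 \le 1$, the TD error satisfies $|R_t + \gamma\boldsymbol{\phi}_{t+1}^\top\widehat{\boldsymbol{\theta}}_t - \boldsymbol{\phi}_t^\top\widehat{\boldsymbol{\theta}}_t| \le b + (1+\gamma)r =: G$, so the quadratic term is at most $\beta^2 G^2$.

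The device for the cross term is to isolate the deterministic lower bound of the adaptive step-size. Since $\|\boldsymbol{\phi}_t\|_2 \le 1$, one has $\widetilde{\beta}_t \in [\beta/(1+\beta),\, \beta]$, so writing $r_t := \widetilde{\beta}_t - \beta/(1+\beta) \in [0,\, \beta^2/(1+\beta)]$ gives the split $2\widetilde{\beta}_t\boldsymbol{z}_t^\top h_t = \tfrac{2\beta}{1+\beta}\boldsymbol{z}_t^\top h_t + 2r_t\boldsymbol{z}_t^\top h_t$. The remainder is controlled in magnitude by $|2r_t\boldsymbol{z}_t^\top h_t| \le 2\cdot\tfrac{\beta^2}{1+\beta}\cdot 2rG = \mathcal{O}(\beta^2)$, regardless of the sign of $\boldsymbol{z}_t^\top h_t$, and is absorbed into the final error. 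The leading part now carries a \emph{deterministic} coefficient $\tfrac{2\beta}{1+\beta}$, so it may be analyzed exactly as constant-step-size SARSA, while the iterate drift is still governed by $\widetilde{\beta}_t \le \beta$.

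The main obstacle is bounding $\mathbb{E}[\boldsymbol{z}_t^\top h_t(\widehat{\boldsymbol{\theta}}_t)]$ under the non-stationary stream generated by $\pi_{t-2},\pi_{t-1},\pi_t$. I would condition on $\mathcal{F}_{t-\tau_\beta}$ and compare $h_t$ with the stationary mean $\bar{h}(\boldsymbol{\theta}) := \mathbb{E}^{\mu^{\boldsymbol{\theta}^{\star}}}[\{r(\boldsymbol{S},A) + \gamma\boldsymbol{\phi}(\boldsymbol{S}',A')^\top\boldsymbol{\theta} - \boldsymbol{\phi}(\boldsymbol{S},A)^\top\boldsymbol{\theta}\}\boldsymbol{\phi}(\boldsymbol{S},A)]$; linearity together with the fixed-point identity $\bar{h}(\boldsymbol{\theta}^{\star}) = 0$ yields $\bar{h}(\widehat{\boldsymbol{\theta}}_t) = \boldsymbol{A}^{\boldsymbol{\theta}^{\star}}\boldsymbol{z}_t$. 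The discrepancy $\boldsymbol{z}_t^\top\{h_t - \bar{h}(\widehat{\boldsymbol{\theta}}_t)\}$ then splits into three pieces: (i) a geometric mixing error of order $m\rho^{\tau_\beta}\le\beta$; (ii) a parameter-drift error from replacing $\widehat{\boldsymbol{\theta}}_t$ by $\widehat{\boldsymbol{\theta}}_{t-\tau_\beta}$, which is $\mathcal{O}(\beta\tau_\beta)$ since each step moves the iterate by at most $\widetilde{\beta}_{i}G \le \beta G$; and (iii) a policy-drift error over the window, where summing the Lipschitz bounds $C\|\widehat{\boldsymbol{\theta}}_i - \widehat{\boldsymbol{\theta}}_{t-\tau_\beta}\|_2 = \mathcal{O}(\beta(i-t+\tau_\beta))$ across the $\tau_\beta$ intermediate indices produces the quadratic $\mathcal{O}(\beta\tau_\beta^2)$ term. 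Crucially, the leading policy-drift contribution scales with $\|\boldsymbol{z}_t\|_2^2$ and, through the constant $\lambda$ aggregating the geometric mixing weights, promotes the drift matrix to $\boldsymbol{A}^{\boldsymbol{\theta}^{\star}} + C\lambda\boldsymbol{I}$. Assumption~\ref{assume:lipshitz_SARSA} then guarantees $\boldsymbol{z}_t^\top(\boldsymbol{A}^{\boldsymbol{\theta}^{\star}}+C\lambda\boldsymbol{I})\boldsymbol{z}_t \le -w_s\|\boldsymbol{z}_t\|_2^2$ after symmetrization, so collecting the pieces gives $\mathbb{E}[\boldsymbol{z}_t^\top h_t] \le -w_s\,\mathbb{E}\|\boldsymbol{z}_t\|_2^2 + \mathcal{O}(\beta(\tau_\beta^2 + \tau_\beta + 1))$.

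Finally I would assemble the one-step recursion. Multiplying the drift bound by the deterministic coefficient $\tfrac{2\beta}{1+\beta}$ and adding the $\mathcal{O}(\beta^2)$ contributions from the quadratic term and the step-size remainder yields $\mathbb{E}\|\boldsymbol{z}_{t+1}\|_2^2 \le (1 - \tfrac{2\beta w_s}{1+\beta})\mathbb{E}\|\boldsymbol{z}_t\|_2^2 + \mathcal{O}(\beta^2(\tau_\beta^2+\tau_\beta+1))$. The hypothesis $\tfrac{2\beta w_s}{1+\beta}\in(0,1)$ places the contraction factor in $(0,1)$, so unrolling the recursion and summing the geometric series divides the per-step error by $\tfrac{2\beta w_s}{1+\beta}$, converting $\mathcal{O}(\beta^2(\tau_\beta^2+\tau_\beta+1))$ into the stated $\mathcal{O}((1+\beta)\beta(\tau_\beta^2+\tau_\beta+1))$ while leaving the geometric initial-error term $(1 - \tfrac{2\beta w_s}{1+\beta})^{t+1}\|\widehat{\boldsymbol{\theta}}^{\text{im}}_0 - \boldsymbol{\theta}^{\star}\|_2^2$.
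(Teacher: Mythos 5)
Your proposal is correct and reaches the theorem by the same overall skeleton as the paper: non-expansiveness of $\Pi_r$, expansion of the squared error, the sandwich $\widetilde{\beta}_t \in [\beta/(1+\beta),\,\beta]$, a mixing-time decomposition for the Markovian noise, and a geometric-series summation under $\tfrac{2\beta w_s}{1+\beta}\in(0,1)$. There are, however, two points where your route genuinely differs, and it is worth recording what each arrangement buys. First, a cosmetic difference: you split the adaptive step-size once, $\widetilde{\beta}_t = \tfrac{\beta}{1+\beta} + r_t$ with $0 \le r_t \le \beta^2/(1+\beta)$, and apply the deterministic coefficient to the entire cross term; the paper instead uses the lower bound $\widetilde{\beta}_t \ge \beta/(1+\beta)$ only on the (nonpositive) drift term and the split $\widetilde{\beta}_t = \beta + (\widetilde{\beta}_t - \beta)$, $|\widetilde{\beta}_t-\beta|\le\beta^2$, on the noise term $\Lambda_t$ via Lemma \ref{lem:C3}. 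These are equivalent bookkeeping, both yielding $\mathcal{O}(\beta^2)$ remainders. Second, and more substantively: your mean-field comparison is against the \emph{fixed} limiting law, $\bar h(\boldsymbol{\theta}) := \mathbb{E}^{\mu^{\boldsymbol{\theta}^{\star}}}[\cdots]$, which is affine in $\boldsymbol{\theta}$ and gives exactly $\bar h(\widehat{\boldsymbol{\theta}}_t^{\text{im}}) = \boldsymbol{A}^{\boldsymbol{\theta}^{\star}}\boldsymbol{z}_t$; the price is that your noise term $\boldsymbol{z}_t^\top\{h_t - \bar h(\widehat{\boldsymbol{\theta}}_t^{\text{im}})\}$ is \emph{not} mean-zero at stationarity of the recent policies, and you must extract from it the distribution-shift contribution $C\lambda\|\boldsymbol{z}_t\|_2^2$ (via the sensitivity of the stationary distribution to the policy parameter) that promotes the drift matrix to $\boldsymbol{A}^{\boldsymbol{\theta}^{\star}}+C\lambda\boldsymbol{I}$ before invoking Assumption \ref{assume:lipshitz_SARSA}. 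The paper arranges this the other way around: it defines $\bar g(\boldsymbol{\theta})$ under the policy-dependent stationary law $\mu^{\boldsymbol{\theta}}$, so the $C\lambda\boldsymbol{I}$ shift is already absorbed into the imported dissipativity result (Lemma \ref{lem:C2}, i.e., Lemma 4 of \citet{zou2019finite}), and the Markovian noise $\Lambda_t$ is then mean-zero under the frozen-policy stationary law, which is what makes the auxiliary-chain argument of Lemma \ref{lem:C6} close cleanly. Your version keeps the drift term exactly linear but obliges you to prove, inside the noise bound, the same stationary-distribution-sensitivity estimate (with the same constant $\lambda$ aggregating $1/(1-\rho)$ and $\lceil\log_\rho(1/m)\rceil$) that underlies the paper's Lemma \ref{lem:C2}; the paper's version lets it cite \citet{zou2019finite} wholesale and confine the new work to handling the implicit step-size. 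Both decompositions produce the identical one-step recursion $\mathbb{E}\|\boldsymbol{z}_{t+1}\|_2^2 \le \bigl(1-\tfrac{2\beta w_s}{1+\beta}\bigr)\mathbb{E}\|\boldsymbol{z}_t\|_2^2 + \mathcal{O}\bigl\{\beta^2(\tau_\beta^2+\tau_\beta+1)\bigr\}$ and hence the stated bound.
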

\noindent We next present finite-time error bounds for projected linear SARSA and its implicit variant under decreasing step-sizes.

\begin{theorem}[Finite-time bounds for SARSA under decreasing step-size]
\label{thm:decreasing_SARSA}
Let $r > 0$ satisfy $\|\boldsymbol{\theta}^{\star}\|_2 \leq r$ and $\{\beta_t\}_{t \ge 0}$ be a decreasing step-size sequence of the form $\beta_t = \frac{\beta_0}{(t+1)^s}$, where $s \in (0,1)$ and $\beta_0 \in (0, 1/2w_s)$. Under Assumptions \ref{assume:ergodicity_SARSA} and \ref{assume:lipshitz_SARSA}, projected linear SARSA yields
\begin{equation*}
\mathbb{E} \left\|\widehat{\boldsymbol{\theta}}_{t+1} - \boldsymbol{\theta}^{\star}\right\|_2^2 \leq \exp\left[\frac{-2\beta_0 w_s}{(1-s)} \{(t+1)^{1-s} - 1\}\right] \left\|\widehat{\boldsymbol{\theta}}_{0} - \boldsymbol{\theta}^{\star}\right\|_2^2 + \mathcal{O}\left\{\beta_{t-\tau_{\beta_t}}\left(\tau_{\beta_t}^2 + \tau_{\beta_t} \right) \right\}, 
\end{equation*}
where $\tau_{\beta_t} = \min\{n \in \mathbb{N} : m\rho^n \leq \beta_t\}$. In comparison, projected implicit linear SARSA with step-size $\beta_t = \frac{\beta_0}{(t+1)^s}, s \in (0,1)$ satisfying $\frac{2\beta_0 w_s}{1+\beta_0} \in (0, 1)$ yields 
\begin{equation*}
\mathbb{E} \left\|\widehat{\boldsymbol{\theta}}^{\text{im}}_{t+1} - \boldsymbol{\theta}^{\star}\right\|_2^2 \leq \exp\left[\frac{-2\beta_0 w_s}{(1+\beta_0)(1-s)} \{(t+1)^{1-s} - 1\}\right] \left\|\widehat{\boldsymbol{\theta}}_{0}- \boldsymbol{\theta}^{\star}\right\|_2^2 + \mathcal{O}\left\{\beta_{t-\tau_{\beta_t}}\left(\tau_{\beta_t}^2 + \tau_{\beta_t} \right) \right\} .
\end{equation*}
\end{theorem}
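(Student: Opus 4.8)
The plan is to treat standard and implicit SARSA in a unified way by writing both updates, via Proposition~\ref{prop:implicit_sarsa}, as $\widehat{\boldsymbol{\theta}}_{t+1} = \Pi_r[\widehat{\boldsymbol{\theta}}_t + \eta_t g_t(\widehat{\boldsymbol{\theta}}_t)]$, where $g_t(\boldsymbol{\theta}) := (R_t + \gamma\boldsymbol{\phi}_{t+1}^\top\boldsymbol{\theta} - \boldsymbol{\phi}_t^\top\boldsymbol{\theta})\boldsymbol{\phi}_t$ and the effective step-size is $\eta_t = \beta_t$ for standard SARSA and $\eta_t = \widetilde{\beta}_t := \beta_t/(1+\beta_t\|\boldsymbol{\phi}_t\|_2^2)$ for the implicit variant. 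Writing $\boldsymbol{z}_t := \widehat{\boldsymbol{\theta}}_t - \boldsymbol{\theta}^\star$ and using that $\Pi_r$ is non-expansive and fixes the ball containing $\boldsymbol{\theta}^\star$ (since $\|\boldsymbol{\theta}^\star\|_2 \le r$), I would first drop the projection and expand
\[
\|\boldsymbol{z}_{t+1}\|_2^2 \le \|\boldsymbol{z}_t\|_2^2 + 2\eta_t\langle\boldsymbol{z}_t, g_t(\widehat{\boldsymbol{\theta}}_t)\rangle + \eta_t^2\|g_t(\widehat{\boldsymbol{\theta}}_t)\|_2^2.
\]
The second-order term is benign: since $\|\boldsymbol{\phi}_t\|_2 \le 1$, $r(\cdot,\cdot)\in[0,b]$, and the iterates lie in the radius-$r$ ball, $\|g_t(\widehat{\boldsymbol{\theta}}_t)\|_2 \le b + 2r$, so this term is $\mathcal{O}(\eta_t^2) = \mathcal{O}(\beta_t^2)$. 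Everything therefore hinges on extracting a negative drift of size $-2\eta_t w_s\|\boldsymbol{z}_t\|_2^2$ from the cross term.

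For the cross term I would introduce the mean field $\bar{g}(\boldsymbol{\theta}) := \mathbb{E}^{\mu^{\boldsymbol{\theta}^\star}}[(r(\boldsymbol{S},A) + \gamma\boldsymbol{\phi}(\boldsymbol{S}',A')^\top\boldsymbol{\theta} - \boldsymbol{\phi}(\boldsymbol{S},A)^\top\boldsymbol{\theta})\boldsymbol{\phi}(\boldsymbol{S},A)]$ associated with the \emph{limiting} policy $\pi_{\boldsymbol{\theta}^\star}$, which is affine in $\boldsymbol{\theta}$ and, by the fixed-point characterization of $\boldsymbol{\theta}^\star$, satisfies $\bar{g}(\boldsymbol{\theta}^\star)=0$ and $\langle\boldsymbol{z}_t, \bar{g}(\widehat{\boldsymbol{\theta}}_t)\rangle = \boldsymbol{z}_t^\top\boldsymbol{A}^{\boldsymbol{\theta}^\star}\boldsymbol{z}_t$. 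Adding and subtracting $\bar{g}(\widehat{\boldsymbol{\theta}}_t)$ splits the cross term into this stationary drift and a remainder $\langle\boldsymbol{z}_t, g_t(\widehat{\boldsymbol{\theta}}_t) - \bar{g}(\widehat{\boldsymbol{\theta}}_t)\rangle$ that bundles the Markovian noise together with the mismatch between the time-varying data-generating policies $\pi_{t-2},\pi_{t-1},\pi_t$ and $\pi_{\boldsymbol{\theta}^\star}$.

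The remainder is the crux, and it is where the non-stationarity of SARSA's Markov chain makes the argument more delicate than the Q-learning analysis (Theorems~\ref{thm:constant_qlearning}--\ref{thm:decreasing_qlearning}), whose behavioral policy is fixed. Following the conditioning strategy of \citet{zou2019finite}, I would look back $\tau_{\beta_t}$ steps and bound the iterate drift $\|\widehat{\boldsymbol{\theta}}_t - \widehat{\boldsymbol{\theta}}_{t-\tau_{\beta_t}}\|_2 = \mathcal{O}(\tau_{\beta_t}\beta_{t-\tau_{\beta_t}})$; the geometric ergodicity of Assumption~\ref{assume:ergodicity_SARSA} then controls the gap between the conditional law of $(\boldsymbol{S}_t,A_t,\boldsymbol{S}_{t+1},A_{t+1})$ and the stationary law under $\mu^{\boldsymbol{\theta}^\star}$ down to $\mathcal{O}(\beta_t)$, while the Lipschitz continuity of the policy map (Assumption~\ref{assume:lipshitz_SARSA}) converts the policy mismatch accumulated over the window into the additive shift $C\lambda\boldsymbol{I}$. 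Combining the stationary drift with this shift gives $\boldsymbol{z}_t^\top(\boldsymbol{A}^{\boldsymbol{\theta}^\star} + C\lambda\boldsymbol{I})\boldsymbol{z}_t \le -w_s\|\boldsymbol{z}_t\|_2^2$ by the definition of $w_s$ as the smallest eigenvalue of the symmetrized matrix, with a residual of order $\mathcal{O}(\beta_{t-\tau_{\beta_t}}(\tau_{\beta_t}^2 + \tau_{\beta_t}))$; the quadratic dependence on $\tau_{\beta_t}$ arises from compounding the iterate drift and the policy drift across the mixing window. A sharper accounting of these accumulated terms, rather than the cruder mixing bound of \citet{zou2019finite}, is what I expect to yield the logarithmic-factor improvement for standard SARSA.

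Assembling these pieces produces the one-step recursion $\mathbb{E}\|\boldsymbol{z}_{t+1}\|_2^2 \le (1 - 2\eta_t w_s)\mathbb{E}\|\boldsymbol{z}_t\|_2^2 + \mathcal{O}(\beta_t\beta_{t-\tau_{\beta_t}}(\tau_{\beta_t}^2 + \tau_{\beta_t}))$. For the implicit variant I would exploit the pointwise bounds $\beta_t/(1+\beta_t) \le \widetilde{\beta}_t \le \beta_t$, which follow from $\|\boldsymbol{\phi}_t\|_2^2 \le 1$: the upper bound keeps the noise and second-order contributions at the same order as standard SARSA, while the lower bound preserves the contraction at rate $2\beta_t w_s/(1+\beta_t)$, reproducing the relaxed step-size condition $\frac{2\beta_0 w_s}{1+\beta_0}\in(0,1)$. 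Unrolling the recursion with $\beta_t = \beta_0/(t+1)^s$ and using $1-x \le e^{-x}$ bounds the homogeneous part by $\exp(-2w_s\sum_{k=0}^t\eta_k)\|\boldsymbol{z}_0\|_2^2$; since $\sum_{k=0}^t\beta_k \ge \frac{\beta_0}{1-s}\{(t+1)^{1-s}-1\}$ and $\widetilde{\beta}_k \ge \beta_k/(1+\beta_0)$, this gives exactly the stated exponents (with the extra $(1+\beta_0)$ factor in the implicit case), and a standard bound on the step-weighted sum of the residual terms yields the additive $\mathcal{O}(\beta_{t-\tau_{\beta_t}}(\tau_{\beta_t}^2+\tau_{\beta_t}))$ bias. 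I expect the principal obstacle to be the joint treatment of the time-varying policy and, for the implicit scheme, the correlation between the random adaptive step-size $\widetilde{\beta}_t$ and the data entering the drift: the cross term must be split so that the negative (drift) part is multiplied by the lower bound on $\widetilde{\beta}_t$ and the positive (noise) parts by the upper bound $\beta_t$, which requires careful sign bookkeeping inside the conditioning argument.
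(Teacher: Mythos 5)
Your proposal is correct in substance and uses essentially the same machinery as the paper: non-expansiveness of $\Pi_r$ to drop the projection, a drift/noise split of the cross term, a mixing-window argument (looking back $\tau_{\beta_t}$ steps) in the style of \citet{zou2019finite} for the Markovian term, the pointwise bounds $\beta_t/(1+\beta_0)\le\widetilde{\beta}_t\le\beta_t$ for the implicit step-size, and unrolling with $1-x\le e^{-x}$ plus bounds on exponentially weighted step-size sums. The one organizational difference is where the policy dependence is booked: you anchor the mean field at the limiting policy, $\bar g(\boldsymbol{\theta})=\mathbb{E}^{\mu^{\boldsymbol{\theta}^\star}}[\cdots]$, so your drift is the linear form $\boldsymbol{z}_t^\top\boldsymbol{A}^{\boldsymbol{\theta}^\star}\boldsymbol{z}_t$ and the $C\lambda\boldsymbol{I}$ shift must be recovered from the remainder, whereas the paper defines $\bar g(\boldsymbol{\theta})$ under $\mu^{\boldsymbol{\theta}}$ (the stationary law of $\pi_{\boldsymbol{\theta}}$ itself), so the shift is already absorbed into the drift bound of Lemma \ref{lem:C2} (Lemma 4 of \citet{zou2019finite}) and the noise lemmas (Lemmas \ref{lem:C5}--\ref{lem:C6}) only handle mixing plus within-window policy drift; your route amounts to inlining the proof of that drift lemma, and both yield the same constants. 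Two sentences of your sketch are imprecise and need repair, though neither is fatal. First, geometric ergodicity does \emph{not} control the gap between the conditional law of $(\boldsymbol{S}_t,A_t,\boldsymbol{S}_{t+1},A_{t+1})$ and the stationary law under $\mu^{\boldsymbol{\theta}^\star}$ ``down to $\mathcal{O}(\beta_t)$'': that gap is proportional to $\|\widehat{\boldsymbol{\theta}}_{t-\tau_{\beta_t}}-\boldsymbol{\theta}^\star\|_2$, the very quantity being bounded. Mixing only brings the data law within $\mathcal{O}(\beta_t)$ (plus within-window drift terms of order $\tau_{\beta_t}\sum_k\beta_k$) of the stationary law of the \emph{frozen} policy $\pi_{t-\tau_{\beta_t}}$; the residual gap from $\mu^{\pi_{t-\tau_{\beta_t}}}$ to $\mu^{\boldsymbol{\theta}^\star}$ is exactly what produces the $C\lambda\|\boldsymbol{z}_t\|_2^2$ term you then fold into $w_s$ --- this is the content of the auxiliary-chain construction in Lemma \ref{lem:C6}. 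Second, for the implicit variant you cannot simply multiply the ``positive (noise) parts by the upper bound $\beta_t$,'' since $\Lambda_t$ is not sign-definite; the paper's fix is the splitting $2\widetilde{\beta}_t\Lambda_t = 2\beta_t\Lambda_t + 2(\widetilde{\beta}_t-\beta_t)\Lambda_t$ combined with $|\widetilde{\beta}_t-\beta_t|\le\beta_t^2$ and the uniform bound of Lemma \ref{lem:C3}, which costs only an extra $\mathcal{O}(\beta_t^2)$ per step. With those two repairs, your argument reproduces the paper's bounds for both standard and implicit SARSA.
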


\begin{remark}
[Comparison on step-size conditions] For both constant and decreasing step-sizes, the condition $\frac{2\beta w_s}{1+\beta} < 1$ for implicit SARSA is substantially less restrictive than the requirement $\beta < \frac{1}{2w_s}$ for standard SARSA. Notably, when $w_s \le 1/2$, implicit SARSA permits arbitrary step-sizes with a finite-time error bound guarantee.
\end{remark}

\begin{remark}[Comparison on convergence rate and error terms] For a constant step-size, standard SARSA attains an asymptotic error of $\mathcal{O}\{\beta(\tau_\beta^2 + \tau_\beta + 1)\}$ \citep{zou2019finite}, while implicit SARSA attains $\mathcal{O}\{(1+\beta)\beta(\tau_\beta^2 + \tau_\beta + 1)\}$. For moderate $\beta$, both methods yield comparable error bounds; for large $\beta$, implicit SARSA exhibits increased asymptotic error due to $(1+\beta)$ factor, though this is offset by its ability to maintain numerical stability at step-sizes where standard SARSA diverges. For decreasing step-sizes $\beta_t = \beta_0/(t+1)^s$ with $s \in (0,1)$, both methods exhibit exponential decay with bias terms of order $\mathcal{O}\{\beta_{t-\tau_{\beta_t}}(\tau_{\beta_t}^2 + \tau_{\beta_t})\}$. Under geometric ergodicity assumptions, our analysis for standard SARSA achieves $\mathcal{O}(\log^2 T/T)$ convergence, improving upon the $\mathcal{O}(\log^3 T/T)$ bound in \citet{zou2019finite}, while implicit SARSA attains the same rate under the substantially relaxed step-size condition. The convergence rates are thus essentially equivalent, with the primary advantage of implicit method being substantially improved numerical stability over a wider range of step-sizes.
\end{remark}

\section{Numerical experiments}
\noindent In this section, we evaluate implicit Q-learning and implicit SARSA algorithms on four benchmark environments from the open-source \texttt{gymnasium} library in Python \citep{towers2024gymnasium}, which is commonly used to evaluate reinforcement learning methods. The test suite includes two discrete state environments (Cliff Walking, Taxi) and two continuous state environments (Mountain Car, Acrobot) that require function approximation. Throughout all experiments, Q-learning employs an $\epsilon$-greedy behavioral policy: with probability $\epsilon$, a random action is selected; otherwise, the greedy action $\arg\max_{a} \widehat{Q}(s,a)$ is chosen. SARSA uses an $\epsilon$-softmax policy: $$\pi(a|s) = \frac{\epsilon}{|\mathcal{A}|} + (1-\epsilon)\frac{\exp\left\{\widehat{Q}(s,a)/\iota\right\}}{\sum_{a'\in\mathcal{A}} \exp\left\{\widehat{Q}(s,a')/\iota\right\}},$$ where $\iota = 0.05$ is the temperature parameter controlling the softness of the action distribution.

\subsection{Discrete State Environments}
\noindent We first evaluate performance on two discrete state environments with finite state-action spaces represented using one-hot feature vectors. In Cliff Walking, an agent navigates a grid from a designated start state to a goal state while avoiding a cliff region; falling off the cliff incurs a large negative reward and resets the agent to the start. In Taxi, an agent navigates a grid world to pick up a passenger from one location and deliver them to a specified destination, receiving penalties for illegal pickup or dropoff attempts and a time penalty for each step taken. We use discount factor $\gamma = 0.99$ and projection radius $r = 5000$, train for 400 episodes (maximum 10000 steps each), and average results over 50 independent runs. Following \citep{mnih2015human}, the exploration parameter decays linearly from $\epsilon_0 = 0.1$ to $\epsilon_{400} = 0.01$ over the 400 episodes. 

Figure \ref{fig:control_discrete} demonstrates that our proposed implicit variants resolve the stability-efficiency trade-off. While Q-learning (blue) and SARSA (green) exhibit sharp performance degradation beyond their thresholds, implicit Q-learning (orange) and implicit SARSA (red) maintain stable performance across the entire range of step-sizes tested, remaining robust even at $\beta = 2.0$ where standard methods fail catastrophically. The bottom panels show that at fixed iteration counts, implicit variants achieve strong performance at large step-sizes where standard methods collapse, enabling practitioners to use aggressive $\beta$ values that accelerate learning. This relaxes the restrictive conditions on step-size required to avoid numerical instability, enabling faster convergence.

\begin{figure}[htp]
\centering
\includegraphics[height=.295\textwidth]{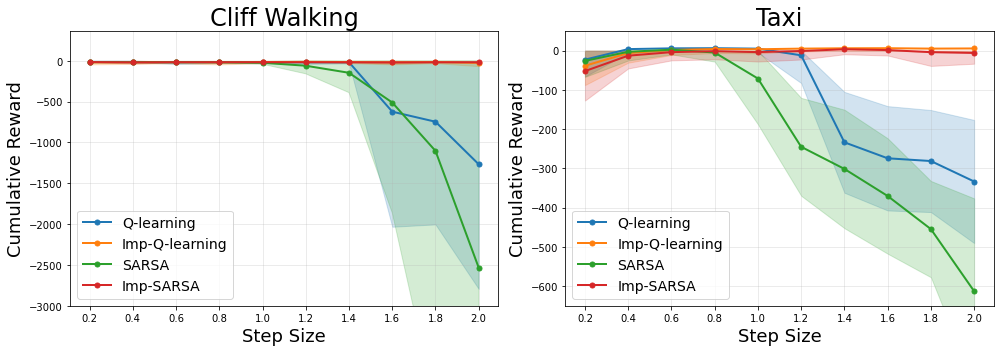}
\includegraphics[height=.295\textwidth]{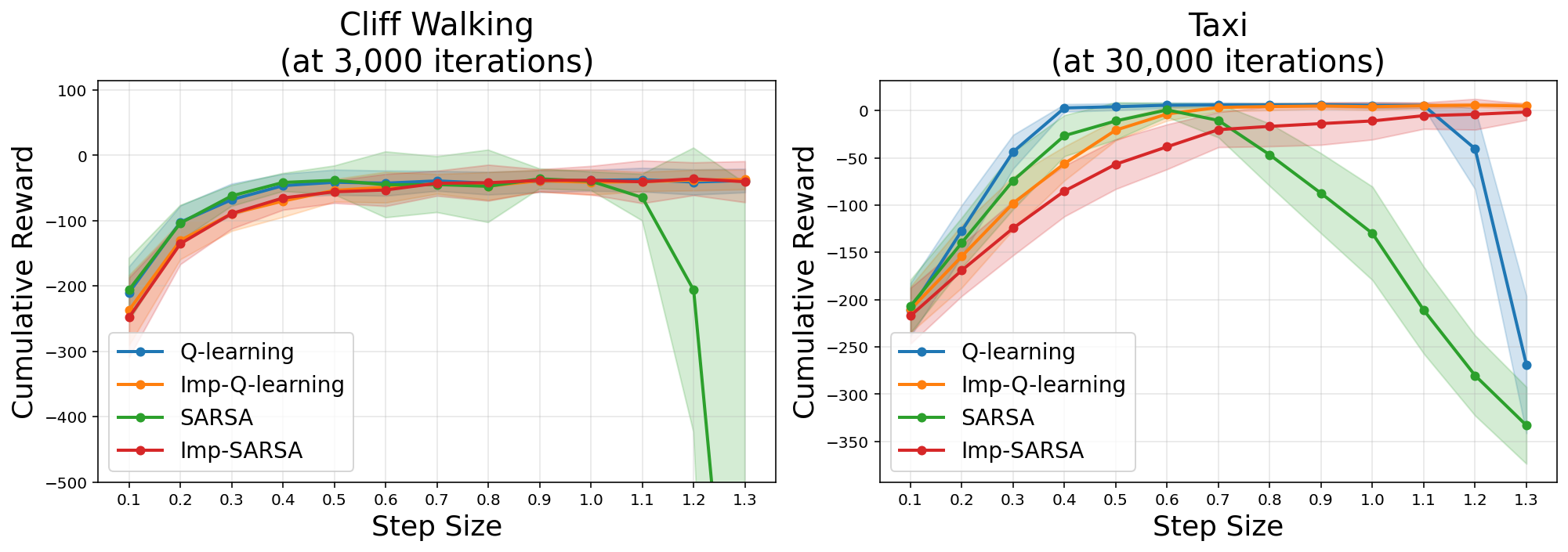}
\caption{Performance comparison on Cliff Walking and Taxi environments. Top: Average cumulative reward versus step-size $\beta$ shows that while standard Q-learning (blue) and SARSA (green) degrade sharply beyond thresholds ($\beta \approx 1.0$ for Cliff Walking, $\beta \approx 0.8$ for Taxi), implicit Q-learning (orange) and implicit SARSA (red) maintain stable performance across the entire range up to $\beta = 2.0$. Bottom: Cumulative reward at fixed iterations (3,000 for Cliff Walking, 30,000 for Taxi) demonstrates that implicit variants achieve strong performance at large step-sizes where standard methods fail. Shaded regions represent standard errors over 50 runs.}
\label{fig:control_discrete}
\end{figure}

\subsection{Continuous State Environments}
\noindent We next evaluate the implicit algorithms on the Acrobot and Mountain Car environments, which require function approximation due to continuous state spaces. In Acrobot, a two-link underactuated robot must swing up to a target height while controlling only the joint between the two links. In Mountain Car, an agent must drive an underpowered car up a steep hill by building momentum, with the goal of reaching the target position in minimal time. We use radial basis features with four kernels at different length scales $\{5.0, 2.0, 1.0, 0.5\}$, each producing 100 components, for a total of 400 features per state. We employ a decreasing step-size schedule of the form $\beta_t = \beta_0/(t+1)^{2/3}$, where $t$ is the iteration index within an episode, with $\beta_0 \in \{1, 10\}$ for Acrobot, $\beta_0 \in \{1, 5\}$ for Mountain Car. We use discount factor $\gamma = 0.99$, exploration parameter $\epsilon = 0.1$, and projection radius $r = 1000$. For each environment, we conduct 20 independent runs of 30 episodes.

Figure \ref{fig:control_continuous} illustrates the learning dynamics across episodes for both algorithm classes in continuous state environments. When initialized with small step-sizes ($\beta_0 = 1$), all approaches exhibit comparable learning behavior and convergence patterns. The performance gap emerges distinctly with larger initial step-sizes: in Acrobot with $\beta_0 = 10$, standard Q-learning and SARSA show minimal improvement over 30 episodes, maintaining high episode lengths around 6.2 (log scale), while implicit variants demonstrate consistent learning, reducing episode lengths to approximately 5.25-5.5. The Mountain Car domain reveals an even more pronounced contrast at $\beta_0 = 5$: standard methods plateau early with episode lengths remaining above 5.2, whereas implicit algorithms achieve a steady reduction to approximately 4.9-5.0, indicating substantially more efficient policies. These observations validate that implicit formulations provide robustness advantages under decreasing step-size schedules, particularly when aggressive initial values are employed.

\begin{figure}[htp]
\centering
\includegraphics[height=.295\textwidth]{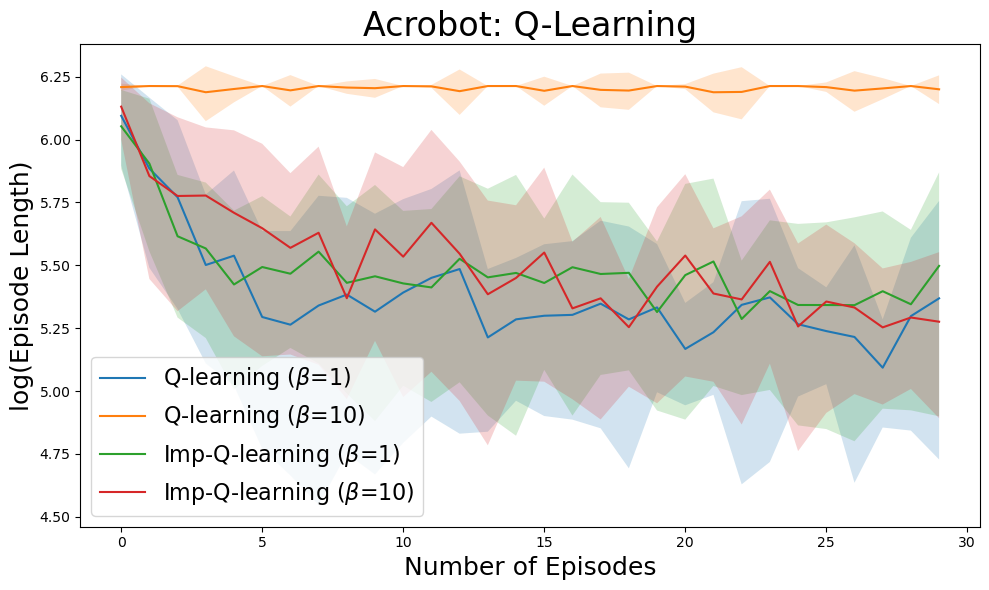}
\includegraphics[height=.295\textwidth]{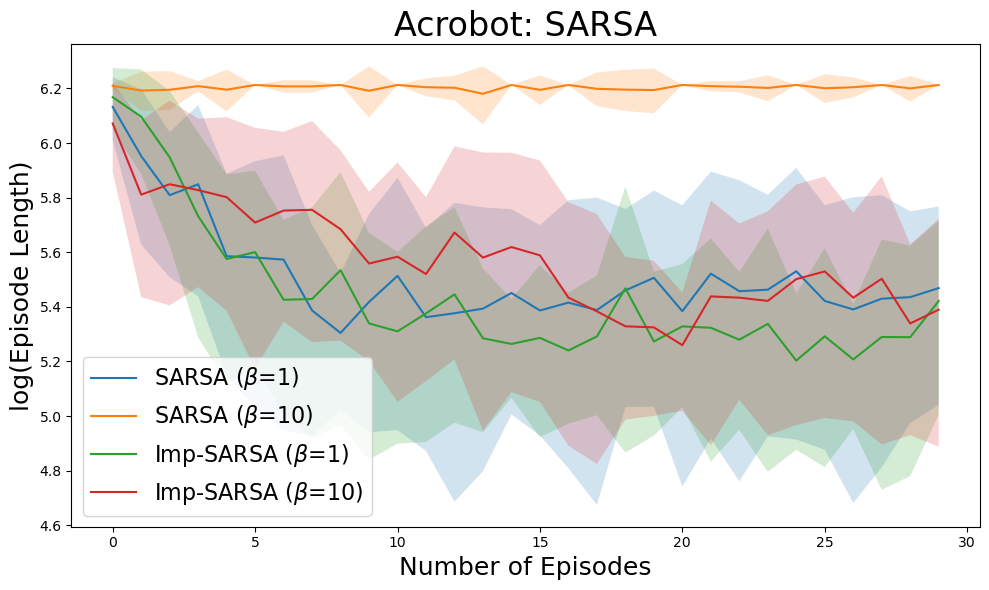}
\includegraphics[height=.295\textwidth]{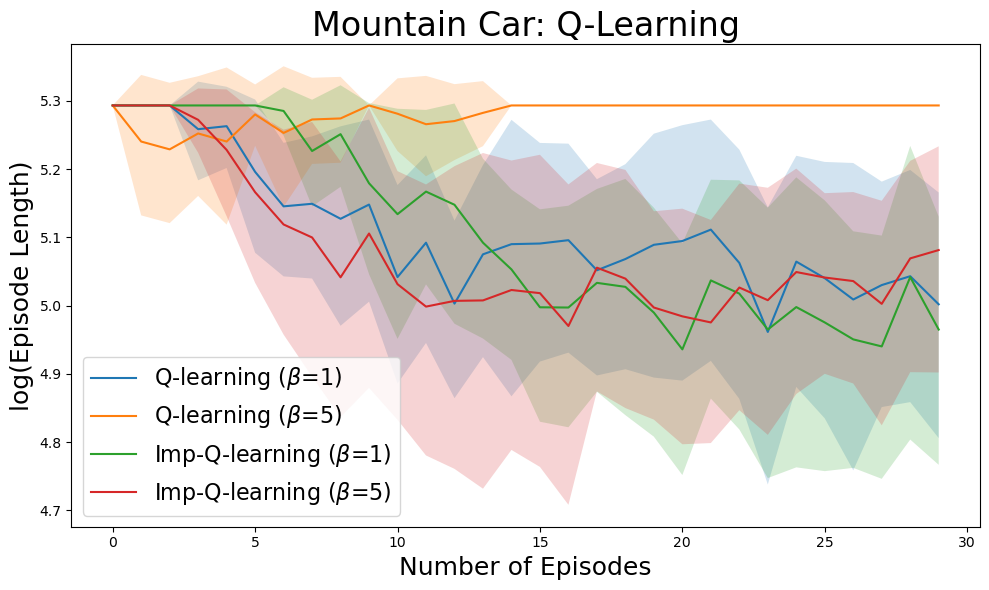}
\includegraphics[height=.295\textwidth]{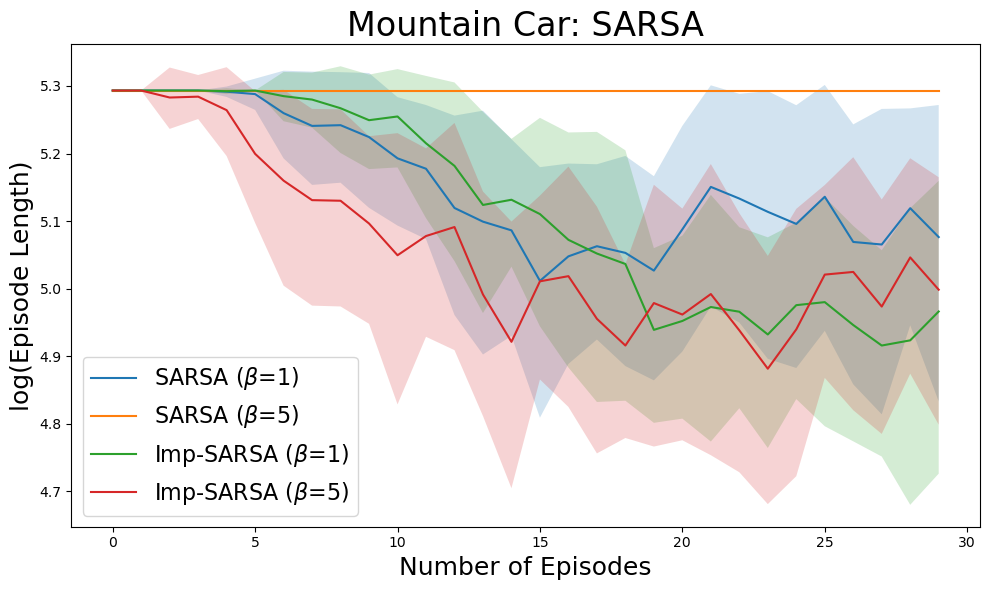}
\caption{Performance on continuous state environments with function approximation. Learning curves (log episode length vs episode number) for Q-learning and SARSA on Acrobot and Mountain Car with decreasing step-sizes $\beta_t = \beta_0/(t+1)^{2/3}$. Lower values indicate better performance. At a small initial step-size ($\beta_0=1$), all methods perform similarly. At a large initial step-size ($\beta_0=10$ for Acrobot, $\beta_0=5$ for Mountain Car), implicit methods (orange/red) demonstrate superior stability compared to standard methods (blue/green). Shaded regions show standard errors over 20 runs.}
\label{fig:control_continuous}
\end{figure}

\section{Conclusion}
\noindent This work addresses the persistent challenge of step-size sensitivity in Q-learning and SARSA. We reformulate these algorithms as fixed-point equations that evaluate state-action values at future iterates, inducing adaptive regularization scaling inversely with feature norms. Our theoretical analysis establishes that implicit methods tolerate arbitrarily large step-sizes under favorable conditions, a sharp contrast to standard algorithms' restrictive requirements. Empirical evaluation across four benchmark environments demonstrates that implicit variants maintain stable learning where standard methods fail. Extensions to nonlinear function approximation and average-reward formulations represent promising future directions where adaptive step-size control becomes increasingly critical for practical deployment.
\newpage
\bibliographystyle{plainnat}
\bibliography{references}
\newpage
\appendix
\section{Proofs of Proposition \ref{prop:implicit_q_learning} and \ref{prop:implicit_sarsa}}
\noindent \textbf{Proposition 3.1} (Implicit Q-learning update).
The implicit linear Q-learning algorithm admits the following update rule:
\begin{align*}
\widehat{\boldsymbol{\theta}}_{t+1} = \widehat{\boldsymbol{\theta}}_{t} + \frac{\beta_t}{1 + \beta_t \|\boldsymbol{\phi}(\boldsymbol{S}_t^{\pi},A_t^{\pi})\|_2^2} \left\{R_t^{\pi} + \gamma \max_{a' \in \mathcal{A}} \boldsymbol{\phi}(\boldsymbol{S}_{t+1}^{\pi},a')^\top\widehat{\boldsymbol{\theta}}_{t} - \boldsymbol{\phi}(\boldsymbol{S}_t^{\pi},A_t^{\pi})^\top\widehat{\boldsymbol{\theta}}_{t}\right\} \boldsymbol{\phi}(\boldsymbol{S}_t^{\pi},A_t^{\pi}).
\end{align*}

\begin{proof}
The implicit linear Q-learning update is defined as
\begin{align}\label{IMP_Q_ORIG}
\widehat{\boldsymbol{\theta}}_{t+1} = \widehat{\boldsymbol{\theta}}_{t} + \beta_t \left\{R_t^{\pi} + \gamma \max_{a' \in \mathcal{A}} \boldsymbol{\phi}(\boldsymbol{S}_{t+1}^{\pi},a')^\top\widehat{\boldsymbol{\theta}}_{t} - \boldsymbol{\phi}(\boldsymbol{S}_t^{\pi},A_t^{\pi})^\top\widehat{\boldsymbol{\theta}}_{t+1}\right\} \boldsymbol{\phi}(\boldsymbol{S}_t^{\pi},A_t^{\pi}).
\end{align}
\eqref{IMP_Q_ORIG} is equivalent to the fixed-point equation
\begin{align}
\left(I + \beta_t \boldsymbol{\phi}_t \boldsymbol{\phi}_t^\top\right) \widehat{\boldsymbol{\theta}}_{t+1} = \widehat{\boldsymbol{\theta}}_{t} + \beta_t \left\{R_t^{\pi} + \gamma \max_{a' \in \mathcal{A}} \boldsymbol{\phi}(\boldsymbol{S}_{t+1}^{\pi},a')^\top\widehat{\boldsymbol{\theta}}_{t}\right\} \boldsymbol{\phi}_t
\end{align}
where we denote $\boldsymbol{\phi}_t := \boldsymbol{\phi}(\boldsymbol{S}_t^{\pi},A_t^{\pi})$ for notational convenience. Using the Sherman-Morrison-Woodbury formula, we have
\begin{align*}
\left(I + \beta_t \boldsymbol{\phi}_t \boldsymbol{\phi}_t^\top\right)^{-1} = I - \frac{\beta_t \boldsymbol{\phi}_t \boldsymbol{\phi}_t^\top}{1 + \beta_t \|\boldsymbol{\phi}_t\|_2^2}.
\end{align*}
Solving for $\widehat{\boldsymbol{\theta}}_{t+1}$, we obtain
\begin{align*}
\widehat{\boldsymbol{\theta}}_{t+1} &= \left(I - \frac{\beta_t \boldsymbol{\phi}_t \boldsymbol{\phi}_t^\top}{1 + \beta_t \|\boldsymbol{\phi}_t\|_2^2}\right) \left[\widehat{\boldsymbol{\theta}}_{t} + \beta_t \left\{R_t^{\pi} + \gamma \max_{a' \in \mathcal{A}} \boldsymbol{\phi}(\boldsymbol{S}_{t+1}^{\pi},a')^\top\widehat{\boldsymbol{\theta}}_{t}\right\} \boldsymbol{\phi}_t\right] \\
&= \widehat{\boldsymbol{\theta}}_{t} + \beta_t \left\{R_t^{\pi} + \gamma \max_{a' \in \mathcal{A}} \boldsymbol{\phi}(\boldsymbol{S}_{t+1}^{\pi},a')^\top\widehat{\boldsymbol{\theta}}_{t}\right\} \boldsymbol{\phi}_t - \frac{\beta_t \boldsymbol{\phi}_t^\top\widehat{\boldsymbol{\theta}}_{t}}{1 + \beta_t \|\boldsymbol{\phi}_t\|_2^2} \boldsymbol{\phi}_t \\
&\quad - \frac{\beta_t^2 \|\boldsymbol{\phi}_t\|_2^2}{1 + \beta_t \|\boldsymbol{\phi}_t\|_2^2} \left\{R_t^{\pi} + \gamma \max_{a' \in \mathcal{A}} \boldsymbol{\phi}(\boldsymbol{S}_{t+1}^{\pi},a')^\top\widehat{\boldsymbol{\theta}}_{t}\right\} \boldsymbol{\phi}_t \\
&= \widehat{\boldsymbol{\theta}}_{t} + \beta_t \left(1 - \frac{\beta_t \|\boldsymbol{\phi}_t\|_2^2}{1 + \beta_t \|\boldsymbol{\phi}_t\|_2^2}\right) \left\{R_t^{\pi} + \gamma \max_{a' \in \mathcal{A}} \boldsymbol{\phi}(\boldsymbol{S}_{t+1}^{\pi},a')^\top\widehat{\boldsymbol{\theta}}_{t}\right\} \boldsymbol{\phi}_t - \frac{\beta_t}{1 + \beta_t \|\boldsymbol{\phi}_t\|_2^2} (\boldsymbol{\phi}_t^\top\widehat{\boldsymbol{\theta}}_{t}) \boldsymbol{\phi}_t \\
&= \widehat{\boldsymbol{\theta}}_{t} + \frac{\beta_t}{1 + \beta_t \|\boldsymbol{\phi}_t\|_2^2} \left\{R_t^{\pi} + \gamma \max_{a' \in \mathcal{A}} \boldsymbol{\phi}(\boldsymbol{S}_{t+1}^{\pi},a')^\top\widehat{\boldsymbol{\theta}}_{t} - \boldsymbol{\phi}_t^\top\widehat{\boldsymbol{\theta}}_{t}\right\} \boldsymbol{\phi}_t,
\end{align*}
which completes the proof.
\end{proof}

\noindent \textbf{Proposition 3.2} (Implicit SARSA update).
The implicit linear SARSA algorithm admits the following update rule:
\begin{align*}
\widehat{\boldsymbol{\theta}}_{t+1} = \widehat{\boldsymbol{\theta}}_{t} + \frac{\beta_t}{1 + \beta_t \|\boldsymbol{\phi}(\boldsymbol{S}_t^{\pi_{t-2}},A_t^{\pi_{t-1}})\|_2^2} \left\{R_t + \gamma \boldsymbol{\phi}(\boldsymbol{S}_{t+1}^{\pi_{t-1}},A_{t+1}^{\pi_t})^\top\widehat{\boldsymbol{\theta}}_{t} - \boldsymbol{\phi}(\boldsymbol{S}_t^{\pi_{t-2}},A_t^{\pi_{t-1}})^\top\widehat{\boldsymbol{\theta}}_{t}\right\} \boldsymbol{\phi}(\boldsymbol{S}_t^{\pi_{t-2}},A_t^{\pi_{t-1}}).
\end{align*}

\begin{proof}
The implicit linear SARSA update is defined as
\begin{align}\label{IMP_SARSA_ORIG}
\widehat{\boldsymbol{\theta}}_{t+1} = \widehat{\boldsymbol{\theta}}_{t} + \beta_t \left\{R_t + \gamma \boldsymbol{\phi}(\boldsymbol{S}_{t+1}^{\pi_{t-1}},A_{t+1}^{\pi_t})^\top\widehat{\boldsymbol{\theta}}_{t} - \boldsymbol{\phi}(\boldsymbol{S}_t^{\pi_{t-2}},A_t^{\pi_{t-1}})^\top\widehat{\boldsymbol{\theta}}_{t+1}\right\} \boldsymbol{\phi}(\boldsymbol{S}_t^{\pi_{t-2}},A_t^{\pi_{t-1}}).
\end{align}
Let $\boldsymbol{\phi}_t := \boldsymbol{\phi}(\boldsymbol{S}_t^{\pi_{t-2}},A_t^{\pi_{t-1}})$ and $\boldsymbol{\phi}_{t+1} := \boldsymbol{\phi}(\boldsymbol{S}_{t+1}^{\pi_{t-1}},A_{t+1}^{\pi_t})$. Then \eqref{IMP_SARSA_ORIG} yields the following fixed-point equation
\begin{align*}
\left(I + \beta_t \boldsymbol{\phi}_t \boldsymbol{\phi}_t^\top\right) \widehat{\boldsymbol{\theta}}_{t+1} = \widehat{\boldsymbol{\theta}}_{t} + \beta_t \left(R_t + \gamma \boldsymbol{\phi}_{t+1}^\top\widehat{\boldsymbol{\theta}}_{t}\right) \boldsymbol{\phi}_t.
\end{align*}
Solving for $\widehat{\boldsymbol{\theta}}_{t+1}$ with the Sherman-Morrison-Woodbury formula, we have
\begin{align*}
\widehat{\boldsymbol{\theta}}_{t+1} &= \left(I - \frac{\beta_t \boldsymbol{\phi}_t \boldsymbol{\phi}_t^\top}{1 + \beta_t \|\boldsymbol{\phi}_t\|_2^2}\right) \left\{\widehat{\boldsymbol{\theta}}_{t} + \beta_t \left(R_t + \gamma \boldsymbol{\phi}_{t+1}^\top\widehat{\boldsymbol{\theta}}_{t}\right) \boldsymbol{\phi}_t\right\} \\
&= \widehat{\boldsymbol{\theta}}_{t} + \beta_t \left(R_t + \gamma \boldsymbol{\phi}_{t+1}^\top\widehat{\boldsymbol{\theta}}_{t}\right) \boldsymbol{\phi}_t - \frac{\beta_t (\boldsymbol{\phi}_t^\top\widehat{\boldsymbol{\theta}}_{t}) \boldsymbol{\phi}_t}{1 + \beta_t \|\boldsymbol{\phi}_t\|_2^2} - \frac{\beta_t^2 \|\boldsymbol{\phi}_t\|_2^2}{1 + \beta_t \|\boldsymbol{\phi}_t\|_2^2} \left(R_t + \gamma \boldsymbol{\phi}_{t+1}^\top\widehat{\boldsymbol{\theta}}_{t}\right) \boldsymbol{\phi}_t \\
&= \widehat{\boldsymbol{\theta}}_{t} + \beta_t \left(1 - \frac{\beta_t \|\boldsymbol{\phi}_t\|_2^2}{1 + \beta_t \|\boldsymbol{\phi}_t\|_2^2}\right) \left(R_t + \gamma \boldsymbol{\phi}_{t+1}^\top\widehat{\boldsymbol{\theta}}_{t}\right) \boldsymbol{\phi}_t - \frac{\beta_t}{1 + \beta_t \|\boldsymbol{\phi}_t\|_2^2} (\boldsymbol{\phi}_t^\top\widehat{\boldsymbol{\theta}}_{t}) \boldsymbol{\phi}_t \\
&= \widehat{\boldsymbol{\theta}}_{t} + \frac{\beta_t}{1 + \beta_t \|\boldsymbol{\phi}_t\|_2^2} \left(R_t + \gamma \boldsymbol{\phi}_{t+1}^\top\widehat{\boldsymbol{\theta}}_{t} - \boldsymbol{\phi}_t^\top\widehat{\boldsymbol{\theta}}_{t}\right)\boldsymbol{\phi}_t,
\end{align*}
which completes the proof.
\end{proof}

\section{Proofs of Q-Learning Results}
\subsection{Preliminary Lemmas}
\noindent For any $\boldsymbol{\theta} \in \mathbb{R}^M$, let us define
\begin{align*}
f_t(\boldsymbol{\theta}) &= \boldsymbol{\phi}(\boldsymbol{S}_t^{\pi}, A_t^{\pi})\left\{R_t^{\pi} + \gamma \max_{a \in \mathcal{A}} \boldsymbol{\phi}(\boldsymbol{S}_{t+1}^{\pi}, a)^{\top} \boldsymbol{\theta} - \boldsymbol{\phi}(\boldsymbol{S}_t^{\pi}, A_t^{\pi})^{\top}\boldsymbol{\theta}\right\} \\
\bar{f}(\boldsymbol{\theta}) &= \mathbb{E}^{\mu^\pi}\left[\boldsymbol{\phi}(\boldsymbol{S}^{\pi}, A^{\pi})\left\{r(\boldsymbol{S}^{\pi}, A^{\pi}) + \gamma \max_{a \in \mathcal{A}} \boldsymbol{\phi}(\boldsymbol{S}', a)^{\top} \boldsymbol{\theta} - \boldsymbol{\phi}(\boldsymbol{S}^{\pi}, A^{\pi})^{\top}\boldsymbol{\theta}\right\}\right],
\end{align*}
where $\mathbb{E}^{\mu^\pi}$ denotes the expectation where $\boldsymbol{S}^{\pi}$ follows the stationary distribution $\mu^\pi$, $A^{\pi}$ is sampled according to the behavioral policy $\pi(\cdot | \boldsymbol{S}^{\pi})$, and $\boldsymbol{S}' \sim \Ptrans(\cdot\mid \boldsymbol{S}^{\pi}, A^{\pi})$. Let us also define $\Lambda_t(\boldsymbol{\theta}) := \langle\boldsymbol{\theta} - \boldsymbol{\theta}^{\star}, f_t(\boldsymbol{\theta}) - \bar{f}(\boldsymbol{\theta})\rangle$, where $\boldsymbol{\theta}^{\star}$ is the solution of $\bar{f}(\boldsymbol{\theta}) = 0$.

\begin{lemma}[Lemma 11 of \citep{zoufinite}]\label{lem:B1}
For any $\boldsymbol{\theta}\in \mathbb{R}^M$ such that $\|\boldsymbol{\theta}\|_2 \leq r$, 
$$
\|f_t(\boldsymbol{\theta})\|_2 \leq G:= b + 2r ~~\text{for all}~~ t \ge 0.
$$ 
\end{lemma}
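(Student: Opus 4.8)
The plan is to exploit the fact that $f_t(\boldsymbol{\theta})$ is simply a scalar multiple of the feature vector $\boldsymbol{\phi}(\boldsymbol{S}_t^{\pi}, A_t^{\pi})$, so that its Euclidean norm factorizes as the absolute value of the temporal-difference scalar times the feature norm. Since the feature normalization in Section \ref{sec:background} guarantees $\|\boldsymbol{\phi}(\boldsymbol{s},a)\|_2 \le 1$ for every $(\boldsymbol{s},a) \in \cS\times\A$, I would immediately reduce the claim to bounding the absolute value of the bracketed scalar
\[
\delta := R_t^{\pi} + \gamma \max_{a\in\mathcal{A}} \boldsymbol{\phi}(\boldsymbol{S}_{t+1}^{\pi}, a)^{\top} \boldsymbol{\theta} - \boldsymbol{\phi}(\boldsymbol{S}_t^{\pi}, A_t^{\pi})^{\top} \boldsymbol{\theta}
\]
by $G = b + 2r$, because $\|f_t(\boldsymbol{\theta})\|_2 = |\delta|\,\|\boldsymbol{\phi}(\boldsymbol{S}_t^{\pi}, A_t^{\pi})\|_2 \le |\delta|$.

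Next I would apply the triangle inequality to split $|\delta|$ into three pieces and bound each separately. The reward term satisfies $|R_t^{\pi}| \le b$ because $r : \cS\times\A \to [0,b]$. For the two inner-product terms, the Cauchy--Schwarz inequality together with $\|\boldsymbol{\phi}\|_2 \le 1$ and the hypothesis $\|\boldsymbol{\theta}\|_2 \le r$ gives $|\boldsymbol{\phi}(\boldsymbol{S}_t^{\pi}, A_t^{\pi})^{\top} \boldsymbol{\theta}| \le r$, and for each fixed action $a$ likewise $|\boldsymbol{\phi}(\boldsymbol{S}_{t+1}^{\pi}, a)^{\top} \boldsymbol{\theta}| \le r$.

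The only point requiring a line of care is the maximization term: I would invoke the elementary fact that $|\max_a x_a| \le \max_a |x_a|$ to pass the absolute value through the maximum, yielding $\bigl|\max_a \boldsymbol{\phi}(\boldsymbol{S}_{t+1}^{\pi}, a)^{\top} \boldsymbol{\theta}\bigr| \le \max_a |\boldsymbol{\phi}(\boldsymbol{S}_{t+1}^{\pi}, a)^{\top}\boldsymbol{\theta}| \le r$. Combining the three bounds, and using $\gamma \in (0,1)$ to absorb the discount factor via $\gamma r \le r$, produces $|\delta| \le b + \gamma r + r \le b + 2r$, whence $\|f_t(\boldsymbol{\theta})\|_2 \le |\delta| \le b + 2r = G$ for every $t \ge 0$, as claimed. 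There is no substantive obstacle here; the entire argument is a direct application of the triangle inequality and Cauchy--Schwarz under the standing norm bounds, with the handling of the absolute value of the maximum being the only mild subtlety.
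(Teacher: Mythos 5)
Your proof is correct and takes essentially the same route as the paper's own argument (which defers to Lemma 11 of \citet{zoufinite}): factor out the feature vector, apply the triangle inequality and Cauchy--Schwarz under the standing bounds $\|\boldsymbol{\phi}\|_2\le 1$, $\|\boldsymbol{\theta}\|_2\le r$, $R_t^{\pi}\in[0,b]$, and absorb $\gamma<1$. Your explicit handling of the maximum via $\left|\max_a x_a\right|\le\max_a|x_a|$ is a slightly more careful rendering of the same step the paper writes in shorthand, but there is no substantive difference in approach.
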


\begin{lemma}[Lemma 12 of \citep{zoufinite}]\label{lem:B2}
For any $\boldsymbol{\theta} \in \mathbb{R}^M$, $\langle\boldsymbol{\theta} - \boldsymbol{\theta}^{\star}, \bar{f}(\boldsymbol{\theta}) - \bar{f}(\boldsymbol{\theta}^{\star})\rangle \leq -\frac{w_q\|\boldsymbol{\theta} - \boldsymbol{\theta}^{\star}\|_2^2}{2}$,
where $w_q > 0$ denotes the minimum of the smallest eigenvalue of $\boldsymbol{\Sigma}^\pi - \gamma^2\boldsymbol{\Sigma}^\star(\boldsymbol{\theta})$ over all $\boldsymbol{\theta} \in \mathbb{R}^M$. 
\end{lemma}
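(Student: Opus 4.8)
The plan is to prove a strong one-point monotonicity of the population map $-\bar f$ at its root $\boldsymbol{\theta}^{\star}$, for which I would first subtract the two copies of $\bar f$. Writing $\boldsymbol{\Delta} := \boldsymbol{\theta} - \boldsymbol{\theta}^{\star}$ and $M(\boldsymbol{s}', \boldsymbol{\vartheta}) := \max_{a}\boldsymbol{\phi}(\boldsymbol{s}', a)^{\top}\boldsymbol{\vartheta}$, the reward terms cancel, so that
\begin{align*}
\bar f(\boldsymbol{\theta}) - \bar f(\boldsymbol{\theta}^{\star}) = \gamma\,\mathbb{E}^{\mu^\pi}\!\left[\boldsymbol{\phi}(\boldsymbol{S}, A)\{M(\boldsymbol{S}', \boldsymbol{\theta}) - M(\boldsymbol{S}', \boldsymbol{\theta}^{\star})\}\right] - \boldsymbol{\Sigma}^{\mu^\pi}\boldsymbol{\Delta}.
\end{align*}
Taking the inner product with $\boldsymbol{\Delta}$ gives $\langle\boldsymbol{\Delta}, \bar f(\boldsymbol{\theta}) - \bar f(\boldsymbol{\theta}^{\star})\rangle = \gamma\,\mathbb{E}^{\mu^\pi}[(\boldsymbol{\Delta}^{\top}\boldsymbol{\phi}(\boldsymbol{S}, A)) D] - \boldsymbol{\Delta}^{\top}\boldsymbol{\Sigma}^{\mu^\pi}\boldsymbol{\Delta}$, where $D := M(\boldsymbol{S}', \boldsymbol{\theta}) - M(\boldsymbol{S}', \boldsymbol{\theta}^{\star})$. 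The second term is already the desired negative-definite quadratic; the entire difficulty is to bound the cross term by $\boldsymbol{\Delta}^{\top}\boldsymbol{\Sigma}^{\mu^\pi}\boldsymbol{\Delta} - \tfrac{w_q}{2}\|\boldsymbol{\Delta}\|_2^2$.

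Controlling this difference-of-maxima cross term is the step I expect to be the main obstacle. A naive bound of $D$ by a single subgradient fails: since $\boldsymbol{\vartheta}\mapsto M(\boldsymbol{s}', \boldsymbol{\vartheta})$ is convex but $D$ may take either sign, the estimate $\mathbb{E}^{\mu^\pi}[D^2]\le \boldsymbol{\Delta}^{\top}\boldsymbol{\Sigma}^{\star}(\boldsymbol{\theta})\boldsymbol{\Delta}$ is in fact false, and symmetrizing over the two endpoint maximizers loses a factor of two that breaks the argument. Instead I would use a path-integral representation along the segment $\boldsymbol{\theta}_t := \boldsymbol{\theta}^{\star} + t\boldsymbol{\Delta}$, $t\in[0,1]$. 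Because $t\mapsto M(\boldsymbol{s}', \boldsymbol{\theta}_t)$ is convex and piecewise linear, hence Lipschitz, the fundamental theorem of calculus yields $D = \int_0^1 \boldsymbol{\phi}(\boldsymbol{S}', A_{\boldsymbol{S}'}^{\boldsymbol{\theta}_t})^{\top}\boldsymbol{\Delta}\,dt$ for a.e.\ $t$, where $A_{\boldsymbol{S}'}^{\boldsymbol{\theta}_t}$ is the greedy action for the intermediate parameter $\boldsymbol{\theta}_t$. Substituting and exchanging expectation with integration,
\begin{align*}
\gamma\,\mathbb{E}^{\mu^\pi}[(\boldsymbol{\Delta}^{\top}\boldsymbol{\phi}(\boldsymbol{S}, A)) D] = \gamma\int_0^1 \boldsymbol{\Delta}^{\top}\mathbb{E}^{\mu^\pi}\!\left[\boldsymbol{\phi}(\boldsymbol{S}, A)\boldsymbol{\phi}(\boldsymbol{S}', A_{\boldsymbol{S}'}^{\boldsymbol{\theta}_t})^{\top}\right]\boldsymbol{\Delta}\,dt.
\end{align*}

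For each fixed $t$ I would bound the integrand by the Cauchy--Schwarz inequality for the cross-covariance, using that under stationarity $\boldsymbol{S}'$ has marginal $\mu^\pi$, so that $\mathbb{E}^{\mu^\pi}[(\boldsymbol{\phi}(\boldsymbol{S}', A_{\boldsymbol{S}'}^{\boldsymbol{\theta}_t})^{\top}\boldsymbol{\Delta})^2] = \boldsymbol{\Delta}^{\top}\boldsymbol{\Sigma}^{\star}(\boldsymbol{\theta}_t)\boldsymbol{\Delta}$ and $\mathbb{E}^{\mu^\pi}[(\boldsymbol{\phi}(\boldsymbol{S}, A)^{\top}\boldsymbol{\Delta})^2] = \boldsymbol{\Delta}^{\top}\boldsymbol{\Sigma}^{\mu^\pi}\boldsymbol{\Delta}$; Young's inequality then gives, for every $t$,
\begin{align*}
\gamma\,\boldsymbol{\Delta}^{\top}\mathbb{E}^{\mu^\pi}\!\left[\boldsymbol{\phi}(\boldsymbol{S}, A)\boldsymbol{\phi}(\boldsymbol{S}', A_{\boldsymbol{S}'}^{\boldsymbol{\theta}_t})^{\top}\right]\boldsymbol{\Delta} \le \tfrac12\boldsymbol{\Delta}^{\top}\boldsymbol{\Sigma}^{\mu^\pi}\boldsymbol{\Delta} + \tfrac{\gamma^2}{2}\boldsymbol{\Delta}^{\top}\boldsymbol{\Sigma}^{\star}(\boldsymbol{\theta}_t)\boldsymbol{\Delta}.
\end{align*}
The crucial point is that $w_q$ is the minimum of $\lambda_{\min}(\boldsymbol{\Sigma}^{\mu^\pi} - \gamma^2\boldsymbol{\Sigma}^{\star}(\boldsymbol{\vartheta}))$ over all $\boldsymbol{\vartheta}$, so $\gamma^2\boldsymbol{\Delta}^{\top}\boldsymbol{\Sigma}^{\star}(\boldsymbol{\theta}_t)\boldsymbol{\Delta} \le \boldsymbol{\Delta}^{\top}\boldsymbol{\Sigma}^{\mu^\pi}\boldsymbol{\Delta} - w_q\|\boldsymbol{\Delta}\|_2^2$ holds uniformly in $t$ and survives integration over $[0,1]$. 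Combining the displays bounds the cross term by $\boldsymbol{\Delta}^{\top}\boldsymbol{\Sigma}^{\mu^\pi}\boldsymbol{\Delta} - \tfrac{w_q}{2}\|\boldsymbol{\Delta}\|_2^2$; subtracting $\boldsymbol{\Delta}^{\top}\boldsymbol{\Sigma}^{\mu^\pi}\boldsymbol{\Delta}$ yields the claim. The only remaining care concerns measurable selection of the maximizers $A_{\boldsymbol{S}'}^{\boldsymbol{\theta}_t}$ and the finitely many kinks where $M(\boldsymbol{s}', \cdot)$ is nondifferentiable, both of which are routine since the kink set has Lebesgue measure zero in $t$ for each $\boldsymbol{s}'$.
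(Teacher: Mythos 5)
Your proposal is correct, but it does not follow the route the paper takes: the paper gives no proof of Lemma \ref{lem:B2} at all, importing it verbatim as Lemma 12 of \citep{zoufinite}, whose argument (following \citet{melo2008analysis}) controls the difference of maxima $D = \max_{a}\boldsymbol{\phi}(\boldsymbol{S}',a)^{\top}\boldsymbol{\theta} - \max_{a}\boldsymbol{\phi}(\boldsymbol{S}',a)^{\top}\boldsymbol{\theta}^{\star}$ by the greedy actions at the two \emph{endpoints} $\boldsymbol{\theta}$ and $\boldsymbol{\theta}^{\star}$ before applying Cauchy--Schwarz. Your path-integral (Danskin-type) representation $D = \int_0^1 \boldsymbol{\phi}(\boldsymbol{S}', A_{\boldsymbol{S}'}^{\boldsymbol{\theta}_t})^{\top}\boldsymbol{\Delta}\,dt$ along the segment $\boldsymbol{\theta}_t = \boldsymbol{\theta}^{\star} + t\boldsymbol{\Delta}$ is a genuinely different decomposition, and it buys something real: since Assumption \ref{assume:optimality_behavioral} and the constant $w_q$ are formulated in terms of the greedy-action covariances $\boldsymbol{\Sigma}^{\star}(\boldsymbol{\vartheta})$ for a \emph{single} parameter $\boldsymbol{\vartheta}$, your argument applies the bound $\gamma^2\boldsymbol{\Delta}^{\top}\boldsymbol{\Sigma}^{\star}(\boldsymbol{\theta}_t)\boldsymbol{\Delta} \le \boldsymbol{\Delta}^{\top}\boldsymbol{\Sigma}^{\mu^\pi}\boldsymbol{\Delta} - w_q\|\boldsymbol{\Delta}\|_2^2$ uniformly in $t$ and integrates, whereas the endpoint route produces a pointwise maximum over two greedy actions, and bounding $\mathbb{E}^{\mu^\pi}$ of that maximum by $\sup_{\boldsymbol{\vartheta}}\boldsymbol{\Delta}^{\top}\boldsymbol{\Sigma}^{\star}(\boldsymbol{\vartheta})\boldsymbol{\Delta}$ is delicate (the crude bound $\max\le$ sum loses exactly the factor of two you point out, since $\mathbb{E}[\max] \ge \max[\mathbb{E}]$ goes the wrong way). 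Your remaining steps — Fubini with a bounded integrand, stationarity giving $\boldsymbol{S}' \sim \mu^{\pi}$ so that $\mathbb{E}^{\mu^\pi}[(\boldsymbol{\phi}(\boldsymbol{S}',A_{\boldsymbol{S}'}^{\boldsymbol{\theta}_t})^{\top}\boldsymbol{\Delta})^2] = \boldsymbol{\Delta}^{\top}\boldsymbol{\Sigma}^{\star}(\boldsymbol{\theta}_t)\boldsymbol{\Delta}$, Cauchy--Schwarz, and Young's inequality $\gamma\sqrt{ab}\le \tfrac12 a + \tfrac{\gamma^2}{2}b$ — are all sound, so the proof closes with $\langle\boldsymbol{\Delta},\bar f(\boldsymbol{\theta})-\bar f(\boldsymbol{\theta}^{\star})\rangle \le -\tfrac{w_q}{2}\|\boldsymbol{\Delta}\|_2^2$ as claimed; in fact it is a self-contained and arguably more rigorous justification of the lemma, exactly as stated with the paper's definition of $\boldsymbol{\Sigma}^{\star}(\cdot)$ and $w_q$, than the citation the paper relies on.
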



\begin{lemma}[Lemma 13 of \citep{zoufinite}]\label{lem:B3}
For any $\boldsymbol{\theta}\in \mathbb{R}^M$ such that $\|\boldsymbol{\theta}\|_2 \leq r$, $\Lambda_t(\boldsymbol{\theta}) \leq 2G^2$ for all $t \ge 0$.
\end{lemma}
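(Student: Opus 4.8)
The plan is to bound $\Lambda_t(\boldsymbol{\theta})$ directly via the Cauchy--Schwarz inequality. Since $\Lambda_t(\boldsymbol{\theta}) = \langle \boldsymbol{\theta} - \boldsymbol{\theta}^{\star},\, f_t(\boldsymbol{\theta}) - \bar{f}(\boldsymbol{\theta})\rangle$, Cauchy--Schwarz gives $\Lambda_t(\boldsymbol{\theta}) \le \|\boldsymbol{\theta} - \boldsymbol{\theta}^{\star}\|_2 \, \|f_t(\boldsymbol{\theta}) - \bar{f}(\boldsymbol{\theta})\|_2$, so the task reduces to controlling the two factors separately and then combining them.

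For the first factor, I would apply the triangle inequality with the hypothesis $\|\boldsymbol{\theta}\|_2 \le r$ and the standing requirement $\|\boldsymbol{\theta}^{\star}\|_2 \le r$ (ensured by the choice of projection radius in Theorems~\ref{thm:constant_qlearning}--\ref{thm:decreasing_qlearning}), which yields $\|\boldsymbol{\theta} - \boldsymbol{\theta}^{\star}\|_2 \le 2r$. For the second factor, Lemma~\ref{lem:B1} already supplies $\|f_t(\boldsymbol{\theta})\|_2 \le G$; since $\bar{f}(\boldsymbol{\theta})$ is the expectation of an integrand of the identical functional form, the pointwise bound transfers through Jensen's inequality (the norm of an expectation is at most the expectation of the norm) to give $\|\bar{f}(\boldsymbol{\theta})\|_2 \le G$ as well. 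A further triangle inequality then produces $\|f_t(\boldsymbol{\theta}) - \bar{f}(\boldsymbol{\theta})\|_2 \le 2G$.

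Combining the two estimates gives $\Lambda_t(\boldsymbol{\theta}) \le 2r \cdot 2G = 4rG$, and the final step is the single numerical observation that $G = b + 2r \ge 2r$, so that $2r \le G$ and hence $4rG \le 2G^2$, which is the claimed bound. Because Lemma~\ref{lem:B1} holds for every $t \ge 0$, the conclusion is uniform in $t$.

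I do not anticipate a genuine obstacle: the result is an elementary consequence of Cauchy--Schwarz together with the uniform boundedness established in Lemma~\ref{lem:B1}. The only points warranting brief care are transferring the pointwise bound of Lemma~\ref{lem:B1} to the averaged map $\bar{f}$ via Jensen, and noticing that the strict positivity $b > 0$ appearing in $G = b + 2r$ is precisely what allows the crude product $4rG$ to be absorbed into $2G^2$.
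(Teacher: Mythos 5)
Your proof is correct and follows essentially the same route as the paper's: Cauchy--Schwarz together with the triangle inequality and Lemma~\ref{lem:B1} to get $\Lambda_t(\boldsymbol{\theta}) \le 2r \cdot 2G = 4rG$, followed by the observation that $2r \le G = b + 2r$ so that $4rG \le 2G^2$. Your two points of extra care---transferring the bound to $\bar{f}$ via Jensen's inequality and invoking the standing assumption $\|\boldsymbol{\theta}^{\star}\|_2 \le r$ for the factor $\|\boldsymbol{\theta} - \boldsymbol{\theta}^{\star}\|_2 \le 2r$---are exactly the steps the paper leaves implicit.
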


\begin{lemma}[Lemma 14 of \citep{zoufinite}]\label{lem:B4}
For any $\boldsymbol{\theta}_1, \boldsymbol{\theta}_2 \in \mathbb{R}^M$ such that $\|\boldsymbol{\theta}_1\|_2 \le r,  \|\boldsymbol{\theta}_2\|_2 \le r$, 
$$
|\Lambda_t(\boldsymbol{\theta}_1) - \Lambda_t(\boldsymbol{\theta}_2)| \leq 6G\|\boldsymbol{\theta}_1 - \boldsymbol{\theta}_2\|_2.
$$
\end{lemma}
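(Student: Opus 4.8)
The plan is to exploit that $\Lambda_t$ is a bilinear-type expression in $\boldsymbol{\theta} - \boldsymbol{\theta}^{\star}$ and $g_t(\boldsymbol{\theta}) := f_t(\boldsymbol{\theta}) - \bar{f}(\boldsymbol{\theta})$, so a standard add-and-subtract argument reduces the claim to a norm bound on $g_t$ together with its Lipschitz continuity. Writing $\Lambda_t(\boldsymbol{\theta}) = \ip{\boldsymbol{\theta} - \boldsymbol{\theta}^{\star}}{g_t(\boldsymbol{\theta})}$, I would insert the cross term $\ip{\boldsymbol{\theta}_2 - \boldsymbol{\theta}^{\star}}{g_t(\boldsymbol{\theta}_1)}$ to obtain the decomposition
$$
\Lambda_t(\boldsymbol{\theta}_1) - \Lambda_t(\boldsymbol{\theta}_2) = \ip{\boldsymbol{\theta}_1 - \boldsymbol{\theta}_2}{g_t(\boldsymbol{\theta}_1)} + \ip{\boldsymbol{\theta}_2 - \boldsymbol{\theta}^{\star}}{g_t(\boldsymbol{\theta}_1) - g_t(\boldsymbol{\theta}_2)}.
$$
Applying Cauchy--Schwarz to each term then leaves three quantities to control: the magnitude $\norm{g_t(\boldsymbol{\theta}_1)}_2$, the radius $\norm{\boldsymbol{\theta}_2 - \boldsymbol{\theta}^{\star}}_2$, and the increment $\norm{g_t(\boldsymbol{\theta}_1) - g_t(\boldsymbol{\theta}_2)}_2$.

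For the first two, Lemma~\ref{lem:B1} gives $\norm{f_t(\boldsymbol{\theta}_1)}_2 \le G$, and the same argument applied under the expectation gives $\norm{\bar{f}(\boldsymbol{\theta}_1)}_2 \le G$, so that $\norm{g_t(\boldsymbol{\theta}_1)}_2 \le 2G$; moreover $\norm{\boldsymbol{\theta}_2 - \boldsymbol{\theta}^{\star}}_2 \le \norm{\boldsymbol{\theta}_2}_2 + \norm{\boldsymbol{\theta}^{\star}}_2 \le 2r$ since both $\boldsymbol{\theta}_2$ and $\boldsymbol{\theta}^{\star}$ lie in the ball of radius $r$. The remaining ingredient is the Lipschitz continuity of $g_t$, for which I would show that both $f_t$ and $\bar{f}$ are Lipschitz with constant $\gamma + 1 \le 2$.

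The hard part will be establishing the Lipschitz bound for $f_t$, since the only nonlinearity is the greedy maximization $\max_{a} \boldsymbol{\phi}(\boldsymbol{S}_{t+1}^{\pi}, a)^{\top}\boldsymbol{\theta}$. Forming $f_t(\boldsymbol{\theta}_1) - f_t(\boldsymbol{\theta}_2)$ cancels the reward and factors out $\boldsymbol{\phi}(\boldsymbol{S}_t^{\pi}, A_t^{\pi})$, leaving a scalar difference that I would bound using the nonexpansiveness of the max operator, $|\max_{a} h_1(a) - \max_{a} h_2(a)| \le \max_{a} |h_1(a) - h_2(a)|$, together with $\norm{\boldsymbol{\phi}}_2 \le 1$. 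This yields $\norm{f_t(\boldsymbol{\theta}_1) - f_t(\boldsymbol{\theta}_2)}_2 \le (\gamma+1)\norm{\boldsymbol{\theta}_1 - \boldsymbol{\theta}_2}_2$; averaging the same pointwise inequality gives the identical bound for $\bar{f}$, so $\norm{g_t(\boldsymbol{\theta}_1) - g_t(\boldsymbol{\theta}_2)}_2 \le 2(\gamma+1)\norm{\boldsymbol{\theta}_1 - \boldsymbol{\theta}_2}_2 \le 4\norm{\boldsymbol{\theta}_1 - \boldsymbol{\theta}_2}_2$.

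Combining the pieces gives $|\Lambda_t(\boldsymbol{\theta}_1) - \Lambda_t(\boldsymbol{\theta}_2)| \le (2G + 8r)\norm{\boldsymbol{\theta}_1 - \boldsymbol{\theta}_2}_2$, and I would finish by absorbing the $8r$ term into $G$: since $2r \le b + 2r = G$, we have $8r \le 4G$, hence $2G + 8r \le 6G$, which delivers the stated constant.
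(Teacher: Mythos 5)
Your proof is correct, and the constants work out exactly: the add--subtract decomposition, the bounds $\|g_t(\boldsymbol{\theta}_1)\|_2\le 2G$ and $\|g_t(\boldsymbol{\theta}_1)-g_t(\boldsymbol{\theta}_2)\|_2\le 2(1+\gamma)\|\boldsymbol{\theta}_1-\boldsymbol{\theta}_2\|_2$ via nonexpansiveness of the max, and the absorption $2r\le G$ are all valid. Note that the paper itself gives no proof of this lemma --- it imports it as Lemma 14 of \citet{zoufinite} --- and your argument is essentially the standard one used there, so there is no divergence in approach. The only point worth flagging is that your bound $\|\boldsymbol{\theta}_2-\boldsymbol{\theta}^{\star}\|_2\le 2r$ silently uses $\|\boldsymbol{\theta}^{\star}\|_2\le r$, which is not stated in the lemma's hypotheses but is a standing assumption in every theorem of the paper (and in the choice of projection radius), so it is harmless in context.
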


\begin{lemma}\label{lem:B5}
Suppose the step-size sequence $\{\beta_i\}_{i \in \mathbb{N}}$ is constant, i.e., $\beta_i = \beta, ~\forall i \in \mathbb{N}$. Let $\tau_\beta := \min\{n \in \mathbb{N} : m \rho^n \leq \beta\}$, then for all $i \in \mathbb{N}$
\begin{equation}
\mathbb{E}^{\mu^\pi}\left\{\Lambda_i(\widehat{\boldsymbol{\theta}}_i)\right\} \leq 4G^2 m \rho^{\tau_\beta} + 6\beta G^2 \tau_\beta \leq 10\beta G^2 \tau_\beta
\end{equation}
for projected Q-learning iterates $(\widehat{\boldsymbol{\theta}}_i)_{i \in \mathbb{N}}$.
\end{lemma}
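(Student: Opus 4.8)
The plan is to control the Markovian bias $\Lambda_i(\widehat{\boldsymbol{\theta}}_i)$ by inserting the lagged iterate $\widehat{\boldsymbol{\theta}}_{i-\tau_\beta}$, which is nearly decoupled from the data observed at time $i$ once the chain has mixed for $\tau_\beta$ steps. Concretely, for $i \ge \tau_\beta$ I would split
\[
\mathbb{E}^{\mu^\pi}\{\Lambda_i(\widehat{\boldsymbol{\theta}}_i)\} = \underbrace{\mathbb{E}^{\mu^\pi}\{\Lambda_i(\widehat{\boldsymbol{\theta}}_i) - \Lambda_i(\widehat{\boldsymbol{\theta}}_{i-\tau_\beta})\}}_{\text{drift}} + \underbrace{\mathbb{E}^{\mu^\pi}\{\Lambda_i(\widehat{\boldsymbol{\theta}}_{i-\tau_\beta})\}}_{\text{mixing bias}},
\]
and bound the two pieces separately.

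For the drift term I would first establish a per-step increment bound: writing the Q-learning update as $\widehat{\boldsymbol{\theta}}_{j+1} = \Pi_r(\widehat{\boldsymbol{\theta}}_j + \beta f_j(\widehat{\boldsymbol{\theta}}_j))$ and using that $\Pi_r$ is non-expansive and fixes every point of the ball,
\[
\|\widehat{\boldsymbol{\theta}}_{j+1} - \widehat{\boldsymbol{\theta}}_j\|_2 = \|\Pi_r(\widehat{\boldsymbol{\theta}}_j + \beta f_j(\widehat{\boldsymbol{\theta}}_j)) - \Pi_r(\widehat{\boldsymbol{\theta}}_j)\|_2 \le \beta\|f_j(\widehat{\boldsymbol{\theta}}_j)\|_2 \le \beta G,
\]
where the last step is Lemma \ref{lem:B1} (valid since projection keeps $\|\widehat{\boldsymbol{\theta}}_j\|_2 \le r$). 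Telescoping over the $\tau_\beta$ steps gives $\|\widehat{\boldsymbol{\theta}}_i - \widehat{\boldsymbol{\theta}}_{i-\tau_\beta}\|_2 \le \beta G\tau_\beta$, and the $6G$-Lipschitz continuity of $\Lambda_i$ from Lemma \ref{lem:B4} then yields $\text{drift} \le 6G\cdot\beta G\tau_\beta = 6\beta G^2\tau_\beta$.

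For the mixing bias I would condition on the filtration $\mathcal{F}_{i-\tau_\beta}$ generated by the data through time $i-\tau_\beta$. Since $\widehat{\boldsymbol{\theta}}_{i-\tau_\beta}$ is $\mathcal{F}_{i-\tau_\beta}$-measurable it acts as a constant under this conditioning, so with $g_i(\boldsymbol{\theta}) := \mathbb{E}^{\mu^\pi}[f_i(\boldsymbol{\theta})\mid\mathcal{F}_{i-\tau_\beta}]$ one has
\[
\mathbb{E}^{\mu^\pi}[\Lambda_i(\widehat{\boldsymbol{\theta}}_{i-\tau_\beta})\mid\mathcal{F}_{i-\tau_\beta}] = \left\langle \widehat{\boldsymbol{\theta}}_{i-\tau_\beta} - \boldsymbol{\theta}^\star,\ g_i(\widehat{\boldsymbol{\theta}}_{i-\tau_\beta}) - \bar f(\widehat{\boldsymbol{\theta}}_{i-\tau_\beta})\right\rangle.
\]
By the Markov property $g_i(\boldsymbol{\theta})$ depends on the past only through the state $\tau_\beta$ steps earlier; since the transition tuple is obtained by applying a fixed kernel to $\boldsymbol{S}_i$, its law is within total variation $m\rho^{\tau_\beta}$ of stationarity by Assumption \ref{assume:ergodicity}, and as $\|f_i(\boldsymbol{\theta})\|_2 \le G$ (Lemma \ref{lem:B1}) a standard coupling bound gives $\|g_i(\boldsymbol{\theta}) - \bar f(\boldsymbol{\theta})\|_2 \le 2G m\rho^{\tau_\beta}$ for each fixed $\boldsymbol{\theta}$. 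Combining with $\|\widehat{\boldsymbol{\theta}}_{i-\tau_\beta} - \boldsymbol{\theta}^\star\|_2 \le 2r$ and $r \le G$ via Cauchy--Schwarz bounds $\text{mixing bias} \le 4rG m\rho^{\tau_\beta} \le 4G^2 m\rho^{\tau_\beta}$. For the remaining case $i < \tau_\beta$ I would instead trace back to the deterministic initializer $\widehat{\boldsymbol{\theta}}_0$: under the stationary start the data at time $i$ is exactly $\mu^\pi$-distributed, so $\mathbb{E}^{\mu^\pi}\{\Lambda_i(\widehat{\boldsymbol{\theta}}_0)\} = 0$ and only the drift over $i < \tau_\beta$ steps survives, again bounded by $6\beta G^2\tau_\beta$.

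Adding the two contributions gives $\mathbb{E}^{\mu^\pi}\{\Lambda_i(\widehat{\boldsymbol{\theta}}_i)\} \le 4G^2 m\rho^{\tau_\beta} + 6\beta G^2\tau_\beta$ for every $i$, and the final simplification uses the defining inequality $m\rho^{\tau_\beta} \le \beta$ together with $\tau_\beta \ge 1$ to absorb $4G^2 m\rho^{\tau_\beta} \le 4\beta G^2\tau_\beta$, producing the stated $10\beta G^2\tau_\beta$. I expect the mixing-bias step to be the main obstacle: the care lies in legitimately freezing $\widehat{\boldsymbol{\theta}}_{i-\tau_\beta}$ through the measurability/conditioning argument and in converting the total-variation ergodicity bound into uniform control of $\mathbb{E}[f_i(\boldsymbol{\theta})\mid\mathcal{F}_{i-\tau_\beta}] - \bar f(\boldsymbol{\theta})$ for the bounded but $\boldsymbol{\theta}$-dependent test function. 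The drift term and the closing arithmetic are routine given Lemmas \ref{lem:B1} and \ref{lem:B4}.
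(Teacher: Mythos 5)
Your proposal is correct and follows essentially the same route as the paper: the identical lagged-iterate decomposition into a drift term (bounded by $6\beta G^2\tau_\beta$ via non-expansiveness of $\Pi_r$, Lemma \ref{lem:B1}, and Lemma \ref{lem:B4}), a mixing-bias term of order $4G^2 m\rho^{\tau_\beta}$, the same zero-mean argument handling $i \leq \tau_\beta$ through the deterministic initializer $\widehat{\boldsymbol{\theta}}_0$, and the same closing arithmetic using $m\rho^{\tau_\beta} \leq \beta$. The only cosmetic difference is that where the paper black-boxes the decoupling of $\widehat{\boldsymbol{\theta}}_{i-\tau_\beta}$ from $O_i$ by citing Lemma 10 of \citep{bhandari2018finite} together with the uniform bound $|\Lambda_i| \leq 2G^2$ (Lemma \ref{lem:B3}), you re-derive that decoupling inline via conditioning on $\mathcal{F}_{i-\tau_\beta}$, the Markov property, a total-variation coupling, and Cauchy--Schwarz, arriving at the same (in fact marginally sharper, since $4rG \leq 4G^2$) constant.
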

\begin{proof}
For all $i \geq 0$, 
\begin{align*}
\left\|\widehat{\boldsymbol{\theta}}_{i+1} - \widehat{\boldsymbol{\theta}}_i\right\|_2 &= \left\|\Pi_{r}\{\widehat{\boldsymbol{\theta}}_i + \beta f_i(\widehat{\boldsymbol{\theta}}_i)\} - \Pi_{r}\widehat{\boldsymbol{\theta}}_i\right\|_2  \\
&\leq \left\|\widehat{\boldsymbol{\theta}}_i + \beta f_i(\widehat{\boldsymbol{\theta}}_i) - \widehat{\boldsymbol{\theta}}_i\right\|_2 && \text{(non-expansiveness of $\Pi_r$)}\\
&\leq \beta G && \text{(by Lemma \ref{lem:B1})}.
\end{align*}
When $i > \tau_{\beta}$, 
\begin{align*}
\left|\Lambda_i(\widehat{\boldsymbol{\theta}}_i) - \Lambda_i(\widehat{\boldsymbol{\theta}}_{i-\tau_{\beta}})\right| &\leq 6G \left\|\widehat{\boldsymbol{\theta}}_i - \widehat{\boldsymbol{\theta}}_{i-\tau_{\beta}}\right\|_2 && \text{(by Lemma \ref{lem:B4})}\\
&\leq 6G\left(\left\|\widehat{\boldsymbol{\theta}}_i - \widehat{\boldsymbol{\theta}}_{i-1}\right\|_2 + \cdots + \left\|\widehat{\boldsymbol{\theta}}_{i-\tau_\beta+1} - \widehat{\boldsymbol{\theta}}_{i-\tau_\beta}\right\|_2\right) \\
&\leq 6G(\beta G + \cdots + \beta G) \\
&\leq 6\beta G^2 \tau_\beta.
\end{align*}
Therefore,
\begin{equation}\label{eq:lambda_recursion_tau}
\Lambda_i(\widehat{\boldsymbol{\theta}}_i) \leq \Lambda_i(\widehat{\boldsymbol{\theta}}_{i-\tau_\beta}) + 6\beta G^2 \tau_\beta. 
\end{equation}
By the same logic, when $i \leq \tau_\beta$,  we have 
\begin{equation*}\label{eq:lambda_recursion_tau0}
\Lambda_i(\widehat{\boldsymbol{\theta}}_i) \leq \Lambda_i(\widehat{\boldsymbol{\theta}}_{0}) + 6\beta G^2 \tau_\beta.
\end{equation*}
Note that $\mathbb{E}^{\mu^\pi}\Lambda_i(\boldsymbol{\theta}) = 0$ for any fixed $\boldsymbol{\theta} \in \mathbb{R}^M$. Therefore, when $i \leq \tau_\beta$, $\mathbb{E}^{\mu^\pi}\Lambda_i(\widehat{\boldsymbol{\theta}}_{0}) = 0$ yields $\mathbb{E}^{\mu^\pi}\Lambda_i(\widehat{\boldsymbol{\theta}}_i) \leq 6\beta G^2 \tau_\beta$. Now, consider the other case when $i > \tau_\beta$, let $O_i := \{\boldsymbol{S}_i^{\pi}, A_i^{\pi}, \boldsymbol{S}_{i+1}^{\pi}\}$ and write $\Lambda(\widehat{\boldsymbol{\theta}}_{i-\tau_\beta}, O_i) = \Lambda_i(\widehat{\boldsymbol{\theta}}_{i-\tau_\beta})$ to highlight its dependence on $O_i$. Consider random variables $\widehat{\boldsymbol{\theta}}_{i-\tau_\beta}'$ and $O_i'$ drawn independently from the marginal distributions of $\widehat{\boldsymbol{\theta}}_{i-\tau_\beta}$ and $O_i$, respectively. Then we obtain
\begin{equation*}
\mathbb{E}^{\mu^\pi}\left\{\Lambda(\widehat{\boldsymbol{\theta}}_{i-\tau_\beta}', O_i')\right\} = \mathbb{E}\left[\mathbb{E}\{\Lambda(\widehat{\boldsymbol{\theta}}_{i-\tau_\beta}', O_i') \mid \widehat{\boldsymbol{\theta}}_{i-\tau_\beta}'\}\right] = 0.
\end{equation*}
Furthermore, since 
\begin{itemize}
    \item $\widehat{\boldsymbol{\theta}}_{i-\tau_\beta} \to \boldsymbol{S}_{i-\tau_\beta} \to \boldsymbol{S}_i^{\pi} \to O_i$ forms a Markov chain
    \item 
$|\Lambda(\boldsymbol{\theta}, O_i)| \leq 2G^2 ~~ \text{for any} ~~\|\boldsymbol{\theta}\|_2 \leq r$
\end{itemize}
Lemma 10 of \citep{bhandari2018finite} yields
\begin{equation}\label{eq:lemma10_result}
\left|\mathbb{E}^{\mu^\pi}\{\Lambda(\widehat{\boldsymbol{\theta}}_{i-\tau_\beta}, O_i)\}\right| \leq 4G^2 m \rho^{\tau_\beta}. 
\end{equation}
Combining \eqref{eq:lambda_recursion_tau} and \eqref{eq:lemma10_result}, we get
\begin{equation*}
\mathbb{E}^{\mu^\pi}\Lambda_i(\widehat{\boldsymbol{\theta}}_i) \leq 4G^2 m \rho^{\tau_\beta} + 6\beta G^2 \tau_\beta \leq 4\beta G^2 + 6\beta G^2 \tau_\beta \leq 10\beta G^2 \tau_\beta.
\end{equation*}
\end{proof}

\begin{lemma}\label{lem:B6}
Suppose the step-size sequence $\{\beta_i\}_{i \in \mathbb{N}}$ is decreasing, i.e., $\beta_{i+1} < \beta_i, ~\forall i \in \mathbb{N}$. Fix $i \le t$, and let $\tau_{\beta_t} := \min\{n \in \mathbb{N} : m \rho^n \leq \beta_t\}$, then
\begin{align*}
\mathbb{E}^{\mu^\pi}\left\{\Lambda_i(\widehat{\boldsymbol{\theta}}_i)\right\} &\leq 6G^2 \tau_{\beta_t} \beta_0 \quad \text{for} \quad i \leq \tau_{\beta_t} \\
\mathbb{E}^{\mu^\pi}\left\{\Lambda_i(\widehat{\boldsymbol{\theta}}_i)\right\} &\leq 10 G^2 \tau_{\beta_t} \beta_{i-\tau_{\beta_t}} \quad \text{for} \quad i > \tau_{\beta_t}
\end{align*}
for projected Q-learning iterates $(\widehat{\boldsymbol{\theta}}_i)_{i \in \mathbb{N}}$.
\end{lemma}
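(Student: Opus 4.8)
The plan is to mirror the argument of Lemma \ref{lem:B5} for the constant step-size case, replacing the uniform step-size $\beta$ by the decreasing sequence $\{\beta_j\}$ and exploiting its monotonicity at every place where $\beta$ previously appeared. The whole argument rests on controlling two sources of error separately: a \emph{drift} term measuring how much $\Lambda_i$ changes as the iterate moves from $\widehat{\boldsymbol{\theta}}_{i-\tau_{\beta_t}}$ to $\widehat{\boldsymbol{\theta}}_i$, and a \emph{mixing} term measuring how far the law of $(\widehat{\boldsymbol{\theta}}_{i-\tau_{\beta_t}}, O_i)$ sits from its decoupled stationary counterpart.

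First I would establish the single-step increment bound. Since $\widehat{\boldsymbol{\theta}}_{j+1} = \Pi_r\{\widehat{\boldsymbol{\theta}}_j + \beta_j f_j(\widehat{\boldsymbol{\theta}}_j)\}$, non-expansiveness of $\Pi_r$ together with Lemma \ref{lem:B1} gives $\|\widehat{\boldsymbol{\theta}}_{j+1} - \widehat{\boldsymbol{\theta}}_j\|_2 \le \beta_j G$ for every $j$. Next, for $i > \tau_{\beta_t}$, I would telescope over the window $\{i-\tau_{\beta_t}, \ldots, i-1\}$ and invoke the Lipschitz bound of Lemma \ref{lem:B4} to obtain
\begin{equation*}
\left|\Lambda_i(\widehat{\boldsymbol{\theta}}_i) - \Lambda_i(\widehat{\boldsymbol{\theta}}_{i-\tau_{\beta_t}})\right| \le 6G \sum_{j=i-\tau_{\beta_t}}^{i-1} \|\widehat{\boldsymbol{\theta}}_{j+1} - \widehat{\boldsymbol{\theta}}_j\|_2 \le 6G^2 \sum_{j=i-\tau_{\beta_t}}^{i-1} \beta_j.
\end{equation*}
Here monotonicity enters decisively: every index in the window is at least $i-\tau_{\beta_t}$, so $\beta_j \le \beta_{i-\tau_{\beta_t}}$, and the sum is at most $\tau_{\beta_t}\beta_{i-\tau_{\beta_t}}$. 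The case $i \le \tau_{\beta_t}$ is handled identically by telescoping back to $\widehat{\boldsymbol{\theta}}_0$ and bounding each $\beta_j \le \beta_0$, which yields the drift bound $6G^2\tau_{\beta_t}\beta_0$; taking expectations and using $\mathbb{E}^{\mu^\pi}\Lambda_i(\widehat{\boldsymbol{\theta}}_0) = 0$ (as $\widehat{\boldsymbol{\theta}}_0$ is deterministic and $\mathbb{E}^{\mu^\pi}\Lambda_i(\boldsymbol{\theta}) = 0$ for fixed $\boldsymbol{\theta}$) delivers the first claimed inequality.

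For $i > \tau_{\beta_t}$, the remaining task is the mixing term. Writing $O_i := \{\boldsymbol{S}_i^\pi, A_i^\pi, \boldsymbol{S}_{i+1}^\pi\}$ and $\Lambda(\widehat{\boldsymbol{\theta}}_{i-\tau_{\beta_t}}, O_i) = \Lambda_i(\widehat{\boldsymbol{\theta}}_{i-\tau_{\beta_t}})$, the chain $\widehat{\boldsymbol{\theta}}_{i-\tau_{\beta_t}} \to \boldsymbol{S}_{i-\tau_{\beta_t}} \to \boldsymbol{S}_i^\pi \to O_i$ is Markov and $|\Lambda(\boldsymbol{\theta}, O_i)| \le 2G^2$ by Lemma \ref{lem:B3}, so Lemma 10 of \citet{bhandari2018finite} bounds the expectation of the decoupled term by $4G^2 m\rho^{\tau_{\beta_t}}$. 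The definition $\tau_{\beta_t} = \min\{n : m\rho^n \le \beta_t\}$ gives $m\rho^{\tau_{\beta_t}} \le \beta_t$, and since $i - \tau_{\beta_t} \le i \le t$ and the step-sizes decrease, $\beta_t \le \beta_{i-\tau_{\beta_t}}$. Combining the drift and mixing bounds then yields $\mathbb{E}^{\mu^\pi}\Lambda_i(\widehat{\boldsymbol{\theta}}_i) \le 4G^2\beta_{i-\tau_{\beta_t}} + 6G^2\tau_{\beta_t}\beta_{i-\tau_{\beta_t}} \le 10G^2\tau_{\beta_t}\beta_{i-\tau_{\beta_t}}$, using $\tau_{\beta_t} \ge 1$.

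The main obstacle is bookkeeping rather than any new idea: one must use the fixed horizon $t$ consistently so that $\tau_{\beta_t}$, the mixing bound $\beta_t$, and the window step-size $\beta_{i-\tau_{\beta_t}}$ are correctly chained through the inequality $\beta_t \le \beta_{i-\tau_{\beta_t}}$, which is precisely what licenses expressing both error sources in terms of the single quantity $\beta_{i-\tau_{\beta_t}}$. Every analytic ingredient (the increment bound, the Lipschitz and boundedness estimates, and the mixing lemma) is inherited unchanged from the constant step-size proof; the only genuinely new element is the monotone telescoping of $\sum_j \beta_j$, which replaces the factor $\tau_\beta\beta$ by $\tau_{\beta_t}\beta_{i-\tau_{\beta_t}}$.
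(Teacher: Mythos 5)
Your proposal is correct and follows essentially the same route as the paper's proof: the same non-expansiveness increment bound $\|\widehat{\boldsymbol{\theta}}_{j+1}-\widehat{\boldsymbol{\theta}}_j\|_2 \le \beta_j G$, the same Lipschitz telescoping via Lemma \ref{lem:B4} with monotonicity giving $6G^2\tau_{\beta_t}\beta_{i-\tau_{\beta_t}}$ (resp.\ $6G^2\tau_{\beta_t}\beta_0$), and the same mixing step via Lemma 10 of \citet{bhandari2018finite} giving $4G^2 m\rho^{\tau_{\beta_t}} \le 4G^2\beta_t \le 4G^2\beta_{i-\tau_{\beta_t}}$. The chaining $\beta_t \le \beta_{i-\tau_{\beta_t}}$ that you highlight as the key bookkeeping point is exactly how the paper combines the two error sources into the stated $10G^2\tau_{\beta_t}\beta_{i-\tau_{\beta_t}}$ bound.
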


\begin{proof}
For all $i \geq 0$,
\begin{align*}
\left\|\widehat{\boldsymbol{\theta}}_{i+1} - \widehat{\boldsymbol{\theta}}_i\right\|_2 &= \left\|\Pi_{r}\{\widehat{\boldsymbol{\theta}}_i + \beta_i f_i(\widehat{\boldsymbol{\theta}}_i)\} - \Pi_{r}\{\widehat{\boldsymbol{\theta}}_i\}\right\|_2 \\
&\leq \left\|\widehat{\boldsymbol{\theta}}_i + \beta_i f_i(\widehat{\boldsymbol{\theta}}_i) - \widehat{\boldsymbol{\theta}}_i\right\|_2 && \text{(non-expansiveness of $\Pi_r$)}\\
&\leq \beta_i G. && \text{(by Lemma \ref{lem:B1})}
\end{align*}
When $i > \tau_{\beta_t}$,
\begin{align*}
|\Lambda_i(\widehat{\boldsymbol{\theta}}_i) - \Lambda_i(\widehat{\boldsymbol{\theta}}_{i-\tau_{\beta_t}})| &\leq 6G \left\|\widehat{\boldsymbol{\theta}}_i - \widehat{\boldsymbol{\theta}}_{i-\tau_{\beta_t}}\right\|_2 && \text{(by Lemma \ref{lem:B4})} \\ &\leq 6G\left(\left\|\widehat{\boldsymbol{\theta}}_i - \widehat{\boldsymbol{\theta}}_{i-1}\right\|_2 + \cdots + \left\|\widehat{\boldsymbol{\theta}}_{i-\tau_{\beta_t}+1} - \widehat{\boldsymbol{\theta}}_{i-\tau_{\beta_t}}\right\|_2\right) \\
&\leq 6G(\beta_{i-1} G + \cdots + \beta_{i-\tau_{\beta_t}} G) \\
&\leq 6 G^2 \sum_{j=i-\tau_{\beta_t}}^{i-1} \beta_j.
\end{align*}
Therefore,
\begin{equation}\label{eq:lambda_recursion_decreasing}
\Lambda_i(\widehat{\boldsymbol{\theta}}_i) \leq \Lambda_i(\widehat{\boldsymbol{\theta}}_{i-\tau_{\beta_t}}) + 6 G^2 \sum_{j=i-\tau_{\beta_t}}^{i-1} \beta_j \le  \Lambda_i(\widehat{\boldsymbol{\theta}}_{i-\tau_{\beta_t}}) + 6 G^2 \tau_{\beta_t}\beta_{i-\tau_{\beta_t}} .
\end{equation}
When $i \leq \tau_{\beta_t}$, by the same logic, 
$$
\Lambda_i(\widehat{\boldsymbol{\theta}}_i) \leq \Lambda_i(\widehat{\boldsymbol{\theta}}_{0}) + 6 G^2 \sum_{j=0}^{i-1} \beta_j \le \Lambda_i(\widehat{\boldsymbol{\theta}}_{0}) + 6 G^2 \tau_{\beta_t} \beta_0.
$$
Since $\mathbb{E}^{\mu^\pi}\Lambda_i(\boldsymbol{\theta}) = 0$ for any fixed $\boldsymbol{\theta} \in \mathbb{R}^M$, we know $\mathbb{E}^{\mu^\pi}\Lambda_i(\widehat{\boldsymbol{\theta}}_{0}) = 0$. For $i \leq \tau_{\beta_t}$, we thus obtain $\mathbb{E}^{\mu^\pi}\Lambda_i(\widehat{\boldsymbol{\theta}}_i) \leq 6G^2 \tau_{\beta_t} \beta_0 $. When $i > \tau_{\beta_t}$, using the same argument as in Lemma \ref{lem:B5}, we know $\mathbb{E}^{\mu^\pi}\Lambda_i(\widehat{\boldsymbol{\theta}}_{i-\tau_{\beta_t}}) \le 4G^2 \beta_t$. Combined with \eqref{eq:lambda_recursion_decreasing}, we have
\begin{align*}
\mathbb{E}^{\mu^\pi} \Lambda_i(\widehat{\boldsymbol{\theta}}_i) &\leq 4 G^2 \beta_t + 6 G^2  \tau_{\beta_t} \beta_{i - \tau_{\beta_t}} \le 10G^2\tau_{\beta_t}\beta_{i - \tau_{\beta_t}}.
\end{align*}
\end{proof}

\begin{lemma}\label{lem:B7}
Suppose the step-size sequence $\{\beta_i\}_{i \in \mathbb{N}}$ is constant, i.e., $\beta_i = \beta, ~\forall i \in \mathbb{N}$. Let $\tau_\beta := \min\{n \in \mathbb{N} : m \rho^n \leq \beta\}$, then for all $i \in \mathbb{N}$
\begin{equation}
\mathbb{E}^{\mu^\pi}\left\{\Lambda_i(\widehat{\boldsymbol{\theta}}^{\text{im}}_i)\right\} \leq 4G^2 m \rho^{\tau_\beta} + 6\beta G^2 \tau_\beta \leq 10\beta G^2 \tau_\beta
\end{equation}
for projected implicit Q-learning iterates $(\widehat{\boldsymbol{\theta}}^{\text{im}}_i)_{i \in \mathbb{N}}$.
\end{lemma}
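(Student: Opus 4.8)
The plan is to reduce the statement directly to the already-established constant-step-size bound in Lemma \ref{lem:B5}, exploiting the fact that the implicit reformulation only introduces a smaller, state-dependent effective step-size. By Proposition \ref{prop:implicit_q_learning}, writing $\boldsymbol{\phi}_i := \boldsymbol{\phi}(\boldsymbol{S}_i^{\pi}, A_i^{\pi})$ and $\widetilde{\beta}_i := \beta/(1 + \beta\|\boldsymbol{\phi}_i\|_2^2)$, the projected implicit Q-learning iterate can be expressed as $\widehat{\boldsymbol{\theta}}^{\text{im}}_{i+1} = \Pi_r\{\widehat{\boldsymbol{\theta}}^{\text{im}}_i + \widetilde{\beta}_i f_i(\widehat{\boldsymbol{\theta}}^{\text{im}}_i)\}$, where $f_i$ is the same random operator used throughout the preliminary lemmas. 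Thus the implicit iterates obey structurally the same recursion as standard projected Q-learning, with $\beta$ replaced by $\widetilde{\beta}_i$.

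The first and only substantive step is to establish the single-step displacement bound $\|\widehat{\boldsymbol{\theta}}^{\text{im}}_{i+1} - \widehat{\boldsymbol{\theta}}^{\text{im}}_i\|_2 \leq \beta G$. Since $\|\boldsymbol{\phi}_i\|_2^2 \geq 0$, the effective step-size satisfies $0 < \widetilde{\beta}_i \leq \beta$ for every $i$. Combining the non-expansiveness of $\Pi_r$ with Lemma \ref{lem:B1} (which bounds $\|f_i(\widehat{\boldsymbol{\theta}}^{\text{im}}_i)\|_2 \leq G$ on the projection ball, using that $\Pi_r$ keeps $\|\widehat{\boldsymbol{\theta}}^{\text{im}}_i\|_2 \leq r$) gives $\|\widehat{\boldsymbol{\theta}}^{\text{im}}_{i+1} - \widehat{\boldsymbol{\theta}}^{\text{im}}_i\|_2 \leq \widetilde{\beta}_i G \leq \beta G$, which is \emph{exactly} the displacement bound driving Lemma \ref{lem:B5}.

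With this in hand, the remainder of the argument is verbatim identical to the proof of Lemma \ref{lem:B5}. Telescoping the displacement bound across the window $[i-\tau_\beta,\, i]$ and applying the Lipschitz estimate of Lemma \ref{lem:B4} yields $\Lambda_i(\widehat{\boldsymbol{\theta}}^{\text{im}}_i) \leq \Lambda_i(\widehat{\boldsymbol{\theta}}^{\text{im}}_{i-\tau_\beta}) + 6\beta G^2 \tau_\beta$ for $i > \tau_\beta$, with the analogous bound against $\widehat{\boldsymbol{\theta}}^{\text{im}}_0$ when $i \leq \tau_\beta$. Because the data are still generated by the \emph{fixed} behavioral policy $\pi$, the Markov structure $\widehat{\boldsymbol{\theta}}^{\text{im}}_{i-\tau_\beta} \to \boldsymbol{S}_{i-\tau_\beta} \to \boldsymbol{S}_i^{\pi} \to O_i$ is unchanged; invoking $\mathbb{E}^{\mu^\pi}\Lambda_i(\boldsymbol{\theta}) = 0$, the uniform bound $|\Lambda(\boldsymbol{\theta}, O_i)| \leq 2G^2$ from Lemma \ref{lem:B3}, and Lemma 10 of \citep{bhandari2018finite} then gives $|\mathbb{E}^{\mu^\pi}\Lambda(\widehat{\boldsymbol{\theta}}^{\text{im}}_{i-\tau_\beta}, O_i)| \leq 4G^2 m \rho^{\tau_\beta}$, and combining these produces the claimed $4G^2 m\rho^{\tau_\beta} + 6\beta G^2 \tau_\beta \leq 10\beta G^2 \tau_\beta$.

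I do not anticipate a genuine obstacle: the implicit structure enters only through the displacement bound, and since $\widetilde{\beta}_i \leq \beta$ shrinks the step, every quantity controlled in Lemma \ref{lem:B5} is controlled by the same constants. The one point deserving care is that all the auxiliary lemmas (Lemmas \ref{lem:B1}, \ref{lem:B3}, \ref{lem:B4}) require $\|\widehat{\boldsymbol{\theta}}^{\text{im}}_i\|_2 \leq r$, which is guaranteed by the projection; so the entire chain of estimates transfers without modification.
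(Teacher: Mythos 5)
Your proposal is correct and takes essentially the same approach as the paper: the paper's proof likewise establishes the one-step displacement bound $\|\widehat{\boldsymbol{\theta}}^{\text{im}}_{i+1} - \widehat{\boldsymbol{\theta}}^{\text{im}}_i\|_2 \le \tilde\beta_i G \le \beta G$ via non-expansiveness of $\Pi_r$, Lemma \ref{lem:B1}, and $\tilde\beta_i \le \beta$, and then defers the remainder verbatim to the argument of Lemma \ref{lem:B5}. Your write-up is in fact slightly more careful than the paper's, since you spell out the telescoping/Markov-chain steps and use the correct squared feature norm in $\tilde\beta_i = \beta/(1+\beta\|\boldsymbol{\phi}_i\|_2^2)$, whereas the paper's proof contains a minor typo omitting the square.
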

\begin{proof}
For all $i \geq 0$, let $\tilde{\beta}_i := \beta/(1+\beta\|\boldsymbol{\phi}(\boldsymbol{S}_i^{\pi},A_i^{\pi})\|_2)$. Observe that
\begin{align*}
\left\|\widehat{\boldsymbol{\theta}}^{\text{im}}_{i+1} - \widehat{\boldsymbol{\theta}}^{\text{im}}_i\right\|_2 &= \left\|\Pi_{r}\{\widehat{\boldsymbol{\theta}}^{\text{im}}_i + \tilde\beta_i f_i(\widehat{\boldsymbol{\theta}}^{\text{im}}_i)\} - \Pi_{r}\widehat{\boldsymbol{\theta}}^{\text{im}}_i\right\|_2  \\
&\leq \left\|\widehat{\boldsymbol{\theta}}^{\text{im}}_i + \tilde\beta_i f_i(\widehat{\boldsymbol{\theta}}^{\text{im}}_i) - \widehat{\boldsymbol{\theta}}^{\text{im}}_i\right\|_2 && \text{(non-expansiveness of $\Pi_r$)}\\
&\leq \tilde\beta_i G && \text{(by Lemma \ref{lem:B1})} \\
&\leq \beta G && \text{since $\tilde\beta_i \le \beta$}.
\end{align*}
The remainder of the proof follows the same arguments as those used in the proof of Lemma \ref{lem:B5}.
\end{proof}

\begin{lemma}\label{lem:B8}
Suppose the step-size sequence $\{\beta_i\}_{i \in \mathbb{N}}$ is decreasing, i.e., $\beta_{i+1} < \beta_i, ~\forall i \in \mathbb{N}$. Fix $i \le t$, and let $\tau_{\beta_t} := \min\{n \in \mathbb{N} : m \rho^n \leq \beta_t\}$, then
\begin{align*}
\mathbb{E}^{\mu^\pi}\left\{\Lambda_i(\widehat{\boldsymbol{\theta}}^{\text{im}}_i)\right\} &\leq 6G^2 \tau_{\beta_t} \beta_0 \quad \text{for} \quad i \leq \tau_{\beta_t} \\
\mathbb{E}^{\mu^\pi}\left\{\Lambda_i(\widehat{\boldsymbol{\theta}}^{\text{im}}_i)\right\} &\leq 10 G^2 \tau_{\beta_t} \beta_{i-\tau_{\beta_t}} \quad \text{for} \quad i > \tau_{\beta_t}
\end{align*}
for projected implicit Q-learning iterates $(\widehat{\boldsymbol{\theta}}^{\text{im}}_i)_{i \in \mathbb{N}}$.
\end{lemma}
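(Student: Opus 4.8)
The plan is to mirror the proof of Lemma \ref{lem:B6} verbatim, exploiting the single structural fact that distinguishes the implicit recursion from the standard one: the adaptive effective step-size is uniformly dominated by the nominal step-size. Concretely, writing the projected implicit update as $\widehat{\boldsymbol{\theta}}^{\text{im}}_{i+1} = \Pi_r\{\widehat{\boldsymbol{\theta}}^{\text{im}}_i + \widetilde{\beta}_i f_i(\widehat{\boldsymbol{\theta}}^{\text{im}}_i)\}$ with $\widetilde{\beta}_i := \beta_i/(1 + \beta_i\|\boldsymbol{\phi}(\boldsymbol{S}_i^{\pi},A_i^{\pi})\|_2^2)$, I would first reproduce the displacement bound exactly as in Lemma \ref{lem:B7}: by non-expansiveness of $\Pi_r$ and Lemma \ref{lem:B1}, $\|\widehat{\boldsymbol{\theta}}^{\text{im}}_{i+1} - \widehat{\boldsymbol{\theta}}^{\text{im}}_i\|_2 \leq \widetilde{\beta}_i G \leq \beta_i G$, where the final inequality uses $\widetilde{\beta}_i \leq \beta_i$. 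This is the only place the implicit structure enters.

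Once this per-step bound is in hand, the remaining argument is identical to Lemma \ref{lem:B6}. For $i > \tau_{\beta_t}$, I would telescope through Lemma \ref{lem:B4} to get $|\Lambda_i(\widehat{\boldsymbol{\theta}}^{\text{im}}_i) - \Lambda_i(\widehat{\boldsymbol{\theta}}^{\text{im}}_{i-\tau_{\beta_t}})| \leq 6G\sum_{j=i-\tau_{\beta_t}}^{i-1}\beta_j \leq 6G^2\tau_{\beta_t}\beta_{i-\tau_{\beta_t}}$, where the last step invokes monotonicity of the step-size sequence. For the anchored term I would use the Markov structure $\widehat{\boldsymbol{\theta}}^{\text{im}}_{i-\tau_{\beta_t}} \to \boldsymbol{S}_{i-\tau_{\beta_t}} \to \boldsymbol{S}_i^{\pi} \to O_i$ together with the uniform boundedness $|\Lambda(\boldsymbol{\theta}, O_i)| \leq 2G^2$ from Lemma \ref{lem:B3}, apply Lemma 10 of \citep{bhandari2018finite} to obtain $|\mathbb{E}^{\mu^\pi}\Lambda_i(\widehat{\boldsymbol{\theta}}^{\text{im}}_{i-\tau_{\beta_t}})| \leq 4G^2 m\rho^{\tau_{\beta_t}} \leq 4G^2\beta_t$, and combine to reach the claimed $10G^2\tau_{\beta_t}\beta_{i-\tau_{\beta_t}}$ bound. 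The case $i \leq \tau_{\beta_t}$ follows by anchoring at $\widehat{\boldsymbol{\theta}}^{\text{im}}_0$ and using $\mathbb{E}^{\mu^\pi}\Lambda_i(\boldsymbol{\theta}) = 0$ for fixed $\boldsymbol{\theta}$, yielding $6G^2\tau_{\beta_t}\beta_0$.

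There is no genuine obstacle here; the crux is simply recognizing that the implicit reformulation preserves every probabilistic ingredient of the standard analysis. The effective step-size $\widetilde{\beta}_i$ is measurable with respect to the same filtration and dominated pointwise by $\beta_i$, so the mixing-time decomposition, the coupling argument underpinning Lemma 10 of \citep{bhandari2018finite}, and the mean-zero cancellation all transfer unchanged. Accordingly, I would write the proof tersely: establish the displacement bound $\|\widehat{\boldsymbol{\theta}}^{\text{im}}_{i+1} - \widehat{\boldsymbol{\theta}}^{\text{im}}_i\|_2 \leq \beta_i G$ and then state that the remainder follows the same arguments as in Lemma \ref{lem:B6}, exactly as Lemma \ref{lem:B7} deferred to Lemma \ref{lem:B5}.
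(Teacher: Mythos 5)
Your proposal is correct and follows exactly the paper's own proof: establish the per-step displacement bound $\|\widehat{\boldsymbol{\theta}}^{\text{im}}_{i+1} - \widehat{\boldsymbol{\theta}}^{\text{im}}_i\|_2 \le \widetilde{\beta}_i G \le \beta_i G$ via non-expansiveness of $\Pi_r$, Lemma \ref{lem:B1}, and $\widetilde{\beta}_i \le \beta_i$, then defer the remainder to the argument of Lemma \ref{lem:B6}, precisely as the paper does (mirroring how Lemma \ref{lem:B7} defers to Lemma \ref{lem:B5}). The only blemish is a typo in your intermediate telescoping display, where $6G\sum_{j}\beta_j$ should read $6G^2\sum_{j}\beta_j$, but your final bound is stated correctly.
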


\begin{proof}
For all $i \geq 0$, let $\tilde{\beta}_i := \beta_i/(1+\beta_i\|\boldsymbol{\phi}(\boldsymbol{S}_i^{\pi},A_i^{\pi})\|_2)$. Observe that
\begin{align*}
\left\|\widehat{\boldsymbol{\theta}}^{\text{im}}_{i+1} - \widehat{\boldsymbol{\theta}}^{\text{im}}_i\right\|_2 &= \left\|\Pi_{r}\{\widehat{\boldsymbol{\theta}}^{\text{im}}_i + \tilde\beta_i f_i(\widehat{\boldsymbol{\theta}}^{\text{im}}_i)\} - \Pi_{r}\widehat{\boldsymbol{\theta}}^{\text{im}}_i\right\|_2  \\
&\leq \left\|\widehat{\boldsymbol{\theta}}^{\text{im}}_i + \tilde\beta_i f_i(\widehat{\boldsymbol{\theta}}^{\text{im}}_i) - \widehat{\boldsymbol{\theta}}^{\text{im}}_i\right\|_2 && \text{(non-expansiveness of $\Pi_r$)}\\
&\leq \tilde\beta_i G && \text{(by Lemma \ref{lem:B1})} \\
&\leq \beta_i G && \text{since $\tilde\beta_i \le \beta_i$}.
\end{align*}
The remainder of the proof follows the same arguments as those used in the proof of Lemma \ref{lem:B6}.
\end{proof}

\subsection{Proof of Theorem \ref{thm:constant_qlearning}}
\noindent \textbf{Standard Q-learning:} we first obtain the finite-time error bound of projected Q-learning iterates with a constant step-size sequence, i.e., $\beta_i = \beta, ~\forall i \in \mathbb{N}$. Consider
\begin{align*}
\mathbb{E}^{\mu^\pi}\left\|\widehat{\boldsymbol{\theta}}_{t+1} - \boldsymbol{\theta}^{\star}\right\|_2^2 &= \mathbb{E}^{\mu^\pi}\left\|\Pi_r\{\widehat{\boldsymbol{\theta}}_{t} + \beta f_t(\widehat{\boldsymbol{\theta}}_{t})\} - \Pi_r\boldsymbol{\theta}^{\star}\right\|_2^2 \\
&\leq \mathbb{E}^{\mu^\pi}\left\|\widehat{\boldsymbol{\theta}}_{t} + \beta f_t(\widehat{\boldsymbol{\theta}}_{t}) - \boldsymbol{\theta}^{\star}\right\|_2^2\\
&= \mathbb{E}^{\mu^\pi}\left\{\left\|\widehat{\boldsymbol{\theta}}_{t} - \boldsymbol{\theta}^{\star}\right\|_2^2 + \beta^2\left\|f_t(\widehat{\boldsymbol{\theta}}_{t})\right\|_2^2 + 2\beta\langle\widehat{\boldsymbol{\theta}}_{t} - \boldsymbol{\theta}^{\star}, f_t(\widehat{\boldsymbol{\theta}}_{t})\rangle\right\} \\
&= \mathbb{E}^{\mu^\pi}\left\{\left\|\widehat{\boldsymbol{\theta}}_{t} - \boldsymbol{\theta}^{\star}\right\|_2^2 + \beta^2\left\|f_t(\widehat{\boldsymbol{\theta}}_{t})\right\|_2^2 + 2\beta\langle\widehat{\boldsymbol{\theta}}_{t} - \boldsymbol{\theta}^{\star}, \bar{f}(\widehat{\boldsymbol{\theta}}_{t}) - \bar{f}(\boldsymbol{\theta}^{\star})\rangle + 2\beta\Lambda_t(\widehat{\boldsymbol{\theta}}_{t})\right\}
\end{align*}
where the inequality is thanks to the non-expansiveness of $\Pi_r$, and the last equality is due to $\bar{f}(\boldsymbol{\theta}^{\star}) = 0$. From Lemma \ref{lem:B1}  and Lemma \ref{lem:B2}, we obtain
\begin{align*}
\mathbb{E}^{\mu^\pi}\left\|\widehat{\boldsymbol{\theta}}_{t+1} - \boldsymbol{\theta}^{\star}\right\|_2^2 &\leq (1 - \beta w_q)\mathbb{E}^{\mu^\pi}\left\|\widehat{\boldsymbol{\theta}}_{t} - \boldsymbol{\theta}^{\star}\right\|_2^2 + \beta^2 G^2 + 2\beta\mathbb{E}^{\mu^\pi}\Lambda_t(\widehat{\boldsymbol{\theta}}_{t}) \\
&\leq (1 - \beta w_q)\mathbb{E}^{\mu^\pi}\left\|\widehat{\boldsymbol{\theta}}_{t} - \boldsymbol{\theta}^{\star}\right\|_2^2 + \beta^2 G^2 + 20\beta^2G^2\tau_\beta
\end{align*}
where the second inequality is due to Lemma \ref{lem:B5}. Recursively applying the inequality with the assumption $\beta \in (0, 1/w_q)$, we have
\begin{align*}
\mathbb{E}^{\mu^\pi}\left\|\widehat{\boldsymbol{\theta}}_{t+1} - \boldsymbol{\theta}^{\star}\right\|_2^2 &\le (1 - \beta w_q)^{t+1}\left\|\widehat{\boldsymbol{\theta}}_0- \boldsymbol{\theta}^{\star}\right\|_2^2 + \beta^2 G^2 \sum_{i=0}^\infty (1 - \beta w_q)^i + 20\beta^2G^2\tau_\beta \sum_{i=0}^\infty (1 - \beta w_q)^i  \\
&= (1 - \beta w_q)^{t+1}\left\|\widehat{\boldsymbol{\theta}}_0- \boldsymbol{\theta}^{\star}\right\|_2^2 + \mathcal{O}\left( \beta\tau_\beta\right)
\end{align*}
\textbf{Implicit Q-learning:} we next obtain the finite-time error bound of the projected implicit Q-learning iterates with a constant step-size sequence, i.e., $\beta_i = \beta, ~\forall i \in \mathbb{N}$. Recall that $\tilde{\beta}_i := \beta/(1+\beta\|\boldsymbol{\phi}(\boldsymbol{S}_i^{\pi},A_i^{\pi})\|^2_2)$. Consider
\begin{align*}
\mathbb{E}^{\mu^\pi}\left\|\widehat{\boldsymbol{\theta}}^{\text{im}}_{t+1} - \boldsymbol{\theta}^{\star}\right\|_2^2 &= \mathbb{E}^{\mu^\pi}\left\|\Pi_r\{\widehat{\boldsymbol{\theta}}^{\text{im}}_{t} + \tilde\beta_t f_t(\widehat{\boldsymbol{\theta}}_{t})\} - \Pi_r\boldsymbol{\theta}^{\star}\right\|_2^2 \\
&\leq \mathbb{E}^{\mu^\pi}\left\|\widehat{\boldsymbol{\theta}}^{\text{im}}_{t} + \tilde\beta_t f_t(\widehat{\boldsymbol{\theta}}^{\text{im}}_{t}) - \boldsymbol{\theta}^{\star}\right\|_2^2\\
&\leq \mathbb{E}^{\mu^\pi}\left\{\left\|\widehat{\boldsymbol{\theta}}^{\text{im}}_{t} - \boldsymbol{\theta}^{\star}\right\|_2^2 + \beta^2\left\|f_t(\widehat{\boldsymbol{\theta}}^{\text{im}}_{t})\right\|_2^2 + 2\tilde\beta_t\langle\widehat{\boldsymbol{\theta}}^{\text{im}}_{t} - \boldsymbol{\theta}^{\star}, f_t(\widehat{\boldsymbol{\theta}}^{\text{im}}_{t})\rangle\right\} \\
&= \mathbb{E}^{\mu^\pi}\left\{\left\|\widehat{\boldsymbol{\theta}}^{\text{im}}_{t} - \boldsymbol{\theta}^{\star}\right\|_2^2 + \beta^2\left\|f_t(\widehat{\boldsymbol{\theta}}^{\text{im}}_{t})\right\|_2^2 + 2\tilde\beta_t\langle\widehat{\boldsymbol{\theta}}^{\text{im}}_{t} - \boldsymbol{\theta}^{\star}, \bar{f}(\widehat{\boldsymbol{\theta}}^{\text{im}}_{t}) - \bar{f}(\boldsymbol{\theta}^{\star})\rangle + 2\tilde\beta_t\Lambda_t(\widehat{\boldsymbol{\theta}}^{\text{im}}_{t})\right\}
\end{align*}
where the first inequality is thanks to the non-expansiveness of $\Pi_r$, and the second inequality is due to the bound $\tilde \beta_t \le \beta$, which holds almost surely. In the last equality, we used $\bar{f}(\boldsymbol{\theta}^{\star}) = 0$. From Lemma \ref{lem:B1} and Lemma \ref{lem:B2}, we obtain
\begin{align*}
\mathbb{E}^{\mu^\pi}\left\|\widehat{\boldsymbol{\theta}}^{\text{im}}_{t+1} - \boldsymbol{\theta}^{\star}\right\|_2^2 &\leq \mathbb{E}^{\mu^\pi}\left\{(1 - \tilde \beta_t w_q)\left\|\widehat{\boldsymbol{\theta}}^{\text{im}}_{t} - \boldsymbol{\theta}^{\star}\right\|_2^2 \right\}+ \beta^2 G^2 + 2\beta\mathbb{E}^{\mu^\pi}\Lambda_t(\widehat{\boldsymbol{\theta}}^{\text{im}}_{t}) + 2\mathbb{E}^{\mu^\pi}\left\{(\tilde\beta_t-\beta)\Lambda_t(\widehat{\boldsymbol{\theta}}^{\text{im}}_{t})\right\}\\
&\leq \left(1 - \frac{\beta w_q}{1+\beta}\right)\mathbb{E}^{\mu^\pi}\left\|\widehat{\boldsymbol{\theta}}^{\text{im}}_{t} - \boldsymbol{\theta}^{\star}\right\|_2^2 + \beta^2 G^2 + 20\beta^2 G^2\tau_\beta + 2\mathbb{E}^{\mu^\pi}\left\{(\tilde\beta_t-\beta)\Lambda_t(\widehat{\boldsymbol{\theta}}^{\text{im}}_{t})\right\}\\
&\leq \left(1 - \frac{\beta w_q}{1+\beta}\right)\mathbb{E}^{\mu^\pi}\left\|\widehat{\boldsymbol{\theta}}^{\text{im}}_{t} - \boldsymbol{\theta}^{\star}\right\|_2^2 + \beta^2 G^2 + 20\beta^2 G^2\tau_\beta + 4\beta^2 G^2 .
\end{align*}
where in the second inequality, we used Lemma \ref{lem:B7} with the bound $\tilde\beta_t \ge \frac{\beta}{1+\beta}$, which holds almost surely. In the third inequality, we used the fact $|\tilde\beta_t-\beta| \le \beta^2$ with Lemma \ref{lem:B3}. Recursively applying the inequality with the assumption $\frac{\beta w_q}{1+\beta} \in (0, 1)$, we have
\begin{align*}
\mathbb{E}^{\mu^\pi}\left\|\widehat{\boldsymbol{\theta}}^{\text{im}}_{t+1} - \boldsymbol{\theta}^{\star}\right\|_2^2 &\le \left(1 - \frac{\beta w_q}{1+\beta}\right)^{t+1}\left\|\widehat{\boldsymbol{\theta}}^{\text{im}}_0- \boldsymbol{\theta}^{\star}\right\|_2^2 + 5\beta^2 G^2 \sum_{i=0}^\infty \left(1 - \frac{\beta w_q}{1+\beta}\right)^i + 20\beta^2 G^2\tau_\beta \sum_{i=0}^\infty \left(1 - \frac{\beta w_q}{1+\beta}\right)^i  \\
&= \left(1 - \frac{\beta w_q}{1+\beta}\right)^{t+1}\left\|\widehat{\boldsymbol{\theta}}^{\text{im}}_0- \boldsymbol{\theta}^{\star}\right\|_2^2 + \mathcal{O}\left( \beta\tau_\beta + \beta^2\tau_\beta\right).
\end{align*}

\subsection{Proof of Theorem \ref{thm:decreasing_qlearning}}
\noindent \textbf{Standard Q-Learning:} we first obtain the finite-time error bound of the projected Q-learning iterates with a step-size sequence of the form $\beta_i = \frac{\beta_0}{(i+1)^s}, ~s \in (0,1), ~i \in \mathbb{N}$. Consider
\begin{align*}
\mathbb{E}^{\mu^\pi}\left\|\widehat{\boldsymbol{\theta}}_{t+1} - \boldsymbol{\theta}^{\star}\right\|_2^2 &= \mathbb{E}^{\mu^\pi}\left\|\Pi_r\{\widehat{\boldsymbol{\theta}}_{t} + \beta_t f_t(\widehat{\boldsymbol{\theta}}_{t})\} - \Pi_r\boldsymbol{\theta}^{\star}\right\|_2^2 \\
&\leq \mathbb{E}^{\mu^\pi}\left\|\widehat{\boldsymbol{\theta}}_{t} + \beta_t f_t(\widehat{\boldsymbol{\theta}}_{t}) - \boldsymbol{\theta}^{\star}\right\|_2^2\\
&= \mathbb{E}^{\mu^\pi}\left\{\left\|\widehat{\boldsymbol{\theta}}_{t} - \boldsymbol{\theta}^{\star}\right\|_2^2 + \beta_t^2\left\|f_t(\widehat{\boldsymbol{\theta}}_{t})\right\|_2^2 + 2\beta_t\langle\widehat{\boldsymbol{\theta}}_{t} - \boldsymbol{\theta}^{\star}, f_t(\widehat{\boldsymbol{\theta}}_{t})\rangle\right\} \\
&= \mathbb{E}^{\mu^\pi}\left\{\left\|\widehat{\boldsymbol{\theta}}_{t} - \boldsymbol{\theta}^{\star}\right\|_2^2 + \beta_t^2\left\|f_t(\widehat{\boldsymbol{\theta}}_{t})\right\|_2^2 + 2\beta_t\langle\widehat{\boldsymbol{\theta}}_{t} - \boldsymbol{\theta}^{\star}, \bar{f}(\widehat{\boldsymbol{\theta}}_{t}) - \bar{f}(\boldsymbol{\theta}^{\star})\rangle + 2\beta_t\Lambda_t(\widehat{\boldsymbol{\theta}}_{t})\right\}
\end{align*}
where the inequality is thanks to the non-expansiveness of $\Pi_r$, and the last equality is due to $\bar{f}(\boldsymbol{\theta}^{\star}) = 0$. From Lemma \ref{lem:B1}  and Lemma \ref{lem:B2}, we obtain
\begin{align*}
\mathbb{E}^{\mu^\pi}\left\|\widehat{\boldsymbol{\theta}}_{t+1} - \boldsymbol{\theta}^{\star}\right\|_2^2 &\leq (1 - \beta_t w_q)\mathbb{E}^{\mu^\pi}\left\|\widehat{\boldsymbol{\theta}}_{t} - \boldsymbol{\theta}^{\star}\right\|_2^2 + \beta_t^2 G^2 + 2\beta_t\mathbb{E}^{\mu^\pi}\Lambda_t(\widehat{\boldsymbol{\theta}}_{t}).
\end{align*}
Recursively applying the inequality with the assumption $\beta_0 \in (0, 1/w_q)$, we have
\begin{align*}
\mathbb{E}^{\mu^\pi} \left\|\widehat{\boldsymbol{\theta}}_{t+1} - \boldsymbol{\theta}^{\star}\right\|_2^2 
&\leq \left\{\prod_{i=0}^{t} (1 - \beta_i w_q)\right\} \left\|\widehat{\boldsymbol{\theta}}_{0} - \boldsymbol{\theta}^{\star}\right\|_2^2 + G^2 \sum_{i=0}^{t} \left\{\prod_{k=i+1}^{t} (1 - \beta_k w_q)\right\} \beta_i^2 \\ &\quad + 2\sum_{i=0}^{t} \left\{\prod_{k=i+1}^{t} (1 - \beta_k w_q)\right\} \mathbb{E}^{\mu^\pi}\{\Lambda_i(\widehat{\boldsymbol{\theta}}_i)\} \beta_i \\
&\leq \left\{\prod_{i=0}^{t} (1 - \beta_iw_q) \right\}\left\|\widehat{\boldsymbol{\theta}}_{0} - \boldsymbol{\theta}^{\star}\right\|_2^2 + G^2 \sum_{i=0}^{t} \left\{\prod_{k=i+1}^{t} (1 - \beta_k w_q)\right\} \beta_i^2 \\
&\quad + 12G^2 \tau_{\beta_t} \beta_0\sum_{i=0}^{\tau_{\beta_t}} \left\{\prod_{k=i+1}^{t} (1 - \beta_k w_q)\right\}  \beta_i + 20G^2 \tau_{\beta_t}\sum_{i=\tau_{\beta_t}+1}^{t} \left\{\prod_{k=i+1}^{t} (1 - \beta_k w_q)\right\}  \beta_{i-\tau_{\beta_t}} \beta_i
\end{align*}
where the second inequality follows from Lemma \ref{lem:B6}. Thanks to $1-\beta_i w_q \le \exp(-\beta_i w_q)$, we have
\begin{align}
\mathbb{E}^{\mu^\pi} \left\|\widehat{\boldsymbol{\theta}}_{t+1} - \boldsymbol{\theta}^{\star}\right\|_2^2 &\leq \exp\left(-w_q \sum_{i=0}^{t} \beta_i\right) \left\|\widehat{\boldsymbol{\theta}}_{0} - \boldsymbol{\theta}^{\star}\right\|_2^2 \quad \label{Q_decr_first} \\
&\quad + G^2 \sum_{i=0}^{t} \exp\left(-w_q \sum_{k=i+1}^{t} \beta_k\right) \beta_i^2 \quad  \label{Q_decr_second}\\
&\quad + 12 G^2 \tau_{\beta_t} \beta_0 \sum_{i=0}^{\tau_{\beta_t}} \exp\left(-w_q \sum_{k=i+1}^{t} \beta_k\right) \beta_i \quad  \label{Q_decr_third} \\
&\quad + 20 G^2 \tau_{\beta_t} \sum_{i=\tau_{\beta_t}+1}^{t} \exp\left(-w_q \sum_{k=i+1}^{t} \beta_k\right) \beta_{i-\tau_{\beta_t}} \beta_i. \quad \label{Q_decr_fourth}
\end{align}
We now provide a bound for each of the term above. For the first term, we have
\begin{align}
\eqref{Q_decr_first} &= \exp\left\{-w_q\beta_0 \sum_{i=0}^{t} \frac{1}{(i+1)^s}\right\} \left\|\widehat{\boldsymbol{\theta}}_{0} - \boldsymbol{\theta}^{\star}\right\|_2^2 \nonumber\\
&\leq \exp\left[\frac{-w_q\beta_0}{(1-s)} \left\{(t+1)^{1-s} - 1\right\}\right] \left\|\widehat{\boldsymbol{\theta}}_{0} - \boldsymbol{\theta}^{\star}\right\|_2^2 \label{q_const_first_bound}
\end{align}
For the second term, we have
\begin{align}
\eqref{Q_decr_second}  &\le 2G^2 (K_b e^{-\frac{w_q}{2} \sum_{k=0}^{t} \beta_k} + \beta_t) \frac{\exp\left(w_q\beta_0/2\right)}{w_q} && \text{(by Lemma \ref{lem:exp_bound1})} \nonumber\\
&\lesssim \exp\left[\frac{-w_q\beta_0}{2(1-s)} \{(t+1)^{1-s} - 1\}\right] + \beta_t\label{q_const_second_bound}
\end{align}
For the third term, we get
\begin{align}
\eqref{Q_decr_third} 
&\leq 12 G^2 \tau_{\beta_t} \beta_0 \frac{\exp(w_q\beta_0)}{w_q} \exp\left[\frac{-w_q\beta_0}{(1-s)} \{(t+1)^{1-s} - (\tau_{\beta_t}+1)^{1-s}\}\right]&&\text{(by Lemma \ref{lem:exp_bound2})}\nonumber\\
&\lesssim \tau_{\beta_t}  \exp\left[\frac{-w_q\beta_0}{(1-s)} \{(t+1)^{1-s} - (\tau_{\beta_t}+1)^{1-s}\}\right]\label{q_const_third_bound}
\end{align}
For the last term, we yield
\begin{align}
\eqref{Q_decr_fourth}
&\leq 20 G^2 \tau_{\beta_t} \left[\exp\left\{\frac{-w_q\beta_0}{2(1-s)} \{(t+1)^{1-s} - 1\}\right\} D_{\beta_t} \mathbb{I}_{\{\tau_{\beta_t}+1 \leq 1\}} + \beta_{t-\tau_{\beta_t}}\right] \frac{2\exp(w_q\beta_0/2)}{w_q} && \text{(by Lemma \ref{lem:exp_bound2})} \nonumber\\
&\lesssim  \tau_{\beta_t} \left[\exp\left\{\frac{-w_q\beta_0}{2(1-s)} \{(t+1)^{1-s} - 1\}\right\} + \beta_{t-\tau_{\beta_t}}\right] \label{q_const_fourth_bound}
\end{align}
Combining \eqref{q_const_first_bound}-\eqref{q_const_fourth_bound} we get,
\begin{align*}
\mathbb{E}^{\mu^\pi} \left\|\widehat{\boldsymbol{\theta}}_{t+1} - \boldsymbol{\theta}^{\star}\right\|_2^2 &\leq \exp\left[\frac{-w_q\beta_0}{(1-s)} \{(t+1)^{1-s} - 1\}\right] \left\|\widehat{\boldsymbol{\theta}}_{0} - \boldsymbol{\theta}^{\star}\right\|_2^2 + \mathcal{O}\left(\tau_{\beta_t} \beta_{t-\tau_{\beta_t}} \right). 
\end{align*}

\noindent \textbf{Implicit Q-Learning:} we next obtain the finite-time error bound of the projected implicit Q-learning iterates with a step-size sequence of the form $\beta_i = \frac{\beta_0}{(i+1)^s}, ~s \in (0,1), ~i \in \mathbb{N}$. With a slight abuse of notation, let $\tilde \beta_i := \frac{\beta_i}{1+\beta_i \|\boldsymbol{\phi}(\boldsymbol{S}_i^{\pi},A_i^{\pi})\|^2_2}$. Consider

\begin{align*}
\mathbb{E}^{\mu^\pi}\left\|\widehat{\boldsymbol{\theta}}^{\text{im}}_{t+1} - \boldsymbol{\theta}^{\star}\right\|_2^2 &= \mathbb{E}^{\mu^\pi}\left\|\Pi_r\{\widehat{\boldsymbol{\theta}}^{\text{im}}_{t} + \tilde\beta_t f_t(\widehat{\boldsymbol{\theta}}^{\text{im}}_{t})\} - \Pi_r\boldsymbol{\theta}^{\star}\right\|_2^2 \\
&\leq \mathbb{E}^{\mu^\pi}\left\|\widehat{\boldsymbol{\theta}}^{\text{im}}_{t} + \tilde\beta_t f_t(\widehat{\boldsymbol{\theta}}^{\text{im}}_{t}) - \boldsymbol{\theta}^{\star}\right\|_2^2\\
&\le \mathbb{E}^{\mu^\pi}\left\{\left\|\widehat{\boldsymbol{\theta}}^{\text{im}}_{t} - \boldsymbol{\theta}^{\star}\right\|_2^2 + \beta_t^2\left\|f_t(\widehat{\boldsymbol{\theta}}^{\text{im}}_{t})\right\|_2^2 + 2\tilde\beta_t\langle\widehat{\boldsymbol{\theta}}^{\text{im}}_{t} - \boldsymbol{\theta}^{\star}, f_t(\widehat{\boldsymbol{\theta}}^{\text{im}}_{t})\rangle\right\} \\
&= \mathbb{E}^{\mu^\pi}\left\{\left\|\widehat{\boldsymbol{\theta}}^{\text{im}}_{t} - \boldsymbol{\theta}^{\star}\right\|_2^2 + \beta_t^2\left\|f_t(\widehat{\boldsymbol{\theta}}^{\text{im}}_{t})\right\|_2^2 + 2\tilde\beta_t\langle\widehat{\boldsymbol{\theta}}^{\text{im}}_{t} - \boldsymbol{\theta}^{\star}, \bar{f}(\widehat{\boldsymbol{\theta}}^{\text{im}}_{t}) - \bar{f}(\boldsymbol{\theta}^{\star})\rangle + 2\tilde\beta_t\Lambda_t(\widehat{\boldsymbol{\theta}}^{\text{im}}_{t})\right\}
\end{align*}
where the first inequality is thanks to the non-expansiveness of $\Pi_r$, and the second inequality is due to $\tilde \beta_t \le \beta_t$. In the last inequality, we used $\bar{f}(\boldsymbol{\theta}^{\star}) = 0$. From Lemma \ref{lem:B1}  and Lemma \ref{lem:B2}, we obtain
\begin{align*}
\mathbb{E}^{\mu^\pi}\left\|\widehat{\boldsymbol{\theta}}^{\text{im}}_{t+1} - \boldsymbol{\theta}^{\star}\right\|_2^2 &\leq \mathbb{E}^{\mu^\pi}\left\{(1 - \tilde \beta_t w_q)\left\|\widehat{\boldsymbol{\theta}}^{\text{im}}_{t} - \boldsymbol{\theta}^{\star}\right\|_2^2 \right\}+ \beta_t^2 G^2 + 2\beta_t\mathbb{E}^{\mu^\pi}\Lambda_t(\widehat{\boldsymbol{\theta}}^{\text{im}}_{t}) + 2\mathbb{E}^{\mu^\pi}\left\{(\tilde\beta_t-\beta_t)\Lambda_t(\widehat{\boldsymbol{\theta}}^{\text{im}}_{t})\right\}\\
&\leq \left(1 - \frac{\beta_t w_q}{1+\beta_0}\right)\mathbb{E}^{\mu^\pi}\left\|\widehat{\boldsymbol{\theta}}^{\text{im}}_{t} - \boldsymbol{\theta}^{\star}\right\|_2^2 + \beta_t^2 G^2 + 2\beta_t\mathbb{E}^{\mu^\pi}\Lambda_t(\widehat{\boldsymbol{\theta}}^{\text{im}}_{t}) + 4\beta_t^2 G^2 .
\end{align*}
where in the second inequality, we used the bound $\tilde\beta_t \ge \frac{\beta_t}{1+\beta_0}$, which holds almost surely as well as the fact $|\tilde\beta_t-\beta_t| \le \beta_t^2$ with Lemma \ref{lem:B3}. 
Recursively applying the inequality with the assumption $\beta_0 w_q/(1+\beta_0)\in (0, 1)$, we have
\begin{align*}
\mathbb{E}^{\mu^\pi} \left\|\widehat{\boldsymbol{\theta}}^{\text{im}}_{t+1} - \boldsymbol{\theta}^{\star}\right\|_2^2 
&\leq \left\{\prod_{i=0}^{t} \left(1 - \frac{\beta_i w_q}{1+\beta_0}\right)\right\} \left\|\widehat{\boldsymbol{\theta}}_{0}- \boldsymbol{\theta}^{\star}\right\|_2^2 + 5G^2 \sum_{i=0}^{t} \left\{\prod_{k=i+1}^{t} \left(1 - \frac{\beta_k w_q}{1+\beta_0}\right)\right\} \beta_i^2 \\ &\quad + 2\sum_{i=0}^{t} \left\{\prod_{k=i+1}^{t} \left(1 - \frac{\beta_k w_q}{1+\beta_0}\right)\right\} \mathbb{E}^{\mu^\pi}\{\Lambda_i(\widehat{\boldsymbol{\theta}}^{\text{im}}_i)\} \beta_i \\
&\leq \left\{\prod_{i=0}^{t} \left(1 - \frac{\beta_i w_q}{1+\beta_0}\right) \right\}\left\|\widehat{\boldsymbol{\theta}}_{0}- \boldsymbol{\theta}^{\star}\right\|_2^2 + 5G^2 \sum_{i=0}^{t} \left\{\prod_{k=i+1}^{t} \left(1 - \frac{\beta_k w_q}{1+\beta_0}\right)\right\} \beta_i^2 \\
&\quad + 12G^2 \tau_{\beta_t} \beta_0\sum_{i=0}^{\tau_{\beta_t}} \left\{\prod_{k=i+1}^{t} \left(1 - \frac{\beta_k w_q}{1+\beta_0}\right)\right\}  \beta_i + 20G^2 \tau_{\beta_t}\sum_{i=\tau_{\beta_t}+1}^{t} \left\{\prod_{k=i+1}^{t} \left(1 - \frac{\beta_k w_q}{1+\beta_0}\right)\right\}  \beta_{i-\tau_{\beta_t}} \beta_i
\end{align*}
where the second inequality follows from Lemma \ref{lem:B8}. Thanks to $1 - \frac{\beta_i w_q}{1+\beta_0} \le \exp(-\frac{\beta_i w_q}{1+\beta_0})$, we have
\begin{align}
\mathbb{E}^{\mu^\pi} \left\|\widehat{\boldsymbol{\theta}}^{\text{im}}_{t+1} - \boldsymbol{\theta}^{\star}\right\|_2^2 &\leq \exp\left(-\frac{w_q}{1+\beta_0}\sum_{i=0}^{t} \beta_i\right) \left\|\widehat{\boldsymbol{\theta}}_{0}- \boldsymbol{\theta}^{\star}\right\|_2^2 \quad \label{IMP_Q_decr_first} \\
&\quad + 5G^2 \sum_{i=0}^{t} \exp\left(-\frac{w_q}{1+\beta_0} \sum_{k=i+1}^{t} \beta_k\right) \beta_i^2 \quad  \label{IMP_Q_decr_second}\\
&\quad + 12 G^2 \tau_{\beta_t} \beta_0 \sum_{i=0}^{\tau_{\beta_t}} \exp\left(-\frac{w_q}{1+\beta_0} \sum_{k=i+1}^{t} \beta_k\right) \beta_i \quad  \label{IMP_Q_decr_third} \\
&\quad + 20 G^2 \tau_{\beta_t} \sum_{i=\tau_{\beta_t}+1}^{t} \exp\left(-\frac{w_q}{1+\beta_0} \sum_{k=i+1}^{t} \beta_k\right) \beta_{i-\tau_{\beta_t}} \beta_i. \quad \label{IMP_Q_decr_fourth}
\end{align}
We now provide a bound for each of the term above. For the first term, we have
\begin{align}
\eqref{IMP_Q_decr_first} &= \exp\left\{-\frac{w_q \beta_0}{1+\beta_0} \sum_{i=0}^{t} \frac{1}{(i+1)^s}\right\} \left\|\widehat{\boldsymbol{\theta}}_{0}- \boldsymbol{\theta}^{\star}\right\|_2^2 \nonumber\\
&\leq \exp\left[\frac{-w_q\beta_0}{(1+\beta_0)(1-s)} \left\{(t+1)^{1-s} - 1\right\}\right] \left\|\widehat{\boldsymbol{\theta}}_{0}- \boldsymbol{\theta}^{\star}\right\|_2^2 \label{IMP_q_const_first_bound}
\end{align}
For the second term, we have
\begin{align}
\eqref{IMP_Q_decr_second}  &\le 2G^2 \left\{K_b e^{-\frac{w_q}{2(1+\beta_0)} \sum_{k=0}^{t} \beta_k} + \beta_t\right\}\left\{ \frac{e^{\frac{w_q\beta_0}{2(1+\beta_0)}}(1+\beta_0)}{w_q}\right\} && \text{(by Lemma \ref{lem:exp_bound1})} \nonumber\\
&\lesssim \exp\left[\frac{-w_q\beta_0}{2(1+\beta_0)(1-s)} \{(t+1)^{1-s} - 1\}\right] + \beta_t\label{IMP_q_const_second_bound}
\end{align}
For the third term, we get
\begin{align}
\eqref{IMP_Q_decr_third} 
&\leq 12 G^2 \tau_{\beta_t} \beta_0 \frac{\exp\{w_q\beta_0/(1+\beta_0)\}(1+\beta_0)}{w_q} \exp\left[\frac{-w_q\beta_0}{(1+\beta_0)(1-s)} \{(t+1)^{1-s} - (\tau_{\beta_t}+1)^{1-s}\}\right]&&\text{(by Lemma \ref{lem:exp_bound2})}\nonumber\\
&\lesssim \tau_{\beta_t}  \exp\left[\frac{-w_q\beta_0}{(1+\beta_0)(1-s)} \{(t+1)^{1-s} - (\tau_{\beta_t}+1)^{1-s}\}\right]\label{IMP_q_const_third_bound}
\end{align}
For the last term, by Lemma \ref{lem:exp_bound2}, we yield
\begin{align}
\eqref{IMP_Q_decr_fourth}
&\leq 20 G^2 \tau_{\beta_t} \left[\exp\left\{\frac{-w_q\beta_0}{2(1+\beta_0)(1-s)} \{(t+1)^{1-s} - 1\}\right\} D_{\beta_t} \mathbb{I}_{\{\tau_{\beta_t}+1 \leq 1\}} + \beta_{t-\tau_{\beta_t}}\right] \frac{2e^{\frac{w_q\beta_0}{2(1+\beta_0)}}(1+\beta_0)}{w_q} \nonumber\\
&\lesssim  \tau_{\beta_t} \left[\exp\left\{\frac{-w_q\beta_0}{2(1+\beta_0)(1-s)} \{(t+1)^{1-s} - 1\}\right\} + \beta_{t-\tau_{\beta_t}}\right] \label{IMP_q_const_fourth_bound}
\end{align}
Combining \eqref{IMP_q_const_first_bound}-\eqref{IMP_q_const_fourth_bound} we get,
\begin{align*}
\mathbb{E}^{\mu^\pi} \left\|\widehat{\boldsymbol{\theta}}^{\text{im}}_{t+1} - \boldsymbol{\theta}^{\star}\right\|_2^2 &\leq \exp\left[\frac{-w_q\beta_0}{(1+\beta_0)(1-s)} \{(t+1)^{1-s} - 1\}\right] \left\|\widehat{\boldsymbol{\theta}}_{0}- \boldsymbol{\theta}^{\star}\right\|_2^2 + \mathcal{O}\left(\tau_{\beta_t} \beta_{t-\tau_{\beta_t}} \right).
\end{align*}

\section{Proofs of SARSA}
\subsection{Preliminary Lemmas}
\noindent For the SARSA algorithm, for any $\boldsymbol{\theta} \in \mathbb{R}^M$, let us define
\begin{align*}
g_t(\boldsymbol{\theta}) &:= \boldsymbol{\phi}(\boldsymbol{S}_t^{\pi_{t-2}},A_t^{\pi_{t-1}})\left\{R_t + \gamma \boldsymbol{\phi}(\boldsymbol{S}_{t+1}^{\pi_{t-1}},A_{t+1}^{\pi_t})^\top\boldsymbol{\theta} - \boldsymbol{\phi}(\boldsymbol{S}_t^{\pi_{t-2}},A_t^{\pi_{t-1}})^\top\boldsymbol{\theta}\right\}  \\
\bar{g}(\boldsymbol{\theta}) &:= \mathbb{E}^{\mu^{\boldsymbol{\theta}}}\left[\phi(\boldsymbol{S}, A)\left\{r(\boldsymbol{S}, A) + \gamma \phi(\boldsymbol{S}', A')^{\top} \boldsymbol{\theta} - \phi(\boldsymbol{S}, A)^{\top}\boldsymbol{\theta}\right\}\right]. 
\end{align*}
Recall that $\mathbb{E}^{\mu^{\boldsymbol{\theta}}}$ denotes the expectation where the current state $\boldsymbol{S}$ follows the stationary distribution $\mu^{\pi_{\boldsymbol{\theta}}}$, the current action $A$ is chosen according to the behavioral policy $\pi_{\boldsymbol{\theta}}(\cdot | \boldsymbol{S})$, $\boldsymbol{S}'$ denotes the subsequent state transitioned following $\Ptrans(\cdot | \boldsymbol{S}, A)$, and $A'$ is chosen based on the behavioral policy $\pi_{\boldsymbol{\theta}}(\cdot | \boldsymbol{S}')$.
With a slight abuse of notation, we define $\Lambda_t(\boldsymbol{\theta}) := \langle\boldsymbol{\theta} - \boldsymbol{\theta}^{\star}, g_t(\boldsymbol{\theta}) - \bar{g}(\boldsymbol{\theta})\rangle$. Recall that $\bar{g}(\boldsymbol{\theta}^{\star}) = 0$. We now provide the necessary Lemmas to establish finite-time error bounds.

\begin{lemma}[Lemma 2 of \citep{zou2019finite}]\label{lem:C1}
For any $\boldsymbol{\theta}\in \mathbb{R}^M$ such that $\|\boldsymbol{\theta}\|_2 \leq r$, 
$$
\|g_t(\boldsymbol{\theta})\|_2 \leq G:= b + 2r ~~ \text{for all}~~ t \ge 0
$$
\end{lemma}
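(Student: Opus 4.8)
The plan is to establish this deterministic bound by exactly the same elementary argument used for its Q-learning counterpart in Lemma \ref{lem:B1}: I would factor out the leading feature vector and control the scalar temporal-difference term by the triangle inequality. Abbreviating $\boldsymbol{\phi}_t := \boldsymbol{\phi}(\boldsymbol{S}_t^{\pi_{t-2}},A_t^{\pi_{t-1}})$ and $\boldsymbol{\phi}_{t+1} := \boldsymbol{\phi}(\boldsymbol{S}_{t+1}^{\pi_{t-1}},A_{t+1}^{\pi_t})$, the definition of $g_t$ and the triangle inequality give
\[
\|g_t(\boldsymbol{\theta})\|_2 \le \|\boldsymbol{\phi}_t\|_2 \left(|R_t| + \gamma\,|\boldsymbol{\phi}_{t+1}^\top\boldsymbol{\theta}| + |\boldsymbol{\phi}_t^\top\boldsymbol{\theta}|\right),
\]
which reduces the vector bound to controlling three scalars.

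Next I would bound each factor using only the standing structural assumptions. The uniform feature-norm bound $\|\boldsymbol{\phi}(\boldsymbol{s},a)\|_2 \le 1$ controls the outer factor as $\|\boldsymbol{\phi}_t\|_2 \le 1$, and via Cauchy--Schwarz together with $\|\boldsymbol{\theta}\|_2 \le r$ it yields $|\boldsymbol{\phi}_t^\top\boldsymbol{\theta}| \le r$ and $|\boldsymbol{\phi}_{t+1}^\top\boldsymbol{\theta}| \le r$. Since rewards satisfy $r(\boldsymbol{s},a)\in[0,b]$ we have $|R_t|\le b$, and because $\gamma\in(0,1)$ the discounted term obeys $\gamma\,|\boldsymbol{\phi}_{t+1}^\top\boldsymbol{\theta}| \le \gamma r \le r$. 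Combining these gives
\[
\|g_t(\boldsymbol{\theta})\|_2 \le 1\cdot(b + r + r) = b + 2r = G
\]
for every $t \ge 0$, which is the claimed uniform bound.

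The main point is that there is essentially no obstacle here: unlike the Q-learning direction $f_t$, the SARSA update evaluates the successor feature at the on-policy action $A_{t+1}^{\pi_t}$ rather than at a maximizer, so the proof never needs a bound on $\|\max_{a}\boldsymbol{\phi}(\boldsymbol{S}_{t+1},a)\|_2$; the single successor feature $\boldsymbol{\phi}_{t+1}$ already satisfies $\|\boldsymbol{\phi}_{t+1}\|_2 \le 1$ directly. The only subtlety worth flagging is that the estimate must hold deterministically and uniformly across the time-varying policy sequence $\{\pi_{t}\}$, but because each of the three scalar bounds depends solely on the policy-independent constants $b$ and $r$ and on the feature-norm ceiling---never on $t$ or on the realized policies---the resulting bound $G = b + 2r$ is automatically uniform in $t$.
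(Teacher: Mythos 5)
Your proof is correct and follows essentially the same argument as the paper's (which simply invokes the feature-norm bound $\|\boldsymbol{\phi}(\boldsymbol{s},a)\|_2 \le 1$, Cauchy--Schwarz, the reward bound $|R_t| \le b$, and $\gamma < 1$ to get $\|g_t(\boldsymbol{\theta})\|_2 \le b + \gamma\|\boldsymbol{\theta}\|_2 + \|\boldsymbol{\theta}\|_2 \le b + 2r$). Your closing remark correctly notes the only structural point of interest, namely that the bound is deterministic and policy-independent, so no additional work is needed for the time-varying SARSA setting.
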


\begin{lemma}[Lemma 4 of \citep{zou2019finite}]\label{lem:C2}
For any $\boldsymbol{\theta}\in \mathbb{R}^M$ such that $\|\boldsymbol{\theta}\|_2 \leq r$,
$$
\langle\boldsymbol{\theta} - \boldsymbol{\theta}^{\star}, \bar{g}(\boldsymbol{\theta}) - \bar{g}(\boldsymbol{\theta}^{\star})\rangle \leq -w_s\|\boldsymbol{\theta} - \boldsymbol{\theta}^{\star}\|_2^2
$$
where $w_s > 0$ is the smallest eigenvalue of $-\frac{1}{2}\left\{\left(\boldsymbol{A}^{\boldsymbol{\theta}^{\star}} + C\lambda \boldsymbol{I}\right) + \left(\boldsymbol{A}^{\boldsymbol{\theta}^{\star}} + C\lambda \boldsymbol{I}\right)^\top\right\}$. Here, $C > 0$ is the Lipschitz constant of the behavioral policy $\pi_\theta$ and $\lambda := G|\mathcal{A}|\left(2 + \lceil{\log_{\rho} \frac{1}{m}}\rceil + \frac{1}{1-\rho}\right)$.
\end{lemma}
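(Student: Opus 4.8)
The plan is to exploit that the integrand defining $\bar g$ is affine in $\boldsymbol{\theta}$, while isolating the extra dependence of $\bar g$ on $\boldsymbol{\theta}$ that enters through the sampling law itself. Setting $\boldsymbol{A}^{\boldsymbol{\theta}} := \mathbb{E}^{\mu^{\boldsymbol{\theta}}}[\boldsymbol{\phi}(\boldsymbol{S},A)\{\gamma\boldsymbol{\phi}(\boldsymbol{S}',A')^{\top} - \boldsymbol{\phi}(\boldsymbol{S},A)^{\top}\}]$ and $\boldsymbol{b}^{\boldsymbol{\theta}} := \mathbb{E}^{\mu^{\boldsymbol{\theta}}}[\boldsymbol{\phi}(\boldsymbol{S},A) r(\boldsymbol{S},A)]$, we have $\bar g(\boldsymbol{\theta}) = \boldsymbol{A}^{\boldsymbol{\theta}}\boldsymbol{\theta} + \boldsymbol{b}^{\boldsymbol{\theta}}$, which is \emph{not} affine because $\mu^{\boldsymbol{\theta}}$ and the policy $\pi_{\boldsymbol{\theta}}$ (used to draw both $A$ and $A'$) vary with $\boldsymbol{\theta}$. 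First I would freeze the sampling law at $\boldsymbol{\theta}^{\star}$: let $\bar g^{\boldsymbol{\theta}^{\star}}(\boldsymbol{\theta})$ denote the same integrand evaluated at parameter $\boldsymbol{\theta}$ but with the expectation taken under $(\mu^{\boldsymbol{\theta}^{\star}}, \pi_{\boldsymbol{\theta}^{\star}})$, so that $\bar g^{\boldsymbol{\theta}^{\star}}(\boldsymbol{\theta}) = \boldsymbol{A}^{\boldsymbol{\theta}^{\star}}\boldsymbol{\theta} + \boldsymbol{b}^{\boldsymbol{\theta}^{\star}}$ is affine. Using $\bar g^{\boldsymbol{\theta}^{\star}}(\boldsymbol{\theta}^{\star}) = \bar g(\boldsymbol{\theta}^{\star}) = 0$, I then decompose
$$
\bar g(\boldsymbol{\theta}) - \bar g(\boldsymbol{\theta}^{\star}) = \boldsymbol{A}^{\boldsymbol{\theta}^{\star}}(\boldsymbol{\theta} - \boldsymbol{\theta}^{\star}) + \{\bar g(\boldsymbol{\theta}) - \bar g^{\boldsymbol{\theta}^{\star}}(\boldsymbol{\theta})\}.
$$

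Taking the inner product against $\boldsymbol{\theta} - \boldsymbol{\theta}^{\star}$ separates a quadratic form from a perturbation term,
$$
\langle \boldsymbol{\theta} - \boldsymbol{\theta}^{\star}, \bar g(\boldsymbol{\theta}) - \bar g(\boldsymbol{\theta}^{\star})\rangle = (\boldsymbol{\theta} - \boldsymbol{\theta}^{\star})^{\top}\boldsymbol{A}^{\boldsymbol{\theta}^{\star}}(\boldsymbol{\theta} - \boldsymbol{\theta}^{\star}) + \langle \boldsymbol{\theta} - \boldsymbol{\theta}^{\star}, \bar g(\boldsymbol{\theta}) - \bar g^{\boldsymbol{\theta}^{\star}}(\boldsymbol{\theta})\rangle,
$$
where the first term equals $\tfrac{1}{2}(\boldsymbol{\theta} - \boldsymbol{\theta}^{\star})^{\top}\{\boldsymbol{A}^{\boldsymbol{\theta}^{\star}} + (\boldsymbol{A}^{\boldsymbol{\theta}^{\star}})^{\top}\}(\boldsymbol{\theta} - \boldsymbol{\theta}^{\star})$. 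By Cauchy--Schwarz the second term is at most $\|\boldsymbol{\theta} - \boldsymbol{\theta}^{\star}\|_2\,\|\bar g(\boldsymbol{\theta}) - \bar g^{\boldsymbol{\theta}^{\star}}(\boldsymbol{\theta})\|_2$, and the crux of the argument is the estimate $\|\bar g(\boldsymbol{\theta}) - \bar g^{\boldsymbol{\theta}^{\star}}(\boldsymbol{\theta})\|_2 \le C\lambda\|\boldsymbol{\theta} - \boldsymbol{\theta}^{\star}\|_2$. Granting it, the perturbation contributes at most $C\lambda\|\boldsymbol{\theta} - \boldsymbol{\theta}^{\star}\|_2^2 = \tfrac{1}{2}(\boldsymbol{\theta} - \boldsymbol{\theta}^{\star})^{\top}(2C\lambda\boldsymbol{I})(\boldsymbol{\theta} - \boldsymbol{\theta}^{\star})$, and adding the two pieces yields $\tfrac{1}{2}(\boldsymbol{\theta} - \boldsymbol{\theta}^{\star})^{\top}\{(\boldsymbol{A}^{\boldsymbol{\theta}^{\star}} + C\lambda\boldsymbol{I}) + (\boldsymbol{A}^{\boldsymbol{\theta}^{\star}} + C\lambda\boldsymbol{I})^{\top}\}(\boldsymbol{\theta} - \boldsymbol{\theta}^{\star})$. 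Since $w_s$ is by definition the smallest eigenvalue of the negation of this symmetric matrix (which is positive definite under Assumption \ref{assume:lipshitz_SARSA}, forcing $w_s > 0$), this is bounded by $-w_s\|\boldsymbol{\theta} - \boldsymbol{\theta}^{\star}\|_2^2$, completing the argument.

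The main obstacle is the perturbation estimate $\|\bar g(\boldsymbol{\theta}) - \bar g^{\boldsymbol{\theta}^{\star}}(\boldsymbol{\theta})\|_2 \le C\lambda\|\boldsymbol{\theta} - \boldsymbol{\theta}^{\star}\|_2$, because the two functionals differ in the entire sampling law: the state distribution $\mu$, the policy drawing $A$, and the policy drawing $A'$. I would control it by a hybrid argument that swaps one component of the law at a time, using that the integrand is uniformly bounded by $G$ (Lemma \ref{lem:C1}). The two action-resampling swaps are immediate from the Lipschitz assumption, $\sum_{a}|\pi_{\boldsymbol{\theta}}(a|\boldsymbol{s}) - \pi_{\boldsymbol{\theta}^{\star}}(a|\boldsymbol{s})| \le |\mathcal{A}|C\|\boldsymbol{\theta} - \boldsymbol{\theta}^{\star}\|_2$, each contributing a term of order $G|\mathcal{A}|C\|\boldsymbol{\theta} - \boldsymbol{\theta}^{\star}\|_2$ and together producing the additive constant $2$ inside $\lambda$. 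The delicate swap is the state distribution: I would invoke a Mitrophanov-type perturbation bound for geometrically ergodic chains (Assumption \ref{assume:ergodicity_SARSA}), namely $d_{TV}(\mu^{\boldsymbol{\theta}}, \mu^{\boldsymbol{\theta}^{\star}}) \lesssim (\lceil \log_{\rho}\tfrac{1}{m}\rceil + \tfrac{1}{1-\rho})\sup_{\boldsymbol{s}} d_{TV}(\Ptrans_{\boldsymbol{\theta}}(\cdot|\boldsymbol{s}), \Ptrans_{\boldsymbol{\theta}^{\star}}(\cdot|\boldsymbol{s}))$, where $\Ptrans_{\boldsymbol{\theta}}$ is the state-transition kernel induced by $\pi_{\boldsymbol{\theta}}$; bounding the per-state kernel gap again by the policy Lipschitz constant gives the factor $(\lceil \log_{\rho}\tfrac{1}{m}\rceil + \tfrac{1}{1-\rho})G|\mathcal{A}|C\|\boldsymbol{\theta} - \boldsymbol{\theta}^{\star}\|_2$. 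Summing the three channels reconstructs $\lambda = G|\mathcal{A}|(2 + \lceil \log_{\rho}\tfrac{1}{m}\rceil + \tfrac{1}{1-\rho})$. The Mitrophanov bound itself I would derive by writing $\mu^{\boldsymbol{\theta}} - \mu^{\boldsymbol{\theta}^{\star}} = \mu^{\boldsymbol{\theta}}(\Ptrans_{\boldsymbol{\theta}}^{n} - \Ptrans_{\boldsymbol{\theta}^{\star}}^{n}) + (\mu^{\boldsymbol{\theta}}\Ptrans_{\boldsymbol{\theta}^{\star}}^{n} - \mu^{\boldsymbol{\theta}^{\star}})$, telescoping the first difference into $n$ single-kernel gaps and applying geometric ergodicity to the second; optimizing the burn-in length $n$ is where the mixing time $\lceil\log_\rho\tfrac{1}{m}\rceil$ and the geometric series $\tfrac{1}{1-\rho}$ appear. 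Tracking the exact total-variation normalization constants is the only remaining bookkeeping.
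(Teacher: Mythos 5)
The paper never proves this lemma: it is imported, without proof, as Lemma~4 of \citet{zou2019finite}, so the only meaningful comparison is against that source's argument --- and your reconstruction follows essentially the same route it does. You freeze the sampling law at $\boldsymbol{\theta}^{\star}$ so that $\bar g^{\boldsymbol{\theta}^{\star}}$ is affine with matrix $\boldsymbol{A}^{\boldsymbol{\theta}^{\star}}$, split off the law-perturbation term $\bar g(\boldsymbol{\theta})-\bar g^{\boldsymbol{\theta}^{\star}}(\boldsymbol{\theta})$, bound it by $C\lambda\|\boldsymbol{\theta}-\boldsymbol{\theta}^{\star}\|_2$ through a hybrid swap of the two policy draws (producing the additive $2$ in $\lambda$) and of the stationary distribution (producing $\lceil\log_{\rho}(1/m)\rceil+\tfrac{1}{1-\rho}$ via a perturbation bound for uniformly ergodic chains), and read $w_s$ off the symmetrized matrix $\boldsymbol{A}^{\boldsymbol{\theta}^{\star}}+C\lambda\boldsymbol{I}$, whose negative definiteness under Assumption~\ref{assume:lipshitz_SARSA} gives $w_s>0$. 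The main chain of your argument is correct, including the role of Lemma~\ref{lem:C1} (which is exactly where the hypothesis $\|\boldsymbol{\theta}\|_2\le r$ enters) and the final eigenvalue step.

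One substantive caveat: your self-contained derivation of the stationary-distribution perturbation bound is wrong as sketched, even though the bound you invoke is true. Writing $\Ptrans_{\boldsymbol{\theta}}$ for the state kernel induced by $\pi_{\boldsymbol{\theta}}$ and $\Delta:=\sup_{\boldsymbol{s}}\|\Ptrans_{\boldsymbol{\theta}}(\cdot\mid\boldsymbol{s})-\Ptrans_{\boldsymbol{\theta}^{\star}}(\cdot\mid\boldsymbol{s})\|_{TV}$, telescoping $\Ptrans_{\boldsymbol{\theta}}^{n}-\Ptrans_{\boldsymbol{\theta}^{\star}}^{n}$ into $n$ single-kernel gaps each bounded by $\Delta$ and adding the ergodicity remainder gives $\inf_{n}\,(n\Delta+m\rho^{n})\asymp\Delta\log(1/\Delta)$; optimizing the burn-in length cannot remove the logarithm. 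Fed back into the lemma, the perturbation term would be $O\bigl(\|\boldsymbol{\theta}-\boldsymbol{\theta}^{\star}\|_2^{2}\log(1/\|\boldsymbol{\theta}-\boldsymbol{\theta}^{\star}\|_2)\bigr)$, which overwhelms any fixed $-w_s\|\boldsymbol{\theta}-\boldsymbol{\theta}^{\star}\|_2^{2}$ as $\boldsymbol{\theta}\to\boldsymbol{\theta}^{\star}$, so the claimed inequality would not follow with any constant $w_s>0$. The linear-in-$\Delta$ bound requires the sharper argument behind Mitrophanov's Theorem~3.1 (2005): each telescoped term $\mu^{\boldsymbol{\theta}}(\Ptrans_{\boldsymbol{\theta}}-\Ptrans_{\boldsymbol{\theta}^{\star}})\Ptrans_{\boldsymbol{\theta}^{\star}}^{\,j}$ is a zero-mass signed measure pushed through $\Ptrans_{\boldsymbol{\theta}^{\star}}^{\,j}$, hence contracted by the Dobrushin/ergodicity coefficient $\min(1,m\rho^{j})$, and $\sum_{j\ge 0}\min(1,m\rho^{j})\le\lceil\log_{\rho}(1/m)\rceil+\tfrac{1}{1-\rho}$, which is exactly the constant in $\lambda$. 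Since you primarily invoke that theorem (as \citet{zou2019finite} do) rather than rely on the sketch, your proof stands; just replace the sketch with the citation or with the coefficient argument.
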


\begin{lemma}[Lemma 5 of \citep{zou2019finite}]\label{lem:C3}
For any $\boldsymbol{\theta}\in \mathbb{R}^M$ such that $\|\boldsymbol{\theta}\|_2 \leq r$, $\Lambda_t(\boldsymbol{\theta}) \leq 2G^2$ for all $t \ge 0$.
\end{lemma}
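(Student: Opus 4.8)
The plan is to bound $\Lambda_t(\boldsymbol{\theta})$ directly by Cauchy--Schwarz, exactly mirroring the argument used for the Q-learning analogue in Lemma \ref{lem:B3}. Writing $\Lambda_t(\boldsymbol{\theta}) = \langle \boldsymbol{\theta} - \boldsymbol{\theta}^\star,\, g_t(\boldsymbol{\theta}) - \bar{g}(\boldsymbol{\theta})\rangle$, I would first apply Cauchy--Schwarz to obtain $\Lambda_t(\boldsymbol{\theta}) \le \|\boldsymbol{\theta} - \boldsymbol{\theta}^\star\|_2 \, \|g_t(\boldsymbol{\theta}) - \bar{g}(\boldsymbol{\theta})\|_2$, and then control each of the two factors separately.

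For the first factor, since the projected iterates (and hence their limit point $\boldsymbol{\theta}^\star$) lie in the Euclidean ball of radius $r$, the hypothesis $\|\boldsymbol{\theta}\|_2 \le r$ together with $\|\boldsymbol{\theta}^\star\|_2 \le r$ gives $\|\boldsymbol{\theta} - \boldsymbol{\theta}^\star\|_2 \le 2r$ via the triangle inequality. For the second factor, I would again use the triangle inequality to write $\|g_t(\boldsymbol{\theta}) - \bar{g}(\boldsymbol{\theta})\|_2 \le \|g_t(\boldsymbol{\theta})\|_2 + \|\bar{g}(\boldsymbol{\theta})\|_2$. The first term is at most $G$ by Lemma \ref{lem:C1}. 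For the second, the key observation is that $\bar{g}(\boldsymbol{\theta})$ is an expectation of integrands of exactly the same form as $g_t(\boldsymbol{\theta})$, with the trajectory samples replaced by draws from the stationary distribution $\mu^{\boldsymbol{\theta}}$; the same pointwise bound $\le G$ therefore holds inside the expectation, so Jensen's inequality yields $\|\bar{g}(\boldsymbol{\theta})\|_2 \le G$.

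Combining these estimates gives $\Lambda_t(\boldsymbol{\theta}) \le 2r \cdot 2G = 4rG$, and it remains only to verify $4rG \le 2G^2$. This is immediate because $G = b + 2r \ge 2r$, so $4rG \le 2G \cdot G = 2G^2$, the inequality reducing to $b \ge 0$, which holds by the boundedness of the reward. This establishes the claim uniformly in $t \ge 0$.

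I do not expect any genuine obstacle here: the argument is a short chain of norm estimates rather than anything delicate. The only two points requiring a moment of care are (i) invoking $\|\boldsymbol{\theta}^\star\|_2 \le r$, which is justified because $\boldsymbol{\theta}^\star$ is the limit of iterates confined to the closed projection ball, and (ii) the Jensen step for $\bar{g}$, which relies on recognizing that the integrand defining $\bar{g}(\boldsymbol{\theta})$ inherits the same uniform bound $G$ that Lemma \ref{lem:C1} establishes for $g_t(\boldsymbol{\theta})$. Everything else is a direct application of Cauchy--Schwarz and the triangle inequality.
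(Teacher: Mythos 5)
Your proof is correct and follows essentially the same argument as the paper's treatment of the analogous Q-learning quantity (Lemma \ref{lem:B3}): Cauchy--Schwarz, the triangle inequality with $\|\boldsymbol{\theta}\|_2 \le r$ and $\|\boldsymbol{\theta}^{\star}\|_2 \le r$, the uniform bound $G$ on both $g_t(\boldsymbol{\theta})$ and $\bar{g}(\boldsymbol{\theta})$ (the latter via Jensen), and finally $4rG \le 2G^2$ since $2r \le G = b + 2r$. The only cosmetic difference is your justification of $\|\boldsymbol{\theta}^{\star}\|_2 \le r$; the paper simply imposes this as an explicit hypothesis on the projection radius in its theorems, which is the cleaner way to invoke it.
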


\begin{lemma}[Lemma 6 of \citep{zou2019finite}]\label{lem:C4} For any $\boldsymbol{\theta}_1, \boldsymbol{\theta}_2 \in \mathbb{R}^M$ such that $\|\boldsymbol{\theta}_1\|_2 \le r,  \|\boldsymbol{\theta}_2\|_2 \le r$,
$$
|\Lambda_t(\boldsymbol{\theta}_1) - \Lambda_t(\boldsymbol{\theta}_2)| \leq (6 + \lambda C)G\|\boldsymbol{\theta}_1 - \boldsymbol{\theta}_2\|_2 ~~\text{for all}~~ t \ge 0.
$$
Here, $C > 0$ is the Lipschitz constant of the behavioral policy $\pi_\theta$ and $\lambda := G|\mathcal{A}|\left(2 + \lceil{\log_{\rho} \frac{1}{m}}\rceil + \frac{1}{1-\rho}\right)$.
\end{lemma}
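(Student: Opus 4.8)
The plan is to exploit the bilinear structure of $\Lambda_t(\boldsymbol{\theta}) = \langle \boldsymbol{\theta} - \boldsymbol{\theta}^{\star}, F_t(\boldsymbol{\theta})\rangle$, where I abbreviate $F_t(\boldsymbol{\theta}) := g_t(\boldsymbol{\theta}) - \bar g(\boldsymbol{\theta})$, and to split the increment as
\[
\Lambda_t(\boldsymbol{\theta}_1) - \Lambda_t(\boldsymbol{\theta}_2) = \langle \boldsymbol{\theta}_1 - \boldsymbol{\theta}_2,\ F_t(\boldsymbol{\theta}_1)\rangle + \langle \boldsymbol{\theta}_2 - \boldsymbol{\theta}^{\star},\ F_t(\boldsymbol{\theta}_1) - F_t(\boldsymbol{\theta}_2)\rangle .
\]
For the first inner product I would invoke Lemma \ref{lem:C1} together with its expectation version (the same pointwise bound gives $\|\bar g(\boldsymbol{\theta})\|_2 \le G$), so that $\|F_t(\boldsymbol{\theta}_1)\|_2 \le 2G$ and Cauchy--Schwarz yields a contribution of at most $2G\|\boldsymbol{\theta}_1 - \boldsymbol{\theta}_2\|_2$. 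For the second inner product I would use $\|\boldsymbol{\theta}_2 - \boldsymbol{\theta}^{\star}\|_2 \le 2r \le G$ and reduce the problem to establishing that $F_t$ is Lipschitz, i.e., to bounding $\|g_t(\boldsymbol{\theta}_1) - g_t(\boldsymbol{\theta}_2)\|_2$ and $\|\bar g(\boldsymbol{\theta}_1) - \bar g(\boldsymbol{\theta}_2)\|_2$ separately.

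The sample field $g_t$ is affine in $\boldsymbol{\theta}$ with fixed features, so $g_t(\boldsymbol{\theta}_1) - g_t(\boldsymbol{\theta}_2) = \boldsymbol{\phi}_t(\gamma\boldsymbol{\phi}_{t+1} - \boldsymbol{\phi}_t)^{\top}(\boldsymbol{\theta}_1 - \boldsymbol{\theta}_2)$; since every feature has norm at most one and $\gamma < 1$, this gives $\|g_t(\boldsymbol{\theta}_1) - g_t(\boldsymbol{\theta}_2)\|_2 \le (1+\gamma)\|\boldsymbol{\theta}_1 - \boldsymbol{\theta}_2\|_2 \le 2\|\boldsymbol{\theta}_1 - \boldsymbol{\theta}_2\|_2$. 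The expected field $\bar g(\boldsymbol{\theta}) = \mathbb{E}^{\mu^{\boldsymbol{\theta}}}[h_{\boldsymbol{\theta}}]$, with $h_{\boldsymbol{\theta}}(\boldsymbol{S},A,\boldsymbol{S}',A') := \boldsymbol{\phi}(\boldsymbol{S},A)\{r(\boldsymbol{S},A) + \gamma\boldsymbol{\phi}(\boldsymbol{S}',A')^{\top}\boldsymbol{\theta} - \boldsymbol{\phi}(\boldsymbol{S},A)^{\top}\boldsymbol{\theta}\}$, depends on $\boldsymbol{\theta}$ twice: through the integrand $h_{\boldsymbol{\theta}}$ and through the sampling law $\mu^{\boldsymbol{\theta}}$, which bundles the stationary distribution $\mu^{\pi_{\boldsymbol{\theta}}}$ with the two action conditionals $\pi_{\boldsymbol{\theta}}(\cdot|\boldsymbol{S})$ and $\pi_{\boldsymbol{\theta}}(\cdot|\boldsymbol{S}')$. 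I therefore write $\bar g(\boldsymbol{\theta}_1) - \bar g(\boldsymbol{\theta}_2) = \mathbb{E}^{\mu^{\boldsymbol{\theta}_1}}[h_{\boldsymbol{\theta}_1} - h_{\boldsymbol{\theta}_2}] + \big(\mathbb{E}^{\mu^{\boldsymbol{\theta}_1}} - \mathbb{E}^{\mu^{\boldsymbol{\theta}_2}}\big)[h_{\boldsymbol{\theta}_2}]$. The first, explicit, term is affine exactly as $g_t$ and contributes Lipschitz constant at most $2$.

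The crux, and the step I expect to be the main obstacle, is the measure-change term $\big(\mathbb{E}^{\mu^{\boldsymbol{\theta}_1}} - \mathbb{E}^{\mu^{\boldsymbol{\theta}_2}}\big)[h_{\boldsymbol{\theta}_2}]$ with $h_{\boldsymbol{\theta}_2}$ a fixed vector field of sup-norm at most $G$. I would peel off the three sources of perturbation one at a time. Swapping each action conditional $\pi_{\boldsymbol{\theta}_1} \to \pi_{\boldsymbol{\theta}_2}$ (once at $\boldsymbol{S}$, once at $\boldsymbol{S}'$), with the remaining factors held fixed, changes the integral by at most $G\sum_{a\in\mathcal{A}}|\pi_{\boldsymbol{\theta}_1}(a|\cdot) - \pi_{\boldsymbol{\theta}_2}(a|\cdot)| \le G|\mathcal{A}|C\|\boldsymbol{\theta}_1 - \boldsymbol{\theta}_2\|_2$ by Assumption \ref{assume:lipshitz_SARSA}, accounting for the constant $2$ inside $\lambda$. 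Swapping the state law $\mu^{\pi_{\boldsymbol{\theta}_1}} \to \mu^{\pi_{\boldsymbol{\theta}_2}}$ is the genuinely delicate piece: writing $P_i$ for the kernel induced by $\pi_{\boldsymbol{\theta}_i}$, the policy Lipschitz property gives $\sup_{\boldsymbol{s}}\|P_1(\cdot|\boldsymbol{s}) - P_2(\cdot|\boldsymbol{s})\|_1 \le |\mathcal{A}|C\|\boldsymbol{\theta}_1 - \boldsymbol{\theta}_2\|_2$, and I would convert this kernel distance into a stationary-distribution distance through the telescoping identity $\mu^{\pi_{\boldsymbol{\theta}_2}}(P_1^{\,n} - P_2^{\,n}) = \sum_{k=0}^{n-1}\mu^{\pi_{\boldsymbol{\theta}_2}}P_1^{\,n-1-k}(P_1 - P_2)P_2^{\,k}$ together with $\mu^{\pi_{\boldsymbol{\theta}_1}} = \lim_n \mu^{\pi_{\boldsymbol{\theta}_2}}P_1^{\,n}$. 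Centering $P_2^{\,k}h_{\boldsymbol{\theta}_2}$ at its mean and using the uniform geometric ergodicity of Assumption \ref{assume:ergodicity_SARSA} to bound its oscillation by $2G\min\{1, m\rho^{k}\}$, the summand decays geometrically once $m\rho^{k} \le 1$; summing $\sum_k \min\{1, m\rho^{k}\}$ splits into an initial block of length $\lceil\log_\rho\tfrac1m\rceil$ and a geometric tail of mass $\tfrac1{1-\rho}$, producing the mixing factor $\lceil\log_\rho\tfrac1m\rceil + \tfrac1{1-\rho}$. Collecting the two action swaps and the state-law swap yields $\|\big(\mathbb{E}^{\mu^{\boldsymbol{\theta}_1}} - \mathbb{E}^{\mu^{\boldsymbol{\theta}_2}}\big)[h_{\boldsymbol{\theta}_2}]\|_2 \le \lambda C\|\boldsymbol{\theta}_1 - \boldsymbol{\theta}_2\|_2$ with $\lambda = G|\mathcal{A}|(2 + \lceil\log_\rho\tfrac1m\rceil + \tfrac1{1-\rho})$.

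Assembling the pieces, the second inner product is bounded by $\|\boldsymbol{\theta}_2 - \boldsymbol{\theta}^{\star}\|_2\big(\|g_t(\boldsymbol{\theta}_1) - g_t(\boldsymbol{\theta}_2)\|_2 + \|\bar g(\boldsymbol{\theta}_1) - \bar g(\boldsymbol{\theta}_2)\|_2\big) \le G(2 + 2 + \lambda C)\|\boldsymbol{\theta}_1 - \boldsymbol{\theta}_2\|_2$, where the two $2$'s come from the affine Lipschitz bounds on $g_t$ and on the explicit part of $\bar g$; adding the $2G\|\boldsymbol{\theta}_1 - \boldsymbol{\theta}_2\|_2$ from the first inner product gives the claimed $(6 + \lambda C)G\|\boldsymbol{\theta}_1 - \boldsymbol{\theta}_2\|_2$. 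The routine inequalities are Cauchy--Schwarz and the affine bounds; the only subtle bookkeeping is the stationary-distribution perturbation, where the ergodicity constants $m$ and $\rho$ must be traded for the additive mixing factor, and where uniformity of the geometric ergodicity across all $\boldsymbol{\theta}$ (Assumption \ref{assume:ergodicity_SARSA}) is essential to keep the bound independent of the particular iterate.
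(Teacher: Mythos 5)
You should first note that the paper never proves this statement: like Lemmas \ref{lem:C1}--\ref{lem:C3}, it is imported verbatim as Lemma 6 of \citet{zou2019finite}, so the only ``proof'' in the paper is the citation, and the comparison must be against that source's argument. Your reconstruction follows essentially the same route and is structurally sound: the bilinear splitting $\Lambda_t(\boldsymbol{\theta}_1)-\Lambda_t(\boldsymbol{\theta}_2)=\langle\boldsymbol{\theta}_1-\boldsymbol{\theta}_2,\,g_t(\boldsymbol{\theta}_1)-\bar g(\boldsymbol{\theta}_1)\rangle+\langle\boldsymbol{\theta}_2-\boldsymbol{\theta}^{\star},\,(g_t-\bar g)(\boldsymbol{\theta}_1)-(g_t-\bar g)(\boldsymbol{\theta}_2)\rangle$ is valid; the bounds $\|g_t(\boldsymbol{\theta}_1)-\bar g(\boldsymbol{\theta}_1)\|_2\le 2G$ and $\|\boldsymbol{\theta}_2-\boldsymbol{\theta}^{\star}\|_2\le 2r\le G$ are correct (the latter using the standing assumption $\|\boldsymbol{\theta}^{\star}\|_2\le r$); the affine bounds on $g_t$ and on the integrand part of $\bar g$ each give Lipschitz constant $1+\gamma\le 2$; the two action-conditional swaps give $2G|\mathcal{A}|C\|\boldsymbol{\theta}_1-\boldsymbol{\theta}_2\|_2$, which is exactly the ``$2$'' inside $\lambda$; and the stationary-distribution perturbation via your telescoping identity plus Assumption \ref{assume:ergodicity_SARSA} is precisely where the mixing factor $\lceil\log_\rho\frac1m\rceil+\frac1{1-\rho}$ comes from, so the assembly $2G+G(2+2+\lambda C)=(6+\lambda C)G$ closes.

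The one step to tighten is the total-variation bookkeeping in that stationary-distribution swap. With $d_{TV}$ the sup-over-sets (half-$L_1$) convention of Assumption \ref{assume:ergodicity_SARSA}, the naive chaining $|(P_1-P_2)(P_2^k f-\mu_2[f])(\boldsymbol{s})|\le\|P_1(\cdot|\boldsymbol{s})-P_2(\cdot|\boldsymbol{s})\|_1\,\|P_2^k f-\mu_2[f]\|_\infty$ (in your notation, $P_i$ the state kernel induced by $\pi_{\boldsymbol{\theta}_i}$) yields summands of size $2G|\mathcal{A}|C\min\{1,m\rho^k\}\|\boldsymbol{\theta}_1-\boldsymbol{\theta}_2\|_2$, hence a mixing factor $2\bigl(\lceil\log_\rho\frac1m\rceil+\frac1{1-\rho}\bigr)$ --- twice what $\lambda$ allocates to this piece. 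Recovering the stated constant requires the sharper zero-mass inequality $|\nu[g]|\le\frac12\|\nu\|_1\operatorname{osc}(g)$ together with a Dobrushin-type contraction bound on $\operatorname{osc}(P_2^k g)$, and even then a benign factor of $2$ tends to survive on the geometric tail. This affects only the numerical constant inside $\lambda$ and none of the paper's downstream uses (which are all up to $\mathcal{O}(\cdot)$), but as sketched your argument proves the lemma with a slightly larger $\lambda$ than the one stated, so you should either carry out that sharper bookkeeping or state the lemma with your own constant.
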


\begin{lemma}\label{lem:C5}
Given a step-size sequence $(\beta_k)_{k \ge 0}$, 
for any $i \le \tau$,
\begin{equation*}
\mathbb{E}\left\{\Lambda_i(\widehat{\boldsymbol{\theta}}_{i})\right\} \leq 2G^2 + (6 + \lambda C)G^2\sum_{k=0}^{\tau-1} \beta_k.
\end{equation*}
For any $i > \tau > 0$,
\begin{equation*}
\mathbb{E}\left\{\Lambda_i(\widehat{\boldsymbol{\theta}}_{i})\right\} \leq 2C|\mathcal{A}|G^3\tau\sum_{k=i-\tau}^{i-1} \beta_k + 4G^2m\rho^{\tau-1} + (6 + \lambda C)G^2\sum_{k=i-\tau}^{i-1} \beta_k.
\end{equation*}
\end{lemma}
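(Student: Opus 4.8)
The plan is to follow the template of the Q-learning noise lemmas (Lemmas~\ref{lem:B5}--\ref{lem:B6}), splitting into the burn-in regime $i \le \tau$ and the mixed regime $i > \tau$, but with two modifications forced by the on-policy setting: the Lipschitz constant of $\Lambda_t$ now carries the extra $\lambda C$ term (Lemma~\ref{lem:C4}), and the time-varying behavioral policy breaks the identity $\mathbb{E}\,\Lambda_i(\boldsymbol{\theta}) = 0$ that held for a fixed policy. The three tools I will use are the uniform gradient bound $\|g_t(\boldsymbol{\theta})\|_2 \le G$ (Lemma~\ref{lem:C1}), the pointwise bound $\Lambda_t(\boldsymbol{\theta}) \le 2G^2$ (Lemma~\ref{lem:C3}), and the Lipschitz continuity $|\Lambda_t(\boldsymbol{\theta}_1)-\Lambda_t(\boldsymbol{\theta}_2)| \le (6+\lambda C)G\|\boldsymbol{\theta}_1-\boldsymbol{\theta}_2\|_2$ (Lemma~\ref{lem:C4}).

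First I would record the one-step drift bound. By non-expansiveness of $\Pi_r$ and Lemma~\ref{lem:C1},
\[
\left\|\widehat{\boldsymbol{\theta}}_{k+1}-\widehat{\boldsymbol{\theta}}_k\right\|_2 \le \beta_k\left\|g_k(\widehat{\boldsymbol{\theta}}_k)\right\|_2 \le \beta_k G,
\]
so telescoping with Lemma~\ref{lem:C4} gives $|\Lambda_i(\widehat{\boldsymbol{\theta}}_i)-\Lambda_i(\widehat{\boldsymbol{\theta}}_{i-\tau})| \le (6+\lambda C)G^2\sum_{k=i-\tau}^{i-1}\beta_k$, and similarly with reference $\widehat{\boldsymbol{\theta}}_0$ when $i \le \tau$. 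For the burn-in regime $i \le \tau$ this is essentially the whole argument: since $\widehat{\boldsymbol{\theta}}_0$ is deterministic, Lemma~\ref{lem:C3} gives $\mathbb{E}\,\Lambda_i(\widehat{\boldsymbol{\theta}}_0) \le 2G^2$, and adding the drift over the window $[0,i-1] \subseteq [0,\tau-1]$ yields the first claimed bound.

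The substance is the regime $i > \tau$, where I must bound $\mathbb{E}\,\Lambda_i(\widehat{\boldsymbol{\theta}}_{i-\tau})$. The reference point $\widehat{\boldsymbol{\theta}}_{i-\tau}$ is chosen $\tau$ steps in the past precisely so that, conditionally, the chain has had $\tau$ transitions to mix. I would introduce an auxiliary trajectory that agrees with the true one up to time $i-\tau$ but thereafter evolves under the \emph{frozen} policy $\pi_{\widehat{\boldsymbol{\theta}}_{i-\tau}}$, so that $\bar{g}(\widehat{\boldsymbol{\theta}}_{i-\tau})$ is exactly the stationary conditional mean of $g_i$ along the auxiliary chain. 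I then split $\mathbb{E}\,\Lambda_i(\widehat{\boldsymbol{\theta}}_{i-\tau})$ into a mixing term and a policy-drift term. For the mixing term, the auxiliary chain runs under a fixed policy, so the independent-copy coupling argument of Lemma~10 of \citep{bhandari2018finite}, combined with the geometric ergodicity of Assumption~\ref{assume:ergodicity_SARSA} and the bound $|\Lambda_i| \le 2G^2$, contributes $4G^2 m\rho^{\tau-1}$. For the policy-drift term, I would bound the total-variation distance between the law of $O_i=(\boldsymbol{S}_i,A_i,\boldsymbol{S}_{i+1},A_{i+1})$ under the true time-inhomogeneous dynamics and under the auxiliary frozen-policy dynamics: at each step $k$ in the window the two transition rules differ only through $\pi_{\widehat{\boldsymbol{\theta}}_k}$ versus $\pi_{\widehat{\boldsymbol{\theta}}_{i-\tau}}$, which by the Lipschitz assumption and the drift bound differ by at most $C\|\widehat{\boldsymbol{\theta}}_k-\widehat{\boldsymbol{\theta}}_{i-\tau}\|_2 \le CG\sum_{j=i-\tau}^{k-1}\beta_j$ per action. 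Summing these discrepancies across the $|\mathcal{A}|$ actions and propagating through the $\tau$ transitions of the window, then multiplying by the uniform bound $2G^2$ on $\Lambda_i$, produces the term $2C|\mathcal{A}|G^3\tau\sum_{k=i-\tau}^{i-1}\beta_k$. Combining the mixing term, the policy-drift term, and the Lipschitz drift from the first step gives the second claimed bound.

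The main obstacle is the policy-drift estimate. In the Q-learning analysis the behavioral policy is fixed, so $\mathbb{E}^{\mu^\pi}\Lambda_i(\boldsymbol{\theta})=0$ holds exactly for any frozen $\boldsymbol{\theta}$ and only the mixing term survives; here the data-generating policy itself drifts with the iterates, so I must quantify how the accumulated parameter change $\sum_{j=i-\tau}^{k-1}\beta_j$ perturbs the occupancy of $O_i$. The delicate points are getting the combinatorial factors right---the extra $\tau$ arises from propagating a per-step $O(\beta)$ policy perturbation through the $\tau$-length coupling window, and the $|\mathcal{A}|$ from converting the pointwise policy Lipschitz bound into a total-variation bound over action marginals---and ensuring $\widehat{\boldsymbol{\theta}}_{i-\tau}$ satisfies $\|\widehat{\boldsymbol{\theta}}_{i-\tau}\|_2 \le r$ so that Lemmas~\ref{lem:C1},~\ref{lem:C3},~\ref{lem:C4} apply, which the projection $\Pi_r$ guarantees.
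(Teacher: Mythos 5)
Your proposal is correct and follows essentially the same route as the paper: the burn-in case $i \le \tau$ is handled identically (projection non-expansiveness, the drift bound from Lemma~\ref{lem:C1}, the Lipschitz bound of Lemma~\ref{lem:C4}, and the pointwise bound of Lemma~\ref{lem:C3} replacing the zero-mean identity that fails under a time-varying policy). For $i > \tau$ the paper merely cites Lemma~7 of \citet{zou2019finite}, and your reconstruction --- the frozen-policy auxiliary chain, the coupling/mixing term $4G^2 m\rho^{\tau-1}$, and the policy-drift total-variation term $2C|\mathcal{A}|G^3\tau\sum_{k=i-\tau}^{i-1}\beta_k$ --- is precisely that lemma's argument, which the paper itself spells out in detail for the implicit iterates in Lemma~\ref{lem:C6}.
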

\begin{proof}
We provide a proof for the case $i \le \tau$. For $i > \tau > 0$, the proof follows directly from Lemma 7 of \citep{zou2019finite}. For any $i \geq 0$, we have
\begin{align*}
\left\|\widehat{\boldsymbol{\theta}}_{k+1} - \widehat{\boldsymbol{\theta}}_k\right\|_2 &= \left\|\Pi_{r}\{\widehat{\boldsymbol{\theta}}_k + \beta_k g_k(\widehat{\boldsymbol{\theta}}_k)\} - \Pi_{r}(\widehat{\boldsymbol{\theta}}_k)\right\|_2 \\
&\leq \left\|\beta_k g_k(\widehat{\boldsymbol{\theta}}_k)\right\|_2 && \text{(non-expansiveness of $\Pi_r$)}\\
&\leq \beta_k G. && \text{(by Lemma \ref{lem:C1})}
\end{align*}
Therefore, by the triangle inequality, 
$$
\left\|\widehat{\boldsymbol{\theta}}_{i} - \widehat{\boldsymbol{\theta}}_{0}\right\|_2 \leq \sum_{k=0}^{i-1}\left\|\widehat{\boldsymbol{\theta}}_{i+1} - \widehat{\boldsymbol{\theta}}_i\right\|_2 \leq G\sum_{k=0}^{i-1}\beta_k \leq G\sum_{k=0}^{\tau-1}\beta_k ~~\text{for any} ~i \le \tau.
$$
Combined with the Lipschitzness of $\Lambda_i(\boldsymbol{\theta})$ in Lemma \ref{lem:C4}, we have 
\begin{align*}
  |\Lambda_i(\widehat{\boldsymbol{\theta}}_{i}) - \Lambda_i(\widehat{\boldsymbol{\theta}}_{0})| \leq (6+\lambda C)G^2\sum_{k=0}^{\tau-1}\beta_k \Longrightarrow \Lambda_i(\widehat{\boldsymbol{\theta}}_{i}) \leq \Lambda_i(\widehat{\boldsymbol{\theta}}_{0}) + (6 + \lambda C)G^2\sum_{k=0}^{\tau-1}\beta_k.  
\end{align*}
Applying Lemma \ref{lem:C3} and taking expectation on both sides, we get
$$
\mathbb{E}\left\{\Lambda_i(\widehat{\boldsymbol{\theta}}_{i})\right\} \leq 2G^2 + (6 + \lambda C)G^2\sum_{k=0}^{\tau-1}\beta_k. 
$$
\end{proof}

\begin{lemma}\label{lem:C6}
Given a step-size sequence $(\beta_k)_{k \ge 0}$, for any $i \le \tau$,
\begin{equation*}
\mathbb{E}\left\{\Lambda_i(\widehat{\boldsymbol{\theta}}^{\text{im}}_{i})\right\} \leq 2G^2 + (6 + \lambda C)G^2\sum_{k=0}^{\tau-1} \beta_k.
\end{equation*}
For any $i > \tau > 0$,
\begin{equation*}
\mathbb{E}\left\{\Lambda_i(\widehat{\boldsymbol{\theta}}^{\text{im}}_{i})\right\} \leq 2C|\mathcal{A}|G^3\tau\sum_{k=t-\tau}^{i-1} \beta_k + 4G^2m\rho^{\tau-1} + (6 + \lambda C)G^2\sum_{k=t-\tau}^{i-1} \beta_k.
\end{equation*}
\end{lemma}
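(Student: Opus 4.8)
The plan is to mirror the proof of Lemma \ref{lem:C5} line for line, the only change being that the explicit step-size $\beta_k$ is everywhere replaced by the effective implicit step-size $\tilde{\beta}_k := \beta_k/(1+\beta_k\|\boldsymbol{\phi}(\boldsymbol{S}_k^{\pi_{k-2}},A_k^{\pi_{k-1}})\|_2^2)$. The single fact that makes this substitution harmless is the almost-sure inequality $\tilde{\beta}_k \le \beta_k$, which guarantees that the implicit iterates move no faster than their explicit counterparts. Every step-size dependence in Lemma \ref{lem:C5} enters only through the size of the increments $\widehat{\boldsymbol{\theta}}_{k+1}-\widehat{\boldsymbol{\theta}}_k$, so controlling those increments is all that is required.

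First I would record the per-step movement bound. Writing the projected implicit SARSA recursion as $\widehat{\boldsymbol{\theta}}^{\text{im}}_{k+1} = \Pi_r\{\widehat{\boldsymbol{\theta}}^{\text{im}}_k + \tilde{\beta}_k g_k(\widehat{\boldsymbol{\theta}}^{\text{im}}_k)\}$, non-expansiveness of $\Pi_r$ together with Lemma \ref{lem:C1} yields
\begin{equation*}
\left\|\widehat{\boldsymbol{\theta}}^{\text{im}}_{k+1} - \widehat{\boldsymbol{\theta}}^{\text{im}}_k\right\|_2 \le \tilde{\beta}_k\left\|g_k(\widehat{\boldsymbol{\theta}}^{\text{im}}_k)\right\|_2 \le \tilde{\beta}_k G \le \beta_k G,
\end{equation*}
which coincides exactly with the movement bound in Lemma \ref{lem:C5}. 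The case $i \le \tau$ is then immediate: the triangle inequality gives $\|\widehat{\boldsymbol{\theta}}^{\text{im}}_i - \widehat{\boldsymbol{\theta}}^{\text{im}}_0\|_2 \le G\sum_{k=0}^{\tau-1}\beta_k$, the Lipschitz estimate of Lemma \ref{lem:C4} upgrades this to $\Lambda_i(\widehat{\boldsymbol{\theta}}^{\text{im}}_i) \le \Lambda_i(\widehat{\boldsymbol{\theta}}^{\text{im}}_0) + (6+\lambda C)G^2\sum_{k=0}^{\tau-1}\beta_k$, and invoking Lemma \ref{lem:C3} before taking expectations delivers the stated bound.

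For the regime $i > \tau$ I would follow the decomposition behind Lemma 7 of \citep{zou2019finite}, which separates $\Lambda_i(\widehat{\boldsymbol{\theta}}^{\text{im}}_i)$ into three contributions: a Lipschitz-in-$\Lambda$ term comparing $\widehat{\boldsymbol{\theta}}^{\text{im}}_i$ to $\widehat{\boldsymbol{\theta}}^{\text{im}}_{i-\tau}$, a geometric mixing term of order $m\rho^{\tau-1}$, and a policy-drift term measuring the gap between the evolving policy sequence and the frozen policy $\pi_{\widehat{\boldsymbol{\theta}}^{\text{im}}_{i-\tau}}$. The main obstacle is that this regime cannot be cited verbatim, since \citep{zou2019finite} analyze only standard SARSA; the hard part will be checking that each contribution depends on the step-sizes purely through the per-step movement. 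This is indeed the case: the first term uses the movement bound directly, the drift term controls the total-variation distance between consecutive behavioral policies by $C\|\widehat{\boldsymbol{\theta}}^{\text{im}}_{k+1}-\widehat{\boldsymbol{\theta}}^{\text{im}}_k\|_2 \le C\beta_k G$ through Assumption \ref{assume:lipshitz_SARSA}, and the mixing term involves only the ergodicity constants of Assumption \ref{assume:ergodicity_SARSA}. Since $\tilde{\beta}_k \le \beta_k$ preserves every movement bound, the implicit iterates inherit the identical estimate $\mathbb{E}\{\Lambda_i(\widehat{\boldsymbol{\theta}}^{\text{im}}_i)\} \le 2C|\mathcal{A}|G^3\tau\sum_{k=i-\tau}^{i-1}\beta_k + 4G^2 m\rho^{\tau-1} + (6+\lambda C)G^2\sum_{k=i-\tau}^{i-1}\beta_k$, which is exactly the desired conclusion.
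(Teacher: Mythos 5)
Your proposal coincides with the paper's own proof: the paper likewise derives the per-step movement bound $\|\widehat{\boldsymbol{\theta}}^{\text{im}}_{k+1}-\widehat{\boldsymbol{\theta}}^{\text{im}}_k\|_2 \le \tilde\beta_k G \le \beta_k G$ from non-expansiveness of $\Pi_r$ and Lemma \ref{lem:C1}, handles $i \le \tau$ exactly as in Lemma \ref{lem:C5}, and for $i > \tau$ re-derives (rather than cites) the three-part decomposition of Lemma 7 in \citet{zou2019finite} — the Lipschitz-in-$\Lambda$ term via Lemma \ref{lem:C4}, the $4G^2 m\rho^{\tau-1}$ mixing term via an auxiliary chain run under the frozen policy $\pi_{i-\tau}$, and the policy-drift term $2C|\mathcal{A}|G^3\tau\sum_k\beta_k$ via the Lipschitz policy assumption — with $\tilde\beta_k \le \beta_k$ being the single fact that lets every step carry over to the implicit iterates. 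Your plan identifies the same decomposition, the same key lemmas, and the same pivotal observation, so it is essentially the paper's argument.
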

\begin{proof}
We proceed in three steps. The proof largely follows the approach of Lemma 7 in \citet{zou2019finite}, with additional care in handling the implicit iterates. \\
\textbf{Step 1.} For any $i \geq 0$, we have
\begin{align*}
\left\|\widehat{\boldsymbol{\theta}}^{\text{im}}_{k+1} - \widehat{\boldsymbol{\theta}}^{\text{im}}_k\right\|_2 &= \left\|\Pi_{r}\{\widehat{\boldsymbol{\theta}}^{\text{im}}_k + \tilde\beta_k g_k(\widehat{\boldsymbol{\theta}}^{\text{im}}_k)\} - \Pi_{r}(\widehat{\boldsymbol{\theta}}^{\text{im}}_k)\right\|_2 \\
&\leq \left\|\tilde\beta_k g_k(\widehat{\boldsymbol{\theta}}^{\text{im}}_k)\right\|_2 && \text{(non-expansiveness of $\Pi_r$)}\\
&\leq \beta_k G. && \text{(by Lemma \ref{lem:C1} and $\tilde\beta_k \le \beta_k$)}
\end{align*}
The proof for the case $i \le \tau$ follows from analogous arguments to those in Lemma \ref{lem:C5}. Therefore, we focus on the case $i > \tau > 0$. By the triangle inequality, 
$$
\left\|\widehat{\boldsymbol{\theta}}^{\text{im}}_{i} - \widehat{\boldsymbol{\theta}}^{\text{im}}_{i-\tau}\right\|_2 \leq \sum_{k=i-\tau}^{i-1}\left\|\widehat{\boldsymbol{\theta}}^{\text{im}}_{k+1} - \widehat{\boldsymbol{\theta}}^{\text{im}}_k\right\|_2 \leq G\sum_{k=i-\tau}^{i-1}\beta_k ~~\text{for any} ~i > \tau \geq 0.
$$
Combined with the Lipschitzness of $\Lambda_i(\boldsymbol{\theta})$ in Lemma \ref{lem:C4}, we have 
\begin{align}\label{LEMC6:STEP1}
  |\Lambda_i(\widehat{\boldsymbol{\theta}}^{\text{im}}_{i}) - \Lambda_i(\widehat{\boldsymbol{\theta}}^{\text{im}}_{i-\tau})| \leq (6+\lambda C)G^2\sum_{k=i-\tau}^{i-1}\beta_k \Longrightarrow\Lambda_t(\widehat{\boldsymbol{\theta}}^{\text{im}}_{t}) \leq \Lambda_i(\widehat{\boldsymbol{\theta}}^{\text{im}}_{i-\tau}) + (6 + \lambda C)G^2\sum_{k=i-\tau}^{i-1}\beta_k.  
\end{align}
In the subsequent steps, we aim to upper bound the quantity $\Lambda_i(\widehat{\boldsymbol{\theta}}^{\text{im}}_{i-\tau})$ in expectation.\\

\noindent \textbf{Step 2.} In this step, we obtain an upper bound on the closely related object induced by an auxiliary Markov chain. To clarify the approach, we use the abbreviated policy notation $\pi_i := \pi_{\widehat{\boldsymbol{\theta}}^{\text{im}}_{i}}$ and data tuple $O_i := (\boldsymbol{S}_i^{\pi_{i-2}},A_i^{\pi_{i-1}}, \boldsymbol{S}_{i+1}^{\pi_{i-1}},A_{i+1}^{\pi_{i}})$.  To highlight the dependence on data, we rewrite $\Lambda_i(\widehat{\boldsymbol{\theta}}^{\text{im}}_{i-\tau}) = \Lambda(\widehat{\boldsymbol{\theta}}^{\text{im}}_{i-\tau}, O_i)$. Additionally, conditioning on $\widehat{\boldsymbol{\theta}}^{\text{im}}_{i-\tau}$ and $\boldsymbol{S}^{\pi_{i-\tau}}_{i-\tau+2}$, we construct the following auxiliary Markov chain
\begin{equation}
\boldsymbol{S}^{\pi_{i-\tau-1}}_{i-\tau+1}\to
\boldsymbol{S}^{\pi_{i-\tau}}_{i-\tau+2} \to \boldsymbol{S}^{\pi_{i-\tau}}_{i-\tau+3} \to \cdots \to \boldsymbol{S}^{\pi_{i-\tau}}_i \to \boldsymbol{S}^{\pi_{i-\tau}}_{i+1},
\end{equation}
whose transition probability is defined as
\begin{align*}
\mathbb{P}(\boldsymbol{S}^{\pi_{i-\tau}}_{i-\tau+2} \in \cdot\mid\widehat{\boldsymbol{\theta}}^{\text{im}}_{i-\tau}, \boldsymbol{S}^{\pi_{i-\tau-1}}_{i-\tau+1}) &= \sum_{a \in A}\pi_{i-\tau}(A^{\pi_{i-\tau}}_{i-\tau+1} = a\mid\boldsymbol{S}^{\pi_{i-\tau-1}}_{i-\tau+1})\Ptrans(\boldsymbol{S}^{\pi_{i-\tau}}_{i-\tau+2} \in \cdot\mid A^{\pi_{i-\tau}}_{i-\tau+1} = a, \boldsymbol{S}^{\pi_{i-\tau-1}}_{i-\tau+1})\\
\mathbb{P}(\boldsymbol{S}^{\pi_{i-\tau}}_k \in \cdot\mid\widehat{\boldsymbol{\theta}}^{\text{im}}_{i-\tau}, \boldsymbol{S}^{\pi_{i-\tau}}_{k-1}) &= \sum_{a \in A}\pi_{i-\tau}(A^{\pi_{i-\tau}}_{k-1} = a\mid\boldsymbol{S}^{\pi_{i-\tau}}_{k-1})\Ptrans(\boldsymbol{S}^{\pi_{i-\tau}}_k \in \cdot\mid A^{\pi_{i-\tau}}_{k-1} = a, \boldsymbol{S}^{\pi_{i-\tau}}_{k-1}),
\end{align*}
for $k = i-\tau+3, \ldots, i+1$. In short, conditioning on $\widehat{\boldsymbol{\theta}}^{\text{im}}_{i-\tau}$ and $\boldsymbol{S}^{\pi_{i-\tau-1}}_{i-\tau+1}$, the constructed Markov chain is induced by repeatedly following the same behavior policy $\pi_{i-\tau}$. From the Assumption \ref{assume:ergodicity_SARSA}, it follows that conditioning on $(\widehat{\boldsymbol{\theta}}^{\text{im}}_{i-\tau}, \boldsymbol{S}^{\pi_{i-\tau-1}}_{i-\tau+1})$, for any $k \geq i-\tau+2$,
\begin{equation*}
\left\|\mathbb{P}(\boldsymbol{S}^{\pi_{i-\tau}}_k \in \cdot\mid\widehat{\boldsymbol{\theta}}^{\text{im}}_{i-\tau}, \boldsymbol{S}^{\pi_{i-\tau-1}}_{i-\tau+1}) - \mu^{\pi_{i-\tau}}\right\|_{TV} \leq m\rho^{k-(i-\tau+1)}.
\end{equation*}
Now, let us define auxiliary data tuple $\tilde{O}_i := (\boldsymbol{S}_i^{\pi_{i-\tau}},A_i^{\pi_{i-\tau}}, \boldsymbol{S}_{i+1}^{\pi_{i-\tau}},A_{i+1}^{\pi_{i-\tau}})$. Applying Lemma \ref{lem:C3}, Lemma 9 in \citep{bhandari2018finite} and the triangle inequality with
\begin{align*}
    \left\|\mathbb{P}(\boldsymbol{S}^{\pi_{i-\tau}}_i \in \cdot\mid\widehat{\boldsymbol{\theta}}^{\text{im}}_{i-\tau}, \boldsymbol{S}^{\pi_{i-\tau-1}}_{i-\tau+1}) - \mu^{\pi_{i-\tau}}\right\|_{TV} &\leq m\rho^{\tau-1}\\
        \left\|\mathbb{P}(\boldsymbol{S}^{\pi_{i-\tau}}_{i+1} \in \cdot\mid\widehat{\boldsymbol{\theta}}^{\text{im}}_{i-\tau}, \boldsymbol{S}^{\pi_{i-\tau-1}}_{i-\tau+1}) - \mu^{\pi_{i-\tau}}\right\|_{TV} &\leq m\rho^{\tau},
\end{align*}
we obtain
\begin{align*}
\mathbb{E}\left\{\Lambda(\widehat{\boldsymbol{\theta}}^{\text{im}}_{i-\tau}, \tilde{O}_i)\mid\widehat{\boldsymbol{\theta}}^{\text{im}}_{i-\tau}, \boldsymbol{S}^{\pi_{i-\tau-1}}_{i-\tau+1}\right\} - \mathbb{E}\left\{\Lambda(\widehat{\boldsymbol{\theta}}^{\text{im}}_{i-\tau}, O'_i) \mid \widehat{\boldsymbol{\theta}}^{\text{im}}_{i-\tau}, \boldsymbol{S}^{\pi_{i-\tau-1}}_{i-\tau+1}\right\} 
\leq 4G^2m\rho^{\tau-1}
\end{align*}
where $O'_i$ are independently generated by the stationary distribution $\mu^{\pi_{i-\tau}}$ and the policy $\pi_{i-\tau}$. Since $\mathbb{E}^{\mu^{\boldsymbol{\theta}}}\left\{\Lambda_i(\boldsymbol{\theta})\right\} = 0$ for any fixed $\boldsymbol{\theta}$, we know 
$$
\mathbb{E}\left\{\Lambda(\widehat{\boldsymbol{\theta}}^{\text{im}}_{i-\tau}, O'_i)\mid\widehat{\boldsymbol{\theta}}^{\text{im}}_{i-\tau}, \boldsymbol{S}^{\pi_{i-\tau-1}}_{i-\tau+1}\right\} = 0,
$$
which yields
\begin{equation}\label{LEMC6:STEP2}
\mathbb{E}\left\{\Lambda(\widehat{\boldsymbol{\theta}}^{\text{im}}_{i-\tau}, \tilde{O}_i)\right\}  \leq 4G^2m\rho^{\tau-1}.
\end{equation}

\noindent \textbf{Step 3.} In this final step, we analyze the difference between the target quantity 
$\mathbb{E}\left\{\Lambda_i(\widehat{\boldsymbol{\theta}}^{\text{im}}_{i-\tau})\right\}$ and the auxiliary quantity $\mathbb{E}\left\{\Lambda(\widehat{\boldsymbol{\theta}}^{\text{im}}_{i-\tau}, \tilde{O}_i)\right\}$. First, note that
\begin{align*}
&\mathbb{E}\left\{\Lambda_i(\widehat{\boldsymbol{\theta}}^{\text{im}}_{i-\tau}, O_i)\mid\widehat{\boldsymbol{\theta}}^{\text{im}}_{i-\tau},  \boldsymbol{S}^{\pi_{i-\tau-1}}_{i-\tau+1}\right\} - \mathbb{E}\left\{\Lambda_i(\widehat{\boldsymbol{\theta}}^{\text{im}}_{i-\tau}, \tilde{O}_i)\mid\widehat{\boldsymbol{\theta}}^{\text{im}}_{i-\tau},  \boldsymbol{S}^{\pi_{i-\tau-1}}_{i-\tau+1}\right\}\\& \leq 2G^2\left\|\mathbb{P}(O_i \in \cdot\mid\widehat{\boldsymbol{\theta}}^{\text{im}}_{i-\tau},  \boldsymbol{S}^{\pi_{i-\tau-1}}_{i-\tau+1}) - \mathbb{P}(\tilde{O}_i \in \cdot\mid\widehat{\boldsymbol{\theta}}^{\text{im}}_{i-\tau},  \boldsymbol{S}^{\pi_{i-\tau-1}}_{i-\tau+1})\right\|_{TV}.
\end{align*}
From the result in section B of \citep{zou2019finite}, we have
\begin{align*}
&\left\|\mathbb{P}(O_i \in \cdot \mid \widehat{\boldsymbol{\theta}}^{\text{im}}_{i-\tau},  \boldsymbol{S}^{\pi_{i-\tau-1}}_{i-\tau+1}) - \mathbb{P}(\tilde{O}_i \in \cdot \mid \widehat{\boldsymbol{\theta}}^{\text{im}}_{i-\tau},  \boldsymbol{S}^{\pi_{i-\tau-1}}_{i-\tau+1})\right\|_{TV} \nonumber \\
&\leq \left\|\mathbb{P}(\boldsymbol{S}^{\pi_{i-2}}_{i} \in \cdot \mid \widehat{\boldsymbol{\theta}}^{\text{im}}_{i-\tau},  \boldsymbol{S}^{\pi_{i-\tau-1}}_{i-\tau+1}) - \mathbb{P}(\boldsymbol{S}^{\pi_{i-\tau}}_{i} \in \cdot \mid \widehat{\boldsymbol{\theta}}^{\text{im}}_{i-\tau},  \boldsymbol{S}^{\pi_{i-\tau-1}}_{i-\tau+1})\right\|_{TV} + C|\mathcal{A}|G\sum_{j=i-2}^{i-1}\sum_{k=i-\tau}^{j}\beta_k. 
\end{align*}
We obtain the upper bound of the first term recursively as follows:
\begin{align}
&\left\|\mathbb{P}(\boldsymbol{S}^{\pi_{i-2}}_{i} \in \cdot \mid \widehat{\boldsymbol{\theta}}^{\text{im}}_{i-\tau},  \boldsymbol{S}^{\pi_{i-\tau-1}}_{i-\tau+1}) - \mathbb{P}(\boldsymbol{S}^{\pi_{i-\tau}}_{i} \in \cdot \mid \widehat{\boldsymbol{\theta}}^{\text{im}}_{i-\tau},  \boldsymbol{S}^{\pi_{i-\tau-1}}_{i-\tau+1})\right\|_{TV} \nonumber \\
&= \frac{1}{2}\int_{s' \in \mathcal{S}} \left| \mathbb{P}(\boldsymbol{S}^{\pi_{i-2}}_{i} = ds'\mid\widehat{\boldsymbol{\theta}}^{\text{im}}_{i-\tau}, \boldsymbol{S}^{\pi_{i-\tau-1}}_{i-\tau+1}) - \mathbb{P}(\boldsymbol{S}^{\pi_{i-\tau}}_{i} = ds'\mid\widehat{\boldsymbol{\theta}}^{\text{im}}_{i-\tau}, \boldsymbol{S}^{\pi_{i-\tau-1}}_{i-\tau+1}) \right| \nonumber \\
&= \frac{1}{2}\int_{s' \in \mathcal{S}} \Big| \int_{s \in \mathcal{S}} \mathbb{P}(\boldsymbol{S}^{\pi_{i-3}}_{i-1} = ds\mid\widehat{\boldsymbol{\theta}}^{\text{im}}_{i-\tau}, \boldsymbol{S}^{\pi_{i-\tau-1}}_{i-\tau+1})\mathbb{P}(\boldsymbol{S}^{\pi_{i-2}}_{i} = ds'\mid\widehat{\boldsymbol{\theta}}^{\text{im}}_{i-\tau}, \boldsymbol{S}^{\pi_{i-\tau-1}}_{i-\tau+1}, \boldsymbol{S}^{\pi_{i-3}}_{i-1} = s) \nonumber \\
&\quad - \int_{s \in \mathcal{S}} \mathbb{P}(\boldsymbol{S}^{\pi_{i-\tau}}_{i-1} = ds\mid\widehat{\boldsymbol{\theta}}^{\text{im}}_{i-\tau}, \boldsymbol{S}^{\pi_{i-\tau-1}}_{i-\tau+1})\mathbb{P}(\boldsymbol{S}^{\pi_{i-\tau}}_{i} = ds'\mid\widehat{\boldsymbol{\theta}}^{\text{im}}_{i-\tau}, \boldsymbol{S}^{\pi_{i-\tau-1}}_{i-\tau+1}, \boldsymbol{S}^{\pi_{i-\tau}}_{i-1} = s) \Big| \label{LEMD6_STEP3_LAST}
\end{align}
Pushing absolute value operation inside the integral, we upper bound \eqref{LEMD6_STEP3_LAST} as
\begin{align}
\eqref{LEMD6_STEP3_LAST}&\leq \frac{1}{2}\int_{s' \in \mathcal{S}} \int_{s \in \mathcal{S}} \Big| \mathbb{P}(\boldsymbol{S}^{\pi_{i-3}}_{i-1} = ds\mid\widehat{\boldsymbol{\theta}}^{\text{im}}_{i-\tau}, \boldsymbol{S}^{\pi_{i-\tau-1}}_{i-\tau+1})\mathbb{P}(\boldsymbol{S}^{\pi_{i-2}}_{i} = ds'\mid\widehat{\boldsymbol{\theta}}^{\text{im}}_{i-\tau}, \boldsymbol{S}^{\pi_{i-\tau-1}}_{i-\tau+1}, \boldsymbol{S}^{\pi_{i-3}}_{i-1} = s) \nonumber \\
&\quad - \mathbb{P}(\boldsymbol{S}^{\pi_{i-\tau}}_{i-1} = ds\mid\widehat{\boldsymbol{\theta}}^{\text{im}}_{i-\tau}, \boldsymbol{S}^{\pi_{i-\tau-1}}_{i-\tau+1})\mathbb{P}(\boldsymbol{S}^{\pi_{i-\tau}}_{i} = ds'\mid\widehat{\boldsymbol{\theta}}^{\text{im}}_{i-\tau}, \boldsymbol{S}^{\pi_{i-\tau-1}}_{i-\tau+1}, \boldsymbol{S}^{\pi_{i-\tau}}_{i-1} = s) \Big|. \nonumber 
\end{align}
Using the triangle inequality, we know that 
\begin{align}
\eqref{LEMD6_STEP3_LAST}&\leq \frac{1}{2}\int_{s' \in \mathcal{S}} \int_{s \in \mathcal{S}} \Bigg\{ \bigg| \mathbb{P}(\boldsymbol{S}^{\pi_{i-3}}_{i-1} = ds\mid\widehat{\boldsymbol{\theta}}^{\text{im}}_{i-\tau}, \boldsymbol{S}^{\pi_{i-\tau-1}}_{i-\tau+1})\mathbb{P}(\boldsymbol{S}^{\pi_{i-2}}_{i} = ds'\mid\widehat{\boldsymbol{\theta}}^{\text{im}}_{i-\tau}, \boldsymbol{S}^{\pi_{i-\tau-1}}_{i-\tau+1}, \boldsymbol{S}^{\pi_{i-3}}_{i-1} = s) \nonumber \\
&\quad - \mathbb{P}(\boldsymbol{S}^{\pi_{i-\tau}}_{i-1} = ds\mid\widehat{\boldsymbol{\theta}}^{\text{im}}_{i-\tau}, \boldsymbol{S}^{\pi_{i-\tau-1}}_{i-\tau+1})\mathbb{P}(\boldsymbol{S}^{\pi_{i-2}}_{i} = ds'\mid\widehat{\boldsymbol{\theta}}^{\text{im}}_{i-\tau}, \boldsymbol{S}^{\pi_{i-\tau-1}}_{i-\tau+1}, \boldsymbol{S}^{\pi_{i-3}}_{i-1} = s) \bigg| \nonumber \\
&\quad + \bigg| \mathbb{P}(\boldsymbol{S}^{\pi_{i-\tau}}_{i-1} = ds\mid\widehat{\boldsymbol{\theta}}^{\text{im}}_{i-\tau}, \boldsymbol{S}^{\pi_{i-\tau-1}}_{i-\tau+1})\mathbb{P}(\boldsymbol{S}^{\pi_{i-2}}_{i} = ds'\mid\widehat{\boldsymbol{\theta}}^{\text{im}}_{i-\tau}, \boldsymbol{S}^{\pi_{i-\tau-1}}_{i-\tau+1}, \boldsymbol{S}^{\pi_{i-3}}_{i-1} = s) \nonumber \\
&\quad - \mathbb{P}(\boldsymbol{S}^{\pi_{i-\tau}}_{i-1} = ds\mid\widehat{\boldsymbol{\theta}}^{\text{im}}_{i-\tau}, \boldsymbol{S}^{\pi_{i-\tau-1}}_{i-\tau+1})\mathbb{P}(\boldsymbol{S}^{\pi_{i-\tau}}_{i} = ds'\mid\widehat{\boldsymbol{\theta}}^{\text{im}}_{i-\tau}, \boldsymbol{S}^{\pi_{i-\tau-1}}_{i-\tau+1}, \boldsymbol{S}^{\pi_{i-\tau}}_{i-1} = s) \bigg| \Bigg\}, \nonumber 
\end{align}
which leads to
\begin{align*}
\eqref{LEMD6_STEP3_LAST}&\leq \left\|\mathbb{P}(\boldsymbol{S}^{\pi_{i-3}}_{i-1} \in \cdot\mid\widehat{\boldsymbol{\theta}}^{\text{im}}_{i-\tau}, \boldsymbol{S}^{\pi_{i-\tau-1}}_{i-\tau+1}) - \mathbb{P}(\boldsymbol{S}^{\pi_{i-\tau}}_{i-1} \in \cdot\mid\widehat{\boldsymbol{\theta}}^{\text{im}}_{i-\tau}, \boldsymbol{S}^{\pi_{i-\tau-1}}_{i-\tau+1})\right\|_{TV} \nonumber \\
&\quad + \sup_{s \in \mathcal{S}} \left\|\mathbb{P}(\boldsymbol{S}^{\pi_{i-2}}_{i} \in \cdot\mid\widehat{\boldsymbol{\theta}}^{\text{im}}_{i-\tau}, \boldsymbol{S}^{\pi_{i-\tau-1}}_{i-\tau+1}, \boldsymbol{S}^{\pi_{i-3}}_{i-1} = s) - \mathbb{P}(\boldsymbol{S}^{\pi_{i-\tau}}_{i} \in \cdot\mid\widehat{\boldsymbol{\theta}}^{\text{im}}_{i-\tau}, \boldsymbol{S}^{\pi_{i-\tau-1}}_{i-\tau+1}, \boldsymbol{S}^{\pi_{i-\tau}}_{i-1} = s)\right\|_{TV}.
\end{align*}
We re-express the term inside the supremum operator as follows: 
\begin{align}
&\left\|\mathbb{P}(\boldsymbol{S}^{\pi_{i-2}}_{i} \in \cdot\mid\widehat{\boldsymbol{\theta}}^{\text{im}}_{i-\tau}, \boldsymbol{S}^{\pi_{i-\tau-1}}_{i-\tau+1}, \boldsymbol{S}^{\pi_{i-3}}_{i-1} = s) - \mathbb{P}(\boldsymbol{S}^{\pi_{i-\tau}}_{i} \in \cdot\mid\widehat{\boldsymbol{\theta}}^{\text{im}}_{i-\tau}, \boldsymbol{S}^{\pi_{i-\tau-1}}_{i-\tau+1}, \boldsymbol{S}^{\pi_{i-\tau}}_{i-1} = s)\right\|_{TV} \nonumber \\
&= \frac{1}{2}\int_{s' \in \mathcal{S}} \Bigg| \sum_{a \in \mathcal{A}} \Ptrans(ds'\mid s, a)\mathbb{E}\left\{\pi_{i-2}(a\mid s)\mid\widehat{\boldsymbol{\theta}}^{\text{im}}_{i-\tau}, \boldsymbol{S}^{\pi_{i-\tau-1}}_{i-\tau+1}, \boldsymbol{S}^{\pi_{i-3}}_{i-1} = s\right\} - \Ptrans(ds'\mid s, a)\pi_{i-\tau}(a\mid s) \Bigg| \label{LEMD6_STEP3_EQUAL}
\end{align}
where the $\mathbb{E}$ is with respect to $\widehat{\boldsymbol{\theta}}^{\text{im}}_{i-2}$ used in the policy $\pi_{i-2}$. We provide the justification of \eqref{LEMD6_STEP3_EQUAL} in the following subsection. Then we have
\begin{align*}
    \eqref{LEMD6_STEP3_EQUAL}&\leq \frac{1}{2}\int_{s' \in \mathcal{S}} \sum_{a \in \mathcal{A}} \Ptrans(ds'\mid s, a) \left| \mathbb{E}\left\{\pi_{i-2}(a\mid s) - \pi_{i-\tau}(a\mid s)\mid \widehat{\boldsymbol{\theta}}^{\text{im}}_{i-\tau}, \boldsymbol{S}^{\pi_{i-\tau-1}}_{i-\tau+1}, \boldsymbol{S}^{\pi_{i-3}}_{i-1} = s\right\} \right| \nonumber \\
    &\leq C\int_{s' \in \mathcal{S}} \sum_{a \in \mathcal{A}} \Ptrans(ds'\mid s, a) \left\| \widehat{\boldsymbol{\theta}}^{\text{im}}_{i-2}-\widehat{\boldsymbol{\theta}}^{\text{im}}_{i-\tau}\right\|_2 \leq C|{\mathcal{A}}|G \sum_{k=i-\tau}^{i-3} \beta_k,
\end{align*}
where the first inequality is due to the triangle inequality and the second inequality is the consequence of the Lipschitzness of the policy. In the third inequality, we used the fact that
\begin{equation*}
\left\|\widehat{\boldsymbol{\theta}}^{\text{im}}_{i-2} - \widehat{\boldsymbol{\theta}}^{\text{im}}_{i-\tau}\right\|_2 \leq \sum_{k=i-\tau}^{i-3} \left\|\widehat{\boldsymbol{\theta}}^{\text{im}}_{k+1} - \widehat{\boldsymbol{\theta}}^{\text{im}}_{k}\right\|_2 \leq G \sum_{k=i-\tau}^{i-3} \tilde\beta_k\leq G \sum_{k=i-\tau}^{i-3} \beta_k,
\end{equation*}
Thus, we obtain a recursive bound of \eqref{LEMD6_STEP3_LAST} as follows:
\begin{equation*}
\eqref{LEMD6_STEP3_LAST}  \leq \left\|\mathbb{P}(\boldsymbol{S}^{\pi_{i-3}}_{i-1} \in \cdot\mid\widehat{\boldsymbol{\theta}}^{\text{im}}_{i-\tau}, \boldsymbol{S}^{\pi_{i-\tau-1}}_{i-\tau+1}) - \mathbb{P}(\boldsymbol{S}^{\pi_{i-\tau}}_{i-1} \in \cdot\mid\widehat{\boldsymbol{\theta}}^{\text{im}}_{i-\tau}, \boldsymbol{S}^{\pi_{i-\tau-1}}_{i-\tau+1})\right\|_{TV} + C|{\mathcal{A}}|G \sum_{k=i-\tau}^{i-3} \beta_k.
\end{equation*}
Applying this recursion repeatedly, we obtain
\begin{align*}
&\left\|\mathbb{P}(O_i \in \cdot\mid\widehat{\boldsymbol{\theta}}^{\text{im}}_{i-\tau}, \boldsymbol{S}^{\pi_{i-\tau-1}}_{i-\tau+1}) - \mathbb{P}(\tilde{O}_i \in \cdot\mid\widehat{\boldsymbol{\theta}}^{\text{im}}_{i-\tau},  \boldsymbol{S}^{\pi_{i-\tau-1}}_{i-\tau+1})\right\|_{TV} \\
&\leq \left\|\mathbb{P}(\boldsymbol{S}^{\pi_{i-\tau}}_{i-\tau+2} \in \cdot \mid \widehat{\boldsymbol{\theta}}^{\text{im}}_{i-\tau},  \boldsymbol{S}^{\pi_{i-\tau-1}}_{i-\tau+1}) - \mathbb{P}(\boldsymbol{S}^{\pi_{i-\tau}}_{i-\tau+2} \in \cdot \mid \widehat{\boldsymbol{\theta}}^{\text{im}}_{i-\tau},  \boldsymbol{S}^{\pi_{i-\tau-1}}_{i-\tau+1})\right\|_{TV} + C|\mathcal{A}|G\sum_{j=i-\tau}^{i-1}\sum_{k=i-\tau}^{j}\beta_k \\
&\le C|\mathcal{A}|G \tau \sum_{k=i-\tau}^{i-1}\beta_k
\end{align*}
Combining all three steps, we obtain the desired result.
\end{proof}

\subsubsection{Justification of \eqref{LEMD6_STEP3_EQUAL}}
\noindent Notice that
\begin{align}
&\mathbb{P}(\boldsymbol{S}^{\pi_{i-2}}_{i} \in \cdot\mid\widehat{\boldsymbol{\theta}}^{\text{im}}_{i-\tau},  \boldsymbol{S}^{\pi_{i-\tau-1}}_{i-\tau+1}, \boldsymbol{S}^{\pi_{i-3}}_{i-1} = s) \nonumber \\
&= \sum_{a \in \mathcal{A}} \mathbb{P}(\boldsymbol{S}^{\pi_{i-2}}_{i} \in \cdot, A^{\pi_{i-2}}_{i-1} = a \mid\widehat{\boldsymbol{\theta}}^{\text{im}}_{i-\tau}, \boldsymbol{S}^{\pi_{i-\tau-1}}_{i-\tau+1}, \boldsymbol{S}^{\pi_{i-3}}_{i-1} = s) \nonumber \\
&= \sum_{a \in \mathcal{A}} \Ptrans(\boldsymbol{S}^{\pi_{i-2}}_{i} \in \cdot\mid \boldsymbol{S}^{\pi_{i-3}}_{i-1} = s, A^{\pi_{i-2}}_{i-1} = a)\mathbb{P}(A^{\pi_{i-2}}_{i-1} = a\mid\widehat{\boldsymbol{\theta}}^{\text{im}}_{i-\tau}, \boldsymbol{S}^{\pi_{i-\tau-1}}_{i-\tau+1}, \boldsymbol{S}^{\pi_{i-3}}_{i-1} = s).
\end{align}
Since 
\begin{align*}
&\mathbb{P}(A^{\pi_{i-2}}_{i-1} = a\mid\widehat{\boldsymbol{\theta}}^{\text{im}}_{i-\tau}, \boldsymbol{S}^{\pi_{i-\tau-1}}_{i-\tau+1}, \boldsymbol{S}^{\pi_{i-3}}_{i-1} = s)\\ &= \mathbb{E}\left\{\mathbb{P}(A^{\pi_{i-2}}_{i-1} = a \mid \widehat{\boldsymbol{\theta}}^{\text{im}}_{i-2}, \widehat{\boldsymbol{\theta}}^{\text{im}}_{i-\tau}, \boldsymbol{S}^{\pi_{i-\tau}}_{i-\tau+2}, \boldsymbol{S}^{\pi_{i-3}}_{i-1} = s) \mid \widehat{\boldsymbol{\theta}}^{\text{im}}_{i-\tau}, \boldsymbol{S}^{\pi_{i-\tau-1}}_{i-\tau+1}, \boldsymbol{S}^{\pi_{i-3}}_{i-1} = s\right\} \\
&= \mathbb{E}\left\{\pi_{i-2}(a\mid s)\mid\widehat{\boldsymbol{\theta}}^{\text{im}}_{i-\tau}, \boldsymbol{S}^{\pi_{i-\tau-1}}_{i-\tau+1}, \boldsymbol{S}^{\pi_{i-3}}_{i-1} = s\right\}
\end{align*}
where the expectation is with respect to the randomness of $\widehat{\boldsymbol{\theta}}^{\text{im}}_{i-2}$, we have
\begin{align*}
&\mathbb{P}(\boldsymbol{S}^{\pi_{i-2}}_{i} \in \cdot \mid \widehat{\boldsymbol{\theta}}^{\text{im}}_{i-\tau},  \boldsymbol{S}^{\pi_{i-\tau-1}}_{i-\tau+1},  \boldsymbol{S}^{\pi_{i-3}}_{i-1} = s)\\ &= \sum_{a \in \mathcal{A}} \Ptrans(\boldsymbol{S}^{\pi_{i-2}}_{i} \in \cdot\mid \boldsymbol{S}^{\pi_{i-3}}_{i-1} = s, A^{\pi_{i-2}}_{i-1} = a)\mathbb{E}\left\{\pi_{i-2}(a\mid s)\mid\widehat{\boldsymbol{\theta}}^{\text{im}}_{i-\tau}, \boldsymbol{S}^{\pi_{i-\tau-1}}_{i-\tau+1}, \boldsymbol{S}^{\pi_{i-3}}_{i-1} = s\right\}.
\end{align*}
Finally, one can also observe that
\begin{align}
\mathbb{P}(\boldsymbol{S}^{\pi_{i-\tau}}_{i} \in \cdot\mid \widehat{\boldsymbol{\theta}}^{\text{im}}_{i-\tau},  \boldsymbol{S}^{\pi_{i-\tau-1}}_{i-\tau+1},  \boldsymbol{S}^{\pi_{i-\tau}}_{i-1} = s)
= \sum_{a \in \mathcal{A}} \Ptrans(\boldsymbol{S}^{\pi_{i-\tau}}_{i} \in \cdot\mid \boldsymbol{S}^{\pi_{i-\tau}}_{i-1} = s, \boldsymbol{A}^{\pi_{i-\tau}}_{i-1} = a)\pi_{i-\tau}(a|s).
\end{align}

\subsection{Proof of Theorem \ref{thm:constant_SARSA}}
\noindent In this section, we obtain finite-time error bounds of the projected implicit SARSA iterates with a constant step-size sequence, i.e., $\beta_i = \beta, ~\forall t \in \mathbb{N}$. Recall that $\tilde{\beta}_t := \beta/(1+\beta\|\boldsymbol{\phi}(\boldsymbol{S}_t^{\pi_{t-2}},A_t^{\pi_{t-1}})\|^2_2)$. Consider
\begin{align*}
\mathbb{E}\left\|\widehat{\boldsymbol{\theta}}^{\text{im}}_{t+1} - \boldsymbol{\theta}^{\star}\right\|_2^2 &= \mathbb{E}\left\|\Pi_r\{\widehat{\boldsymbol{\theta}}^{\text{im}}_{t} + \tilde\beta_t g_t(\widehat{\boldsymbol{\theta}}_{t})\} - \Pi_r\boldsymbol{\theta}^{\star}\right\|_2^2 \\
&\leq \mathbb{E}\left\|\widehat{\boldsymbol{\theta}}^{\text{im}}_{t} + \tilde\beta_t g_t(\widehat{\boldsymbol{\theta}}^{\text{im}}_{t}) - \boldsymbol{\theta}^{\star}\right\|_2^2\\
&\leq \mathbb{E}\left\{\left\|\widehat{\boldsymbol{\theta}}^{\text{im}}_{t} - \boldsymbol{\theta}^{\star}\right\|_2^2 + \beta^2\left\|g_t(\widehat{\boldsymbol{\theta}}^{\text{im}}_{t})\right\|_2^2 + 2\tilde\beta_t\langle\widehat{\boldsymbol{\theta}}^{\text{im}}_{t} - \boldsymbol{\theta}^{\star}, g_t(\widehat{\boldsymbol{\theta}}^{\text{im}}_{t})\rangle\right\} \\
&= \mathbb{E}\left\{\left\|\widehat{\boldsymbol{\theta}}^{\text{im}}_{t} - \boldsymbol{\theta}^{\star}\right\|_2^2 + \beta^2\left\|g_t(\widehat{\boldsymbol{\theta}}^{\text{im}}_{t})\right\|_2^2 + 2\tilde\beta_t\langle\widehat{\boldsymbol{\theta}}^{\text{im}}_{t} - \boldsymbol{\theta}^{\star}, \bar{g}(\widehat{\boldsymbol{\theta}}^{\text{im}}_{t}) - \bar{g}(\boldsymbol{\theta}^{\star})\rangle + 2\tilde\beta_t\Lambda_t(\widehat{\boldsymbol{\theta}}^{\text{im}}_{t})\right\}
\end{align*}
where the first inequality is thanks to the non-expansiveness of $\Pi_r$, and the second inequality is due to the bound $\tilde \beta_t \le \beta$, which holds almost surely. In the last equality, we used $\bar{g}(\boldsymbol{\theta}^{\star}) = 0$. From Lemma \ref{lem:C1} and Lemma \ref{lem:C2}, we obtain
\begin{align*}
\mathbb{E}\left\|\widehat{\boldsymbol{\theta}}^{\text{im}}_{t+1} - \boldsymbol{\theta}^{\star}\right\|_2^2 &\leq \mathbb{E}\left\{(1 - 2\tilde \beta_t w_s)\left\|\widehat{\boldsymbol{\theta}}^{\text{im}}_{t} - \boldsymbol{\theta}^{\star}\right\|_2^2 \right\}+ \beta^2 G^2 + 2\beta\mathbb{E}\left\{\Lambda_t(\widehat{\boldsymbol{\theta}}^{\text{im}}_{t})\right\} + 2\mathbb{E}\left\{(\tilde\beta_t-\beta)\Lambda_t(\widehat{\boldsymbol{\theta}}^{\text{im}}_{t})\right\}\\
&\leq \left(1 - \frac{2\beta w_s}{1+\beta}\right)\mathbb{E}\left\|\widehat{\boldsymbol{\theta}}^{\text{im}}_{t} - \boldsymbol{\theta}^{\star}\right\|_2^2 + \beta^2 G^2 + 2\beta\mathbb{E}\left\{\Lambda_t(\widehat{\boldsymbol{\theta}}^{\text{im}}_{t})\right\} + 4\beta^2 G^2 \\
&= \left(1 - \frac{2\beta w_s}{1+\beta}\right)\mathbb{E}\left\|\widehat{\boldsymbol{\theta}}^{\text{im}}_{t} - \boldsymbol{\theta}^{\star}\right\|_2^2 + 5\beta^2 G^2 + 2\beta\mathbb{E}\left\{\Lambda_t(\widehat{\boldsymbol{\theta}}^{\text{im}}_{t})\right\}
\end{align*}
where in the second inequality, we used Lemma \ref{lem:C3} and the bound $\tilde\beta_t \ge \frac{\beta}{1+\beta}$, which holds almost surely.  Recursively applying the inequality with the assumption $\frac{2\beta w_s}{1+\beta} \in (0, 1)$, we have
\begin{align*}
\mathbb{E}^\pi\left\|\widehat{\boldsymbol{\theta}}^{\text{im}}_{t+1} - \boldsymbol{\theta}^{\star}\right\|_2^2 &\le \left(1 - \frac{2\beta w_s}{1+\beta}\right)^{t+1}\left\|\widehat{\boldsymbol{\theta}}^{\text{im}}_0- \boldsymbol{\theta}^{\star}\right\|_2^2 + 5\beta^2 G^2 \sum_{i=0}^\infty \left(1 - \frac{2\beta w_s}{1+\beta}\right)^i \\
&\quad + 2\beta \sum_{i=0}^{t} \left(1 - \frac{2\beta w_s}{1+\beta}\right)^{t-i} \mathbb{E}\left\{\Lambda_i(\widehat{\boldsymbol{\theta}}^{\text{im}}_{i})\right\}  \\
&\le \left(1 - \frac{2\beta w_s}{1+\beta}\right)^{t+1}\left\|\widehat{\boldsymbol{\theta}}^{\text{im}}_0- \boldsymbol{\theta}^{\star}\right\|_2^2 + 5\beta^2 G^2 \sum_{i=0}^\infty \left(1 - \frac{2\beta w_s}{1+\beta}\right)^i \\
&\quad + 2\beta \sum_{i=0}^{\tau_\beta} \left(1 - \frac{2\beta w_s}{1+\beta}\right)^{t-i} \left\{2G^2 + (6 + \lambda C)G^2\tau_\beta \beta\right\} \\
&\quad + 2\beta \sum_{i=\tau_\beta + 1}^{t} \left(1 - \frac{2\beta w_s}{1+\beta}\right)^{t-i} \left\{2C|\mathcal{A}|G^3\tau_\beta^2\beta + 4G^2\beta/\rho + (6 + \lambda C)G^2\tau_\beta \beta\right\} \\
&\le \left(1 - \frac{2\beta w_s}{1+\beta}\right)^{t+1}\left\|\widehat{\boldsymbol{\theta}}^{\text{im}}_0- \boldsymbol{\theta}^{\star}\right\|_2^2 + 5\beta^2 G^2 \sum_{i=0}^\infty \left(1 - \frac{2\beta w_s}{1+\beta}\right)^i \\
&\quad + 2\beta \left\{2C|\mathcal{A}|G^3\tau_\beta^2\beta + 4G^2\beta/\rho + 2G^2 + 2(6 + \lambda C)G^2\tau_\beta \beta\right\} \sum_{i=0}^{\infty} \left(1 - \frac{2\beta w_s}{1+\beta}\right)^{i}  \\
&= \left(1 - \frac{2\beta w_s}{1+\beta}\right)^{t+1}\left\|\widehat{\boldsymbol{\theta}}^{\text{im}}_0- \boldsymbol{\theta}^{\star}\right\|_2^2 + \mathcal{O}\left\{(1+\beta)(\beta + \beta \tau_\beta + \beta \tau_\beta^2)\right\}
\end{align*}
where in the second inequality, we have used Lemma \ref{lem:C6}.

\subsection{Proof of Theorem \ref{thm:decreasing_SARSA}}
\noindent \textbf{Standard SARSA:} we first obtain the finite-time error bound of the projected SARSA iterates with a step-size sequence of the form $\beta_i = \frac{\beta_0}{(i+1)^s}, ~s \in (0,1), ~i \in \mathbb{N}$. Consider
\begin{align*}
\mathbb{E}\left\|\widehat{\boldsymbol{\theta}}_{t+1} - \boldsymbol{\theta}^{\star}\right\|_2^2 &= \mathbb{E}\left\|\Pi_r\{\widehat{\boldsymbol{\theta}}_{t} + \beta_t g_t(\widehat{\boldsymbol{\theta}}_{t})\} - \Pi_r\boldsymbol{\theta}^{\star}\right\|_2^2 \\
&\leq \mathbb{E}\left\|\widehat{\boldsymbol{\theta}}_{t} + \beta_t g_t(\widehat{\boldsymbol{\theta}}_{t}) - \boldsymbol{\theta}^{\star}\right\|_2^2\\
&= \mathbb{E}\left\{\left\|\widehat{\boldsymbol{\theta}}_{t} - \boldsymbol{\theta}^{\star}\right\|_2^2 + \beta_t^2\left\|g_t(\widehat{\boldsymbol{\theta}}_{t})\right\|_2^2 + 2\beta_t\langle\widehat{\boldsymbol{\theta}}_{t} - \boldsymbol{\theta}^{\star}, g_t(\widehat{\boldsymbol{\theta}}_{t})\rangle\right\} \\
&= \mathbb{E}\left\{\left\|\widehat{\boldsymbol{\theta}}_{t} - \boldsymbol{\theta}^{\star}\right\|_2^2 + \beta_t^2\left\|g_t(\widehat{\boldsymbol{\theta}}_{t})\right\|_2^2 + 2\beta_t\langle\widehat{\boldsymbol{\theta}}_{t} - \boldsymbol{\theta}^{\star}, \bar{g}(\widehat{\boldsymbol{\theta}}_{t}) - \bar{g}(\boldsymbol{\theta}^{\star})\rangle + 2\beta_t\Lambda_t(\widehat{\boldsymbol{\theta}}_{t})\right\}
\end{align*}
where the inequality is thanks to the non-expansiveness of $\Pi_r$, and the last equality is due to $\bar{g}(\boldsymbol{\theta}^{\star}) = 0$. From Lemma \ref{lem:C1} and Lemma \ref{lem:C2}, we obtain
\begin{align*}
\mathbb{E}\left\|\widehat{\boldsymbol{\theta}}_{t+1} - \boldsymbol{\theta}^{\star}\right\|_2^2 &\leq (1 - 2\beta_t w_s)\mathbb{E}\left\|\widehat{\boldsymbol{\theta}}_{t} - \boldsymbol{\theta}^{\star}\right\|_2^2 + \beta_t^2 G^2 + 2\beta_t\mathbb{E}\Lambda_t(\widehat{\boldsymbol{\theta}}_{t}).
\end{align*}
Recursively applying the inequality with the assumption $\beta_0 \in (0, 1/2w_s)$, we have
\begin{align*}
\mathbb{E} \left\|\widehat{\boldsymbol{\theta}}_{t+1} - \boldsymbol{\theta}^{\star}\right\|_2^2 
&\leq \left\{\prod_{i=0}^{t} (1 - 2\beta_i w_s)\right\} \left\|\widehat{\boldsymbol{\theta}}_{0} - \boldsymbol{\theta}^{\star}\right\|_2^2 + G^2 \sum_{i=0}^{t} \left\{\prod_{k=i+1}^{t} (1 - 2\beta_k w_s)\right\} \beta_i^2 \\ &\quad + 2\sum_{i=0}^{t} \left\{\prod_{k=i+1}^{t} (1 - 2\beta_k w_s)\right\} \mathbb{E}\{\Lambda_i(\widehat{\boldsymbol{\theta}}_i)\} \beta_i \\
&\leq \left\{\prod_{i=0}^{t} (1 - 2\beta_iw_s) \right\}\left\|\widehat{\boldsymbol{\theta}}_{0} - \boldsymbol{\theta}^{\star}\right\|_2^2 + G^2 \sum_{i=0}^{t} \left\{\prod_{k=i+1}^{t} (1 - 2\beta_k w_s)\right\} \beta_i^2 \\
&\quad + \sum_{i=0}^{\tau_{\beta_t}} \left\{\prod_{k=i+1}^{t} (1 - 2\beta_k w_s)\right\}  \left\{4G^2 + (12 + 2\lambda C)G^2\sum_{k=0}^{\tau_{\beta_t}-1} \beta_k\right\}\beta_i \\
&\quad + \sum_{i=\tau_{\beta_t}+1}^{t} \left\{\prod_{k=i+1}^{t} (1 - 2\beta_k w_s)\right\}  \left[\left\{4C|\mathcal{A}|G^3\tau_{\beta_t} + (12 + 2\lambda C)G^2 \right\}\sum_{k=i-\tau_{\beta_t}}^{i-1} \beta_k + 8G^2\beta_t/\rho \right] \beta_i \\
&\leq \left\{\prod_{i=0}^{t} (1 - 2\beta_iw_s) \right\}\left\|\widehat{\boldsymbol{\theta}}_{0} - \boldsymbol{\theta}^{\star}\right\|_2^2 + G^2 \sum_{i=0}^{t} \left\{\prod_{k=i+1}^{t} (1 - 2\beta_k w_s)\right\} \beta_i^2 \\
&\quad + \left\{4G^2 + (12 + 2\lambda C)G^2\tau_{\beta_t} \beta_0\right\}\sum_{i=0}^{\tau_{\beta_t}} \left\{\prod_{k=i+1}^{t} (1 - 2\beta_k w_s)\right\} \beta_i \\
&\quad + \left\{4C|\mathcal{A}|G^3\tau_{\beta_t}^2\beta_{i-\tau_{\beta_t}}   + (12 + 2\lambda C)G^2\tau_{\beta_t}\beta_{i-\tau_{\beta_t}} + 8G^2\beta_t/\rho \right\}\sum_{i=\tau_{\beta_t}+1}^{t} \left\{\prod_{k=i+1}^{t} (1 - 2\beta_k w_s)\right\} \beta_i
\end{align*}
where the second inequality follows from Lemma \ref{lem:C5}. Thanks to $1-2\beta_i w_s \le \exp(-2\beta_i w_s)$, we have
\begin{align}
\mathbb{E} \left\|\widehat{\boldsymbol{\theta}}_{t+1} - \boldsymbol{\theta}^{\star}\right\|_2^2 &\leq \exp\left(-2w_s \sum_{i=0}^{t} \beta_i\right) \left\|\widehat{\boldsymbol{\theta}}_{0} - \boldsymbol{\theta}^{\star}\right\|_2^2 \quad \label{SARSA_decr_first} \\
&\quad + G^2 \sum_{i=0}^{t} \exp\left(-2w_s \sum_{k=i+1}^{t} \beta_k\right) \beta_i^2 \quad  \label{SARSA_decr_second}\\
&\quad + \left\{4G^2 + (12 + 2\lambda C)G^2\tau_{\beta_t} \beta_0\right\}  \sum_{i=0}^{\tau_{\beta_t}} \exp\left(-2w_s \sum_{k=i+1}^{t} \beta_k\right) \beta_i \quad  \label{SARSA_decr_third} \\
&\quad + \left\{4C|\mathcal{A}|G^3\tau_{\beta_t}^2\beta_{i-\tau_{\beta_t}} + (12 + 2\lambda C)G^2\tau_{\beta_t}\beta_{i-\tau_{\beta_t}} + 8G^2\beta_t/\rho \right\} \sum_{i=\tau_{\beta_t}+1}^{t} \exp\left(-2w_s \sum_{k=i+1}^{t} \beta_k\right) \beta_i.  \label{SARSA_decr_fourth}
\end{align}
We now provide a bound for each of the term above. For the first term, we have
\begin{align}
\eqref{SARSA_decr_first} &= \exp\left\{-2w_s\beta_0 \sum_{i=0}^{t} \frac{1}{(i+1)^s}\right\} \left\|\widehat{\boldsymbol{\theta}}_{0} - \boldsymbol{\theta}^{\star}\right\|_2^2 \nonumber\\
&\leq \exp\left[\frac{-2w_s\beta_0}{(1-s)} \left\{(t+1)^{1-s} - 1\right\}\right] \left\|\widehat{\boldsymbol{\theta}}_{0} - \boldsymbol{\theta}^{\star}\right\|_2^2. \label{SARSA_const_first_bound}
\end{align}
For the second term, we have
\begin{align}
\eqref{SARSA_decr_second}  &\le 2G^2 (K_b e^{-w_s \sum_{k=0}^{t} \beta_k} + \beta_t) \frac{\exp\left(w_s\beta_0\right)}{2w_s} && \text{(by Lemma \ref{lem:exp_bound1})} \nonumber\\
&\lesssim \exp\left[\frac{-w_s\beta_0}{(1-s)} \{(t+1)^{1-s} - 1\}\right] + \beta_t.\label{SARSA_const_second_bound}
\end{align}
For the third term, by Lemma \ref{lem:exp_bound2}, we get
\begin{align}
\eqref{SARSA_decr_third} 
&\leq \left\{4G^2 + (12 + 2\lambda C)G^2\tau_{\beta_t} \beta_0\right\} \frac{\exp(2w_s\beta_0)}{2w_s} \exp\left[\frac{-2w_s\beta_0}{(1-s)} \{(t+1)^{1-s} - (\tau_{\beta_t}+1)^{1-s}\}\right] \nonumber\\
&\lesssim \tau_{\beta_t}  \exp\left[\frac{-2w_s\beta_0}{(1-s)} \{(t+1)^{1-s} - (\tau_{\beta_t}+1)^{1-s}\}\right].\label{SARSA_const_third_bound}
\end{align}
For the last term, by Lemma \ref{lem:exp_bound2}, we yield
\begin{align}
\eqref{SARSA_decr_fourth}
&\leq \left\{4C|\mathcal{A}|G^3\tau_{\beta_t}^2  + (12 + 2\lambda C)G^2\tau_{\beta_t} \right\} \left[\exp\left\{\frac{-2w_s\beta_0}{(1-s)} \{(t+1)^{1-s} - 1\}\right\} D_{\beta_t} \mathbb{I}_{\{\tau_{\beta_t}+1 \leq 1\}} + \beta_{t-\tau_{\beta_t}}\right] \frac{\exp(w_s\beta_0)}{w_s} \nonumber\\
&\quad + 8G^2\beta_t/\rho  \left\{\sum_{i=\tau_{\beta_t}+1}^{t} \exp\left(-2w_s \sum_{k=i+1}^{t} \beta_k\right) \beta_i\right\}\nonumber\\
&\lesssim  \left(\tau_{\beta_t}^2 + \tau_{\beta_t} \right) \beta_{t-\tau_{\beta_t}}.\label{SARSA_const_fourth_bound}
\end{align}
Combining \eqref{SARSA_const_first_bound}-\eqref{SARSA_const_fourth_bound} we get,
\begin{align*}
\mathbb{E} \left\|\widehat{\boldsymbol{\theta}}_{t+1} - \boldsymbol{\theta}^{\star}\right\|_2^2 &\leq \exp\left[\frac{-2w_s\beta_0}{(1-s)} \{(t+1)^{1-s} - 1\}\right] \left\|\widehat{\boldsymbol{\theta}}_{0} - \boldsymbol{\theta}^{\star}\right\|_2^2 + \mathcal{O}\left\{\beta_{t-\tau_{\beta_t}}\left(\tau_{\beta_t}^2 + \tau_{\beta_t} \right) \right\} 
\end{align*}
completing the proof for the standard SARSA finite-time bound.\\

\noindent \textbf{Implicit SARSA:} we next obtain the finite-time error bound of the projected implicit SARSA iterates with a step-size sequence of the form $\beta_i = \frac{\beta_0}{(i+1)^s}, ~s \in (0,1), ~i \in \mathbb{N}$. With a slight abuse of notation, let $\tilde \beta_i := \frac{\beta_i}{1+\beta_i \|\boldsymbol{\phi}(\boldsymbol{S}_i^{\pi},A_i^{\pi})\|^2_2}$. Consider

\begin{align*}
\mathbb{E}\left\|\widehat{\boldsymbol{\theta}}^{\text{im}}_{t+1} - \boldsymbol{\theta}^{\star}\right\|_2^2 &= \mathbb{E}\left\|\Pi_r\{\widehat{\boldsymbol{\theta}}^{\text{im}}_{t} + \tilde\beta_t g_t(\widehat{\boldsymbol{\theta}}^{\text{im}}_{t})\} - \Pi_r\boldsymbol{\theta}^{\star}\right\|_2^2 \\
&\leq \mathbb{E}\left\|\widehat{\boldsymbol{\theta}}^{\text{im}}_{t} + \tilde\beta_t g_t(\widehat{\boldsymbol{\theta}}^{\text{im}}_{t}) - \boldsymbol{\theta}^{\star}\right\|_2^2\\
&\le \mathbb{E}\left\{\left\|\widehat{\boldsymbol{\theta}}^{\text{im}}_{t} - \boldsymbol{\theta}^{\star}\right\|_2^2 + \beta_t^2\left\|g_t(\widehat{\boldsymbol{\theta}}^{\text{im}}_{t})\right\|_2^2 + 2\tilde\beta_t\langle\widehat{\boldsymbol{\theta}}^{\text{im}}_{t} - \boldsymbol{\theta}^{\star}, g_t(\widehat{\boldsymbol{\theta}}^{\text{im}}_{t})\rangle\right\} \\
&= \mathbb{E}\left\{\left\|\widehat{\boldsymbol{\theta}}^{\text{im}}_{t} - \boldsymbol{\theta}^{\star}\right\|_2^2 + \beta_t^2\left\|g_t(\widehat{\boldsymbol{\theta}}^{\text{im}}_{t})\right\|_2^2 + 2\tilde\beta_t\langle\widehat{\boldsymbol{\theta}}^{\text{im}}_{t} - \boldsymbol{\theta}^{\star}, \bar{g}(\widehat{\boldsymbol{\theta}}^{\text{im}}_{t}) - \bar{g}(\boldsymbol{\theta}^{\star})\rangle + 2\tilde\beta_t\Lambda_t(\widehat{\boldsymbol{\theta}}^{\text{im}}_{t})\right\}
\end{align*}
where the first inequality is thanks to the non-expansiveness of $\Pi_r$, and the second inequality is due to $\tilde \beta_t \le \beta_t$. In the last inequality, we used $\bar{g}(\boldsymbol{\theta}^{\star}) = 0$. From Lemma \ref{lem:C1}  and Lemma \ref{lem:C2}, we obtain
\begin{align*}
\mathbb{E}\left\|\widehat{\boldsymbol{\theta}}^{\text{im}}_{t+1} - \boldsymbol{\theta}^{\star}\right\|_2^2 &\leq \mathbb{E}\left\{(1 - 2\tilde \beta_t w_s)\left\|\widehat{\boldsymbol{\theta}}^{\text{im}}_{t} - \boldsymbol{\theta}^{\star}\right\|_2^2 \right\}+ \beta_t^2 G^2 + 2\beta_t\mathbb{E}\Lambda_t(\widehat{\boldsymbol{\theta}}^{\text{im}}_{t}) + 2\mathbb{E}\left\{(\tilde\beta_t-\beta_t)\Lambda_t(\widehat{\boldsymbol{\theta}}^{\text{im}}_{t})\right\}\\
&\leq \left(1 - \frac{2\beta_t w_s}{1+\beta_0}\right)\mathbb{E}\left\|\widehat{\boldsymbol{\theta}}^{\text{im}}_{t} - \boldsymbol{\theta}^{\star}\right\|_2^2 + 5\beta_t^2 G^2 + 2\beta_t\mathbb{E}\Lambda_t(\widehat{\boldsymbol{\theta}}^{\text{im}}_{t}).
\end{align*}
where in the second inequality, we used the bound $\tilde\beta_t \ge \frac{\beta_t}{1+\beta_0}$, which holds almost surely as well as the fact $|\tilde\beta_t-\beta_t| \le \beta_t^2$ with Lemma \ref{lem:C3}. 
Recursively applying the inequality with the assumption $2\beta_0 w_s/(1+\beta_0)\in (0, 1)$, we have
\begin{align*}
\mathbb{E} \left\|\widehat{\boldsymbol{\theta}}^{\text{im}}_{t+1} - \boldsymbol{\theta}^{\star}\right\|_2^2 
&\leq \left\{\prod_{i=0}^{t} \left(1 - \frac{2\beta_i w_s}{1+\beta_0}\right)\right\} \left\|\widehat{\boldsymbol{\theta}}_{0}- \boldsymbol{\theta}^{\star}\right\|_2^2 + 5G^2 \sum_{i=0}^{t} \left\{\prod_{k=i+1}^{t} \left(1 - \frac{2\beta_k w_s}{1+\beta_0}\right)\right\} \beta_i^2 \\ &\quad + 2\sum_{i=0}^{t} \left\{\prod_{k=i+1}^{t} \left(1 - \frac{2\beta_k w_s}{1+\beta_0}\right)\right\} \mathbb{E}\{\Lambda_i(\widehat{\boldsymbol{\theta}}^{\text{im}}_i)\} \beta_i 
\end{align*}
Applying Lemma \ref{lem:C6}, we have
\begin{align*}
\mathbb{E} \left\|\widehat{\boldsymbol{\theta}}^{\text{im}}_{t+1} - \boldsymbol{\theta}^{\star}\right\|_2^2 &\leq \left\{\prod_{i=0}^{t} \left(1 - \frac{2\beta_i w_s}{1+\beta_0}\right) \right\}\left\|\widehat{\boldsymbol{\theta}}_{0}- \boldsymbol{\theta}^{\star}\right\|_2^2 + 5G^2 \sum_{i=0}^{t} \left\{\prod_{k=i+1}^{t} \left(1 - \frac{2\beta_k w_s}{1+\beta_0}\right)\right\} \beta_i^2 \\
&\quad + \sum_{i=0}^{\tau_{\beta_t}} \left\{\prod_{k=i+1}^{t} \left(1 - \frac{2\beta_k w_s}{1+\beta_0}\right)\right\}  \left\{4G^2 + (12 + 2\lambda C)G^2\sum_{k=0}^{\tau_{\beta_t}-1} \beta_k\right\}\beta_i \\
&\quad + \sum_{i=\tau_{\beta_t}+1}^{t} \left\{\prod_{k=i+1}^{t} \left(1 - \frac{2\beta_k w_s}{1+\beta_0}\right)\right\}  \left[\left\{4C|\mathcal{A}|G^3\tau_{\beta_t} + (12 + 2\lambda C)G^2 \right\}\sum_{k=i-\tau_{\beta_t}}^{i-1} \beta_k + 8G^2\beta_t/\rho \right] \beta_i \\
&\leq \left\{\prod_{i=0}^{t} \left(1 - \frac{2\beta_i w_s}{1+\beta_0}\right) \right\}\left\|\widehat{\boldsymbol{\theta}}_{0}- \boldsymbol{\theta}^{\star}\right\|_2^2 + 5G^2 \sum_{i=0}^{t} \left\{\prod_{k=i+1}^{t} \left(1 - \frac{2\beta_k w_s}{1+\beta_0}\right)\right\} \beta_i^2 \\
&\quad + \left\{4G^2 + (12 + 2\lambda C)G^2\tau_{\beta_t} \beta_0\right\}\sum_{i=0}^{\tau_{\beta_t}} \left\{\prod_{k=i+1}^{t} \left(1 - \frac{2\beta_k w_s}{1+\beta_0}\right)\right\} \beta_i \\
&\quad + \left\{4C|\mathcal{A}|G^3\tau_{\beta_t}^2\beta_{i-\tau_{\beta_t}}   + (12 + 2\lambda C)G^2\tau_{\beta_t}\beta_{i-\tau_{\beta_t}} + 8G^2\beta_t/\rho \right\}\sum_{i=\tau_{\beta_t}+1}^{t} \left\{\prod_{k=i+1}^{t} \left(1 - \frac{2\beta_k w_s}{1+\beta_0}\right)\right\} \beta_i
\end{align*}
Thanks to $1-\frac{2\beta_i w_s}{1+\beta_0} \le \exp\left(-\frac{2\beta_i w_s}{1+\beta_0}\right)$, we have
\begin{align}
\mathbb{E} \left\|\widehat{\boldsymbol{\theta}}^{\text{im}}_{t+1} - \boldsymbol{\theta}^{\star}\right\|_2^2 &\leq \exp\left(-\frac{2w_s}{1+\beta_0} \sum_{i=0}^{t} \beta_i\right) \left\|\widehat{\boldsymbol{\theta}}_{0}- \boldsymbol{\theta}^{\star}\right\|_2^2 \quad \label{IMP_SARSA_decr_first} \\
&\quad + 5G^2 \sum_{i=0}^{t} \exp\left(-\frac{2w_s}{1+\beta_0} \sum_{k=i+1}^{t} \beta_k\right) \beta_i^2 \quad  \label{IMP_SARSA_decr_second}\\
&\quad + \left\{4G^2 + (12 + 2\lambda C)G^2\tau_{\beta_t} \beta_0\right\}  \sum_{i=0}^{\tau_{\beta_t}} \exp\left(-\frac{2w_s}{1+\beta_0} \sum_{k=i+1}^{t} \beta_k\right) \beta_i \quad  \label{IMP_SARSA_decr_third} \\
&\quad + \left\{4C|\mathcal{A}|G^3\tau_{\beta_t}^2\beta_{i-\tau_{\beta_t}} + (12 + 2\lambda C)G^2\tau_{\beta_t}\beta_{i-\tau_{\beta_t}} + 8G^2\beta_t/\rho \right\} \sum_{i=\tau_{\beta_t}+1}^{t} \exp\left(-\frac{2w_s}{1+\beta_0} \sum_{k=i+1}^{t} \beta_k\right) \beta_i.  \label{IMP_SARSA_decr_fourth}
\end{align}
We now provide a bound for each of the term above. For the first term, we have
\begin{align}
\eqref{IMP_SARSA_decr_first} &= \exp\left\{-\frac{2w_s}{1+\beta_0}\beta_0 \sum_{i=0}^{t} \frac{1}{(i+1)^s}\right\} \left\|\widehat{\boldsymbol{\theta}}_{0}- \boldsymbol{\theta}^{\star}\right\|_2^2 \nonumber\\
&\leq \exp\left[\frac{-2w_s\beta_0}{(1+\beta_0)(1-s)} \left\{(t+1)^{1-s} - 1\right\}\right] \left\|\widehat{\boldsymbol{\theta}}_{0}- \boldsymbol{\theta}^{\star}\right\|_2^2. \label{IMP_SARSA_const_first_bound}
\end{align}
For the second term, we have
\begin{align}
\eqref{IMP_SARSA_decr_second}  &\le 10G^2 (K_b e^{-\frac{w_s}{1+\beta_0} \sum_{k=0}^{t} \beta_k} + \beta_t) \frac{\exp\left(w_s\beta_0\right)(1+\beta_0)}{w_s} && \text{(by Lemma \ref{lem:exp_bound1})} \nonumber\\
&\lesssim \exp\left[\frac{-w_s\beta_0}{(1+\beta_0)(1-s)} \{(t+1)^{1-s} - 1\}\right] + \beta_t.\label{IMP_SARSA_const_second_bound}
\end{align}
For the third term, by Lemma \ref{lem:exp_bound2}, we get
\begin{align}
\eqref{IMP_SARSA_decr_third} 
&\lesssim \left\{4G^2 + (12 + 2\lambda C)G^2\tau_{\beta_t} \beta_0\right\} \exp\left[\frac{-2w_s\beta_0}{(1+\beta_0)(1-s)} \{(t+1)^{1-s} - (\tau_{\beta_t}+1)^{1-s}\}\right] \nonumber\\
&\lesssim \tau_{\beta_t}  \exp\left[\frac{-2w_s\beta_0}{(1+\beta_0)(1-s)} \{(t+1)^{1-s} - (\tau_{\beta_t}+1)^{1-s}\}\right].\label{IMP_SARSA_const_third_bound}
\end{align}
For the last term, by Lemma \ref{lem:exp_bound2}, we yield
\begin{align}
\eqref{IMP_SARSA_decr_fourth}
&\lesssim \left\{4C|\mathcal{A}|G^3\tau_{\beta_t}^2  + (12 + 2\lambda C)G^2\tau_{\beta_t} \right\} \left[\exp\left\{\frac{-2w_s\beta_0}{(1+\beta_0)(1-s)} \{(t+1)^{1-s} - 1\}\right\} D_{\beta_t} \mathbb{I}_{\{\tau_{\beta_t}+1 \leq 1\}} + \beta_{t-\tau_{\beta_t}}\right] \nonumber\\
&\quad + 8G^2\beta_t/\rho  \left\{\sum_{i=\tau_{\beta_t}+1}^{t} \exp\left(-\frac{2w_s}{(1+\beta_0)} \sum_{k=i+1}^{t} \beta_k\right) \beta_i\right\}\nonumber\\
&\lesssim  \left(\tau_{\beta_t}^2 + \tau_{\beta_t} \right) \beta_{t-\tau_{\beta_t}}.\label{IMP_SARSA_const_fourth_bound}
\end{align}
Combining \eqref{IMP_SARSA_const_first_bound}-\eqref{IMP_SARSA_const_fourth_bound} we get,
\begin{align*}
\mathbb{E} \left\|\widehat{\boldsymbol{\theta}}^{\text{im}}_{t+1} - \boldsymbol{\theta}^{\star}\right\|_2^2 &\leq \exp\left[\frac{-2w_s\beta_0}{(1+\beta_0)(1-s)} \{(t+1)^{1-s} - 1\}\right] \left\|\widehat{\boldsymbol{\theta}}_{0}- \boldsymbol{\theta}^{\star}\right\|_2^2 + \mathcal{O}\left\{\beta_{t-\tau_{\beta_t}}\left(\tau_{\beta_t}^2 + \tau_{\beta_t} \right) \right\} 
\end{align*}
completing the proof.

\section{Auxiliary Lemma}
\begin{lemma}\label{lem:exp_bound1} For $t \in \mathbb{N}$, let $\beta_t = \frac{\beta_0}{(t+1)^s}$ and $s \in (0,1)$. With $\gamma > 0$,
\begin{align*}
    \sum_{i=0}^t \left(e^{-\gamma\sum_{k=i+1}^t \beta_k} \right)\beta^2_i \le 2\left(K_b e^{-\frac{\gamma}{2}\sum_{k=0}^t \beta_k} + \beta_t\right)\frac{e^{\frac{\gamma \beta_0}{2}} }{\gamma}, 
\end{align*}
where $K_b = \beta_0 e^{\frac{\gamma}{2}\sum_{k=0}^{i_0} \beta_k}$ for some $i_0 \in \mathbb{N}$. 
\end{lemma}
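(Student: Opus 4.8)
The plan is to collapse the weighted sum into a single telescoping sum multiplied by a maximal factor. Write $S_i := \sum_{k=i}^{t}\beta_k$ for $0\le i\le t+1$ (so $S_{t+1}=0$, $S_0=\sum_{k=0}^t\beta_k$, and $S_i=\beta_i+S_{i+1}$); the target quantity is $\Sigma:=\sum_{i=0}^t \beta_i^2 e^{-\gamma S_{i+1}}$. The first step is a per-term telescoping inequality. Since $S_i=\beta_i+S_{i+1}$, we have $e^{-\frac{\gamma}{2}S_{i+1}}-e^{-\frac{\gamma}{2}S_i}=e^{-\frac{\gamma}{2}S_{i+1}}\big(1-e^{-\frac{\gamma}{2}\beta_i}\big)$. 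Using the elementary bound $1-e^{-x}\ge xe^{-x}$ (equivalently $e^{x}\ge 1+x$) with $x=\frac{\gamma}{2}\beta_i$ together with $\beta_i\le\beta_0$, I would obtain
\[
\beta_i\, e^{-\frac{\gamma}{2}S_{i+1}}\ \le\ \frac{2e^{\gamma\beta_0/2}}{\gamma}\Big(e^{-\frac{\gamma}{2}S_{i+1}}-e^{-\frac{\gamma}{2}S_i}\Big),\qquad 0\le i\le t;
\]
denote this inequality by $(\star)$.

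Next I would factor the summand symmetrically as $\beta_i^2 e^{-\gamma S_{i+1}}=\big(\beta_i e^{-\frac{\gamma}{2}S_{i+1}}\big)\big(\beta_i e^{-\frac{\gamma}{2}S_{i+1}}\big)$, bound the second factor by $M:=\max_{0\le i\le t}\beta_i e^{-\frac{\gamma}{2}S_{i+1}}$, and apply $(\star)$ to the first. Summing and telescoping gives
\[
\Sigma\ \le\ M\,\frac{2e^{\gamma\beta_0/2}}{\gamma}\sum_{i=0}^t\Big(e^{-\frac{\gamma}{2}S_{i+1}}-e^{-\frac{\gamma}{2}S_i}\Big)=M\,\frac{2e^{\gamma\beta_0/2}}{\gamma}\big(1-e^{-\frac{\gamma}{2}S_0}\big)\ \le\ \frac{2e^{\gamma\beta_0/2}}{\gamma}\,M,
\]
where I used $S_{t+1}=0$. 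It then remains to control $M$. Letting $i_0$ denote a maximizing index and using $\beta_{i_0}\le\beta_0$ and $S_{i_0+1}=S_0-\sum_{k=0}^{i_0}\beta_k$,
\[
M=\beta_{i_0}e^{-\frac{\gamma}{2}S_{i_0+1}}\ \le\ \beta_0\, e^{\frac{\gamma}{2}\sum_{k=0}^{i_0}\beta_k}\,e^{-\frac{\gamma}{2}S_0}=K_b\,e^{-\frac{\gamma}{2}S_0},
\]
which combined with the previous display yields $\Sigma\le \frac{2e^{\gamma\beta_0/2}}{\gamma}\big(K_b e^{-\frac{\gamma}{2}S_0}+\beta_t\big)$, exactly the claimed bound, provided $K_b=\beta_0 e^{\frac{\gamma}{2}\sum_{k=0}^{i_0}\beta_k}$ is a $t$-independent constant.

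The delicate point — and the one I would treat most carefully — is that last proviso, since the way the lemma is used (the $\lesssim$ in, e.g., the bound on \eqref{Q_decr_second}) requires $K_b$ to be $O(1)$. The key observation is that the ratio of consecutive terms of $g(i):=\beta_i e^{-\frac{\gamma}{2}S_{i+1}}$ is $g(i+1)/g(i)=\big(\tfrac{i+1}{i+2}\big)^{s}e^{\frac{\gamma}{2}\beta_{i+1}}$, which does \emph{not} depend on $t$. Because $s\in(0,1)$, the term $\frac{\gamma}{2}\beta_{i+1}\sim \tfrac{\gamma\beta_0}{2}i^{-s}$ dominates $s\ln\!\big(1+\tfrac{1}{i+1}\big)\sim s\,i^{-1}$ for large $i$, so this ratio exceeds $1$ for every $i\ge N$ with $N$ a $t$-independent threshold; hence $g$ is nondecreasing on $\{N,\dots,t\}$. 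Consequently the maximizer $i_0$ lies in $\{0,\dots,N\}\cup\{t\}$: if $i_0\le N$ then $K_b\le \beta_0 e^{\frac{\gamma}{2}\sum_{k=0}^{N}\beta_k}$ is a genuine constant, while if the maximum is attained at $t$ then $M=g(t)=\beta_t$ and the $\beta_t$ summand absorbs it. Thus the two terms $K_b e^{-\frac{\gamma}{2}S_0}$ and $\beta_t$ correspond precisely to the interior and boundary locations of the maximizer. The remaining steps — verifying $(\star)$ and the telescoping collapse — are routine, so the monotonicity/threshold argument controlling $M$ is the main obstacle.
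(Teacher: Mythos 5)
Your proof is correct, and its skeleton coincides with the paper's: both factor $\beta_i^2 e^{-\gamma\sum_{k=i+1}^{t}\beta_k}$ as $\bigl(\beta_i e^{-\frac{\gamma}{2}\sum_{k=i+1}^{t}\beta_k}\bigr)$ times itself, pull out the maximum $M$ of this quantity over $0\le i\le t$, bound the remaining first-order sum by $2e^{\gamma\beta_0/2}/\gamma$, and control $M$ by splitting at the index past which $i\mapsto \beta_i e^{-\frac{\gamma}{2}\sum_{k=i+1}^{t}\beta_k}$ is nondecreasing, yielding the two terms $K_b e^{-\frac{\gamma}{2}\sum_{k=0}^{t}\beta_k}$ and $\beta_t$. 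You deviate in two localized ways, both defensible. First, the paper bounds the first-order sum by a left-Riemann-sum comparison with $\int_0^{T_{t+1}} e^{-\frac{\gamma}{2}(T_{t+1}-s)}\,ds$, whereas you use the telescoping inequality $(\star)$ built on $1-e^{-x}\ge x e^{-x}$; the two are equally routine and give the same constant, yours being purely discrete. Second, and more substantively, the paper merely \emph{asserts} that the sequence is eventually increasing with a $t$-independent threshold $i_0$, while you prove it: the consecutive ratio $g(i+1)/g(i)=\left(\frac{i+1}{i+2}\right)^{s}e^{\frac{\gamma}{2}\beta_{i+1}}$ is free of $t$, and for $s\in(0,1)$ the gain $\frac{\gamma}{2}\beta_{i+1}\asymp i^{-s}$ eventually dominates the loss $s\ln\bigl(1+\frac{1}{i+1}\bigr)\asymp i^{-1}$. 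That computation is precisely what makes $K_b$ an honest constant --- as you correctly flag, if the maximizing index were allowed to depend on $t$ the bound could degenerate to $\mathcal{O}(\beta_0)$ and be useless downstream --- so your writeup closes a step the paper leaves implicit.
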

\begin{proof}
Let $T_t = \sum_{i=0}^{t-1} \beta_i$ and use the convention $\sum_{k=t+1}^t \beta_k = 0$ and $\sum_{k=t+1}^t \beta^2_k = 0$. Notice that
\begin{align}
\sum_{i=0}^t \left(e^{-\frac{\gamma}{2}\sum_{k=i+1}^t \beta_k} \right)\beta_i &\le \left(\sup_{i \ge 0} e^{\frac{\gamma}{2}\beta_i}\right) \left\{\sum_{i=0}^t \left(e^{-\frac{\gamma}{2}\sum_{k=i}^t \beta_k} \right)\beta_i \right\} 
\nonumber \\ &= \left(\sup_{i \ge 0} e^{\frac{\gamma}{2}\beta_i}\right) \left\{\sum_{i=0}^t \left(e^{-\frac{\gamma}{2}(T_{t+1}-T_i)} \right)\beta_i \right\} \nonumber \\
&\le \left(\sup_{i \ge 0} e^{\frac{\gamma}{2}\beta_i}\right)  \int_{0}^{T_{t+1}} e^{-\frac{\gamma}{2}(T_{t+1}-s)} ds \nonumber \\
&\le \left(\sup_{i \ge 0} e^{\frac{\gamma}{2}\beta_i}\right) \frac{2}{\gamma} \le \frac{2e^{\frac{\gamma \beta_0}{2}}}{\gamma}, \label{exp_order1_bound}
\end{align}
where we have used the definition of the left-Riemann sum in the first inequality. The last inequality is due to the fact that $\{\beta_t\}$ is a non-increasing sequence. Now consider 
\begin{align}
\sum_{i=0}^t \left(e^{-\gamma\sum_{k=i+1}^t \beta_k} \right)\beta^2_i &\le \sup_{0 \le i \le t} \left(\beta_i e^{-\frac{\gamma}{2}\sum_{k=i+1}^t \beta_k}\right) \left\{\sum_{i=0}^t \left(e^{-\frac{\gamma}{2}\sum_{k=i+1}^t \beta_k} \right)\beta_i \right\} \nonumber \\
&\le \sup_{0 \le i \le t} \left(\beta_i e^{-\frac{\gamma}{2}\sum_{k=i+1}^t \beta_k}\right)\frac{2e^{\frac{\gamma \beta_0}{2}}}{\gamma}\label{intermediate_exp_square_bdd}
\end{align}
where the last inequality follows from \eqref{exp_order1_bound}. Note that $\beta_i e^{-\frac{\gamma}{2}\sum_{k=i+1}^t \beta_k}$ is eventually increasing, i.e., after some time $i_0 \in \mathbb{N}$,  for all $t \ge i_0$, we have
$$
\sup_{i_0 \le i \le t} \left\{\beta_i \exp\left(-\frac{\gamma}{2}\sum_{k=i+1}^t \beta_k \right)\right\} \le \beta_t,
$$
where we used the convention $\sum_{k=t+1}^t \beta_k = 0$. Therefore, we have
\begin{align*}
    \eqref{intermediate_exp_square_bdd} &\le \left\{\sup_{0 \le i \le i_0} \left(\beta_i e^{-\frac{\gamma}{2}\sum_{k=i+1}^t \beta_k}\right) + \beta_t\right\}\frac{2e^{\frac{\gamma \beta_0}{2}} }{\gamma} \\
    &\le \left\{e^{-\frac{\gamma}{2}\sum_{k=0}^t \beta_k}\sup_{0 \le i \le i_0} \left(\beta_i e^{\frac{\gamma}{2}\sum_{k=0}^i \beta_k}\right) + \beta_t\right\}\frac{2e^{\frac{\gamma \beta_0}{2}} }{\gamma} \\
    &\le \left(K_b e^{-\frac{\gamma}{2}\sum_{k=0}^t \beta_k} + \beta_t\right)\frac{2e^{\frac{\gamma \beta_0}{2}} }{\gamma},
\end{align*}
where $K_b = \beta_0 e^{\frac{\gamma}{2}\sum_{k=0}^{i_0} \beta_k}$.
\end{proof}

\begin{lemma}\label{lem:exp_bound2} For $t \in \mathbb{N}$, let $\beta_t = \frac{\beta_0}{(t+1)^s}$ and $s\in (0,1)$. With $\gamma > 0$ and $\tau \in \{0, \cdots, t\}$,
\begin{enumerate}
    \item $\sum_{i=0}^{\tau} e^{-\gamma \sum_{k=i+1}^{t} \beta_{k}} \beta_{i}  \le \frac{e^{\gamma \beta_0}}{\gamma} e^{-\frac{\gamma \beta_0}{(1-s)}\left\lbrace(t+1)^{1-s}-\left(1+\tau\right)^{1-s}\right\rbrace}$. 
    \item $\sum_{i=2\tau_{\beta_t}+1}^{t} \left(e^{-\gamma \sum_{k=i+1}^{t} \beta_{k}} \beta_{i-2\tau_{\beta_t}} \beta_{i} \right)  \le \left[e^{\frac{-\gamma \beta_0}{2(1-s)}\left\lbrace(t+1)^{1-s}-1\right\rbrace} D_{\beta} \mathbb{I}_{\left(2\tau_{\beta_t}+1 < i_{\beta} \right)}+\beta_{t-2\tau_{\beta_t}} \right] \frac{2 e^{\gamma \beta_0 / 2}}{\gamma}$
\end{enumerate}
where 
$D_\beta = \exp\lbrace\left(\gamma / 2\right) \sum_{k=0}^{i_{\beta}} \beta_{k}\rbrace \beta_0$ for some $i_{\beta} \in \mathbb{N}$.
\end{lemma}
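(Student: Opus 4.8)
The plan is to reduce both estimates to the single-index summation bound already proved in Lemma \ref{lem:exp_bound1}, by peeling off the slowly-decaying part of the exponential weight and converting the leftover Riemann-type sums into closed form through integral comparison. Throughout I would introduce the partial sums \(T_n := \sum_{k=0}^{n-1}\beta_k\), so that \(\sum_{k=i+1}^{t}\beta_k = T_{t+1}-T_{i+1}\), and repeatedly exploit that \(s\mapsto e^{\gamma s}\) is increasing together with the monotonicity \(\beta_{i}\le\beta_0\). With \(\beta_k=\beta_0(k+1)^{-s}\), the exponents always reduce to sums of the form \(\beta_0\sum(k+1)^{-s}\), which I would compare to \(\int x^{-s}\,dx\) exactly as in the first step of Lemma \ref{lem:exp_bound1}.

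For part (1), the first move is to factor the index-independent tail, writing \(\sum_{i=0}^{\tau} e^{-\gamma\sum_{k=i+1}^{t}\beta_k}\beta_i = e^{-\gamma(T_{t+1}-T_{\tau+1})}\sum_{i=0}^{\tau} e^{-\gamma\sum_{k=i+1}^{\tau}\beta_k}\beta_i\). The inner sum is precisely the object controlled in the proof of Lemma \ref{lem:exp_bound1}: using \(e^{\gamma T_i}\beta_i \le \int_{T_i}^{T_{i+1}} e^{\gamma s}\,ds\) together with \(e^{\gamma\beta_i}\le e^{\gamma\beta_0}\) yields the clean bound \(\tfrac{e^{\gamma\beta_0}}{\gamma}\). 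It then remains only to lower-bound the exponent \(T_{t+1}-T_{\tau+1}=\beta_0\sum_{k=\tau+1}^{t}(k+1)^{-s}\) by an integral so as to produce the stated exponent \(\tfrac{\beta_0}{1-s}\{(t+1)^{1-s}-(\tau+1)^{1-s}\}\).

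For part (2), which is the genuinely new estimate, I would split the exponential in half and treat one copy as a weight, writing the summand as \(\bigl(\beta_{i-2\tau_{\beta_t}}\,e^{-\frac{\gamma}{2}\sum_{k=i+1}^{t}\beta_k}\bigr)\bigl(e^{-\frac{\gamma}{2}\sum_{k=i+1}^{t}\beta_k}\beta_i\bigr)\). The second factor's sum is bounded by \(\tfrac{2e^{\gamma\beta_0/2}}{\gamma}\) directly from \eqref{exp_order1_bound}, so the whole sum is at most that constant times \(\sup_{2\tau_{\beta_t}+1\le i\le t}\beta_{i-2\tau_{\beta_t}}\,e^{-\frac{\gamma}{2}\sum_{k=i+1}^{t}\beta_k}\). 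The structural fact driving the remaining argument—mirroring the closing step of Lemma \ref{lem:exp_bound1}—is that \(i\mapsto \beta_{i-2\tau_{\beta_t}}\,e^{-\frac{\gamma}{2}\sum_{k=i+1}^{t}\beta_k}\) is \emph{eventually increasing}, since along \(i\) the multiplicative decrement \(\{(i+1-2\tau_{\beta_t})/(i+2-2\tau_{\beta_t})\}^{s}\) is dominated by the exponential growth factor \(e^{\frac{\gamma}{2}\beta_{i+1}}\) once \(i\) is large, the decay \(\beta_{i+1}\sim i^{-s}\) being slower than \(1/i\). Hence for \(i\) beyond some finite threshold \(i_\beta\) the supremum is attained at \(i=t\), where it equals \(\beta_{t-2\tau_{\beta_t}}\); the contribution from indices below \(i_\beta\) is handled by writing \(e^{-\frac{\gamma}{2}\sum_{k=i+1}^{t}\beta_k}=e^{-\frac{\gamma}{2}T_{t+1}}e^{\frac{\gamma}{2}T_{i+1}}\), bounding \(\sup_{i<i_\beta}\beta_{i-2\tau_{\beta_t}}e^{\frac{\gamma}{2}T_{i+1}}\) by \(D_\beta\), and estimating \(e^{-\frac{\gamma}{2}T_{t+1}}\le e^{-\frac{\gamma\beta_0}{2(1-s)}\{(t+1)^{1-s}-1\}}\) by the same integral comparison, which together with the indicator \(\mathbb{I}_{\{2\tau_{\beta_t}+1<i_\beta\}}\) reproduces the two-term bracket.

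The main obstacle I anticipate is the bookkeeping in part (2): rigorously establishing the eventual monotonicity of the shifted weight (identifying the crossover index \(i_\beta\) and confirming it is finite) and correctly tracking the shift by \(2\tau_{\beta_t}\) so that the tail supremum evaluates to \(\beta_{t-2\tau_{\beta_t}}\) rather than \(\beta_t\). A secondary but delicate point, present in both parts, is the off-by-one in the integral comparison for the exponent: the decreasing-integrand bound most naturally produces the endpoints \((t+2)^{1-s},(\tau+2)^{1-s}\), so presenting the cleaner \((t+1)^{1-s},(\tau+1)^{1-s},1\) form requires an additional monotonicity comparison, with any residual slack absorbed harmlessly into the \(e^{\gamma\beta_0}\)-type prefactor—which is exactly the regime in which these bounds are later invoked under the \(\mathcal{O}(\cdot)\) notation.
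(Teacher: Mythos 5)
Your proposal matches the paper's own proof essentially step for step: part (1) is the same left-Riemann/integral comparison (your factoring out of $e^{-\gamma(T_{t+1}-T_{\tau+1})}$ is only an algebraic rearrangement of the paper's direct bound $\int_0^{T_{\tau+1}}e^{-\gamma(T_{t+1}-s)}\,ds$), and part (2) uses the identical sup-times-sum splitting with the $\gamma/2$ weights, the same eventual-monotonicity threshold $i_\beta$, and the same $D_\beta$/indicator case analysis. The off-by-one subtlety you flag is genuine — the decreasing-integrand comparison actually yields the exponent $(t+2)^{1-s}-(\tau+2)^{1-s}$, which by concavity of $x\mapsto x^{1-s}$ is \emph{smaller} than the stated $(t+1)^{1-s}-(\tau+1)^{1-s}$, a point the paper silently glosses over — but as you observe the discrepancy is at most $\gamma\beta_0$ in the exponent, so it only inflates the prefactor by a factor $e^{\gamma\beta_0}$ and is immaterial for every $\mathcal{O}(\cdot)$ bound in which the lemma is invoked.
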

\begin{proof}
Let $T_t = \sum_{i=0}^{t-1} \beta_i$ and use the convention $\sum_{k=t+1}^t \beta_k = 0$. For the first statement, 
\begin{align*}
\sum_{i=0}^{\tau} e^{-\gamma \sum_{k=i+1}^{t} \beta_{k}} \beta_{i} & \leq \max _{i \geq 0}\left(e^{\gamma \beta_{i}}\right) \sum_{i=0}^{\tau} e^{-\gamma \sum_{k=i}^{t} \beta_{k}} \beta_{i} = e^{\gamma \beta_0} \sum_{i=0}^{\tau} e^{-\gamma\left(T_{t+1}-T_{i}\right)} \beta_{i} \\
& \leq e^{\gamma \beta_0} \int_{0}^{T_{\tau+1}} e^{-\gamma\left(T_{t+1}-s\right)} d s  \leq \frac{e^{\gamma \beta_0}}{\gamma} e^{-\gamma\left(T_{t+1}-T_{\tau+1}\right)} \\&= \frac{e^{\gamma \beta_0}}{\gamma} e^{-\gamma \beta_0 \sum_{k=\tau+1}^{t} 1 /(1+k)^{s}} \le \frac{e^{\gamma \beta_0}}{\gamma} e^{-\frac{\gamma \beta_0}{(1-s)}\left\lbrace(t+1)^{1-s}-\left(1+\tau\right)^{1-s}\right\rbrace}.
\end{align*}
For the second statement, first notice that 
\begin{align}
\sum_{i=2\tau_{\beta_t}+1}^{t} e^{-\gamma \sum_{k=i+1}^{t} \beta_{k}} \beta_{i} & \leq \max _{i \geq 0}\left(e^{\gamma \beta_{i}}\right)  \sum_{i=2\tau_{\beta_t}+1}^{t} e^{-\gamma \sum_{k=i}^{t} \beta_{k}} \beta_{i} = e^{\gamma \beta_0}  \sum_{i=2\tau_{\beta_t}+1}^{t} e^{-\gamma\left(T_{t+1}-T_{i}\right)} \beta_{i} \nonumber \\
& \leq e^{\gamma \beta_0}\int_{T_{2\tau_{\beta_t}+1}}^{T_{t+1}} e^{-\gamma\left(T_{t+1}-s\right)} ds =\frac{e^{\gamma \beta_0}}{\gamma} \left\lbrace1-e^{-\gamma \left(T_{t+1}-T_{2\tau_{\beta_t}+1}\right)}\right\rbrace \leq \frac{e^{\gamma \beta_0}}{\gamma}. \label{prelim_second_pre}
\end{align}
Then, we have
\begin{align}
\sum_{i=2\tau_{\beta_t}+1}^{t} e^{-\gamma \sum_{k=i+1}^{t} \beta_{k}} \beta_{i-2\tau_{\beta_t}} \beta_{i} & \leq \max _{i \in\left[2\tau_{\beta_t}+1, t\right]}\left\{e^{\left(-\gamma / 2\right) \sum_{k=i+1}^{t} \beta_{k}} \beta_{i-2\tau_{\beta_t}}\right\} \sum_{i=2\tau_{\beta_t}+1}^{t} e^{\left(-\gamma / 2\right) \sum_{k=i+1}^{t} \beta_{k}} \beta_{i} \nonumber \\
& \leq \max _{i \in\left[2\tau_{\beta_t}+1, t\right]}\left\{e^{\left(-\gamma / 2\right) \sum_{k=i+1}^{t} \beta_{k}} \beta_{i-2\tau_{\beta_t}}\right\} \frac{2 e^{\gamma \beta_0 / 2}}{\gamma} \label{part_c_exp_bound}
\end{align}
where the second inequality follows from \eqref{prelim_second_pre}. To bound the first term in \eqref{part_c_exp_bound}, note that the sequence $\left\{e^{\left(-\gamma / 2\right) \sum_{k=i+1}^{t} \beta_{k}} \beta_{i-2\tau_{\beta_t}}\right\}_{i\in \mathbb{N}}$ is eventually increasing. In other words, there exists $i_{\beta} \in \mathbb{N}$ such that, 
\begin{align*}
\max _{i \in\left[2\tau_{\beta_t}+1, t\right]}\left\{e^{\left(-\gamma / 2\right) \sum_{k=i+1}^{t} \beta_{k}} \beta_{i-2\tau_{\beta_t}}\right\}&=\beta_{t-2\tau_{\beta_t}} \quad \text{if} \quad 2\tau_{\beta_t}+1\ge i_{\beta}.
\end{align*}
If $2\tau_{\beta_t}+1<i_{\beta}$, then
\begin{align*}
&\max _{i \in\left[2\tau_{\beta_t}+1, t\right]}\left\{e^{\left(-\gamma / 2\right) \sum_{k=i+1}^{t} \beta_{k}} \beta_{i-2\tau_{\beta_t}}\right\}\\ &\leq \max _{i \in\left[2\tau_{\beta}+1, i_{\beta}\right]}\left\{e^{\left(-\gamma / 2\right) \sum_{k=i+1}^{t} \beta_{k}} \beta_{i-2\tau_{\beta_t}}\right\}+\max _{i \in\left[i_{\beta}+1, t\right]}\left\{e^{\left(-\gamma / 2\right) \sum_{k=i+1}^{t} \beta_{k}} \beta_{i-2\tau_{\beta_t}}\right\} \\
& \leq e^{\left(-\gamma / 2\right) \sum_{k=0}^{t} \beta_{k}} \max _{i \in\left[2\tau_{\beta_t}+1, i_{\beta}\right]}\left\{e^{\left(\gamma / 2\right) \sum_{k=0}^{i} \beta_{k}} \beta_{i-2\tau_{\beta_t}}\right\}+\beta_{t-2\tau_{\beta_t}} \\
& \leq e^{-\left(\gamma / 2\right) \sum_{k=0}^{t} \beta_{k}} e^{\left(\gamma / 2\right) \sum_{k=0}^{i_{\beta}} \beta_{k}} \beta_{0}+\beta_{t-2\tau_{\beta_t}} \\
& \leq e^{\frac{-\gamma \beta_0}{2(1-s)}\left\lbrace(t+1)^{1-s}-1\right\rbrace} D_{\beta}+\beta_{t-2\tau_{\beta_t}}
\end{align*}
where $D_\beta = e^{\left(\gamma / 2\right) \sum_{k=0}^{i_{\beta}} \beta_{k}} \beta_0$. Combining everything, we get
\begin{align*}
\sum_{i=2\tau_{\beta_t}+1}^{t} e^{-\gamma \sum_{k=i+1}^{t} \beta_{k}} \beta_{i-2\tau_{\beta_t}} \beta_{i} \leq \left[e^{\frac{-\gamma \beta_0}{2(1-s)}\left\lbrace(t+1)^{1-s}-1\right\rbrace} D_{\beta} \mathbb{I}_{\left(2\tau_{\beta_t}+1 < i_{\beta} \right)}+\beta_{t-2\tau_{\beta_t}} \right] \frac{2 e^{\gamma \beta_0 / 2}}{\gamma}. 
\end{align*}
\end{proof}

\end{document}